\newcommand{\sentence}{\bar{s}}
\newcommand{\depmem}[1] {\textbf{#1}}
\DeclareMathOperator*{\argmin}{arg\,min}
\theoremstyle{plain}
\newtheorem{theorem}{Theorem}[chapter]
\theoremstyle{definition}
\newtheorem{corollary}{Corollary}[theorem]
\theoremstyle{definition}
\newtheorem{definition}{Definition}[chapter]
\renewcommand\cite{\citep}	
\definecolor{darkblue}{rgb}{0, 0, 0.5}
\def\Snospace~{\S{}}
\definecolor{Gray}{gray}{0.85}
\definecolor{LightCyan}{rgb}{0.88,1,1}
\newcolumntype{a}{>{\columncolor{Gray}}c}
\newcolumntype{b}{>{\columncolor{white}}c}
\title {Generalized Optimal Linear Orders: \\
Theoretical Understanding of Word Order \\
Improves Empirical NLP}
\author {Rishi Bommasani}
\begin{document}

\maketitle
\makecopyright

\begin{abstract}
The sequential structure of language, and the order of words in a sentence specifically, plays a central role in human language processing. Consequently, in designing computational models of language, the \textit{de facto} approach is to present sentences to machines with the words ordered in the same order as in the original human-authored sentence. The very essence of this work is to question the implicit assumption that this is desirable and inject theoretical soundness into the consideration of word order in natural language processing. In this thesis, we begin by uniting the disparate treatments of word order in cognitive science, psycholinguistics, computational linguistics, and natural language processing under a flexible algorithmic framework. We proceed to use this heterogeneous theoretical foundation as the basis for exploring new word orders with an undercurrent of psycholinguistic optimality. In particular, we focus on notions of dependency length minimization given the difficulties in human and computational language processing in handling long-distance dependencies. 
We then discuss algorithms for finding optimal word orders efficiently in spite of the combinatorial space of possibilities. We conclude by addressing the implications of these word orders on human language and their downstream impacts when integrated in computational models.
\end{abstract}

\begin{biosketch}
Rishi Bommasani was born in Red Bank, New Jersey and raised in Marlboro, New Jersey. Rishi received his B.A. from Cornell University with degrees in Computer Science and in Mathematics. He graduated \emph{magna cum laude} with distinction in all subjects. He continued studying at Cornell University to pursue a M.S. in Computer Science and was advised by Professor Claire Cardie. During his time as an undergraduate and M.S. student, Rishi received several awards including the Computer Science Prize for Academic Excellence and Leadership and multiple Outstanding Teaching Assistant Awards. He has been fortunate to have completed two internships at Mozilla in the Emerging Technologies team under the advisement of Dr. Kelly Davis in the DeepSpeech group. In his first summer at Mozilla, his work considered genuinely abstractive summarization systems; in his second summer, his research centered on interpreting pretrained contextualized representations via reductions to static embeddings as well as social biases encoded within these representations. He has been a strong advocate for advancing undergraduate research opportunities in computer science and was the primary organizer of numerous undergraduate reading groups, the first Cornell Days Event for Computer Science, and the inaugural Cornell Computer Science Research Night as well as its subsequent iterations. Rishi has been graciously supported by a NAACL Student Volunteer Award, ACL Student Scholarship, Mozilla Travel Grant, and NeurIPS Student Travel Grant. He will begin his PhD at Stanford University in the Computer Science Department and the Natural Language Processing group in Fall 2020. His PhD studies will be funded by a NSF Graduate Research Fellowship.
\end{biosketch}
\begin{dedication}

\emph{To my adviser, Claire, for your unrelenting support and unwavering confidence. \\
You will forever be my inspiration as a researcher and computer scientist. \\
In loving memory of Marseille. } 

\end{dedication}
\begin{acknowledgements}
There are many people to whom I am grateful and without whom the thesis would
have been almost impossible to write (much less finish):\footnote{These words are also the first words of Claire's thesis.} \par 
My adviser, Claire Cardie, has shaped who I am as a researcher and computer scientist with seemingly effortless grace. There is truly no way for me to compress my gratitude for her into a few words here. In part, I must thank her for putting up with my constant flow of ideas and for having the patience to allow me to learn from my own errant ideas and mistakes. She has truly adapted herself to accommodate my research interests and it is exactly that freedom that permitted me to develop this very thesis. She occasionally (quite kindly) remarked that she ``will be lost'' when I leave but it is really me who will be lost without her. She has set an unimaginably high standard for both my PhD adviser(s) and myself to match in the future. \par
I am also thankful for Bobby Kleinberg for the many hats he has worn (one of which was as my minor adviser). While there are countless encounters and small nuances I have learned from him well beyond algorithms, I think I will always hope to match his relentless curiosity and desire to learn. And I would like to thank Marty van Schijndel as his arrival at Cornell NLP has drastically changed how I view language, psycholinguistics, computational linguistics, and NLP. There has yet to be a dull moment in any of our interactions. \par 
I am deeply fortunate to have learned from and been guided by three rising stars --- Vlad Niculae, Arzoo Katiyar, and Xanda Schofield. As three remarkable young professors, I am quite lucky to have been one of their first students. What they might not know is that their theses were also quite inspiring in writing my own; thanks for that as well. In similar spirit, Kelly Davis has been a fabulous adviser during my two summers at Mozilla and I truly appreciate his willingness to let me explore and work on problems that I proposed. Thanks to Steven Wu for being a patient and insightful collaborator as well. \par 
Cornell NLP has been my home for the past few years and I must thank Lillian Lee for the role she played many years prior to my arriving in helping build this exceptional group with Claire. Many of her papers from the late 90's and early 2000's are my exact inspiration for writing well-executed research papers; her characteristic and constant insightfulness is simply sublime. I must also especially note Yoav Artzi, who as a researcher and a friend has deeply inspired my work and my commitment to being disciplined and principled. Cristian Danescu-Niculescu-Mizil, Sasha Rush, David Mimno, and Mats Rooth have been great members at the weekly NLP seminar and have further broadened the set of diverse perspectives towards NLP that I was privy to, further enriching me as a young scholar. More recently, the \texttt{C.Psyd} group has become an exciting community for me to  properly face the complexities of language and the intriguing perspectives afforded by psycholinguistics.  

At a broader scale, Cornell CS has been truly formative in how I view the world. I hope I will do Bob Constable proud in viewing the world computationally. I am very grateful to Kavita Bala for her untiring efforts to make the department a positive community that supports learning. And I am thankful to Joe Halpern and Eva Tardos for being excellent all-encompassing role models of what it means to be a great computer scientist and great faculty member. Similarly, Anne Bracy and Daisy Fan have been especially superlative for me in exemplifying great teaching. Adrian Sampson, Lorenzo Alvisi, Eshan Chattopadhyay, and countless others have all shown me the warm and collegial spirit that radiates throughout our department. I hope to carry this forward to the next places along my journey. Too often underappreciated, Vanessa Maley, Becky Stewart, Nicole Roy, Ryan Marchenese, and Randy Hess were all great resources that made my journey as an undergrad and masters student that much easier. \par
Ravi Ramakrishna is the person who showed me how exciting research can truly be and reignited my passion for mathematics. He might not know it, but counterfactually without him and the environment he fostered in MATH 2230, it is hard to imagine me being where I am now.  \par
But that is enough with acknowledging faculty and old folks. I have been very fortunate to have a great number of research friends in Cornell NLP and across Cornell who have mentored me and been great to learn alongside:\par
\noindent Esin Durmus, Ge Gao, Forrest Davis, Tianze Shi, Maria Antoniak, Jack Hessel, Xilun Chen, Xinya Du, Kai Sun, Ryan Benmalek, Laure Thompson, Max Grusky, Alane Suhr, Ana Smith, Justine Zhang, Greg Yauney, Liye Fu, Jonathan Chang, Nori Kojima, Andrea Hummel, Jacob Collard, Matt Milano, Malcolm Bare. \par
Further, I have had many wonderful friends who have encouraged me, especially Linus Setiabrata, Janice Chan\footnote{I am also grateful to Janice for proofreading parts of this thesis.}, Avani Bhargava, Isay Katsman, Tianyi Zhang, Cosmo Viola, and Will Gao. I have also cherished my time with: \\
\noindent Andy Zhang, Eric Feng, Jill Wu, Jerry Qu, Haram Kim, Kevin Luo, Dan Glus, Sam Ringel, Maria Sam, Zach Brody, Tjaden Hess, Horace He, Kabir Kapoor, Yizhou Yu, Rachel Shim, Nancy Sun, Jacob Markin, Harry Goldstein, Chris Colen, Ayush Mittal, Cynthia Rishi, Devin Lehmacher, Brett Clancy, Daniel Nosrati, Victoria Schneller, Jimmy Briggs, Irene Yoon, Abrahm Maga\~{n}a, Danny Qiu, Katie Borg, Katie Gioioso, Swathi Iyer, Florian Hartmann, Dave Connelly, Sasha Badov, Sourabh Chakraborty, Daniel Galaragga, Qian Huang, Judy Huang, Keely Wan, Amrit Amar, Daniel Weber, Ji Hun Kim, Victor Butoi, Priya Srikumar, Caleb Koch, Shantanu Gore, Grant Storey, Jialu Li, Frank Li, Seraphina Lee. 

Throughout my time at Cornell CS, two sources of persistent inspiration were Rediet Abebe and Jehron Petty. It was truly remarkable to witness the change they drove in the department, and beyond, while I was there. 

I am grateful to all of the students who I have TA-d for for helping me grow as a teacher. Of special note are the students of CS 4740 in Fall 2019 when I co-taught the course with Claire; I appreciated their patience in tolerating my first attempts to prepare lectures for a course.\footnote{My intent is for this thesis to be understandable to any student who has completed CS 4740.} Similarly, I have been extremely privileged to have worked with and advised a number of exceptional undergraduates and masters students. I hope that I have helped them grow as researchers and better appreciate the exciting challenges involved in pursuing NLP/computational linguistics research: \\
Aga Koc, Albert Tsao, Anna Huang, Anusha Nambiar, Joseph Kihang'a, Julie Phan, Quintessa Qiao, Sabhya Chhabria, Wenyi Guo, Ye Jiang. 

As I prepare for the next step in my academic journey as a PhD student in the Stanford Computer Science Department and Stanford NLP group, I would like to thank a number of faculty, current (or recently graduated) PhD students, and members of my graduate cohort who helped me during the decision process: \\
\textit{Faculty}: Percy Liang\footnote{Percy's own masters thesis at MIT was quite influential in writing/formatting this thesis.}, Dan Klein, Tatsu Hashimoto, Dan Jurafsky, Chris Potts, Chris Manning, John Duchi, Jacob Steinhardt, Noah Smith, Yejin Choi, Jason Eisner, Ben van Durme, Tal Linzen, Graham Neubig, Emma Strubell, Zach Lipton, Danqi Chen\footnote{This section was inspired by Danqi's own dissertation.}, Karthik Narasimhan. \\
\textit{PhD students during the process}: Nelson Liu, John Hewitt, Pang Wei Koh, Urvashi Khandelwal, Aditi Raghunathan, Robin Jia, Shiori Sagawa, Kawin Ethayarajh, Eva Portelance, Sidd Karamcheti, Nick Tomlin, Eric Wallace, Cathy Chen, Sabrina Mielke, Adam Poliak, Tim Vieira, Ryan Cotterell, Ofir Press, Sofia Serrano, Victor Zhong, Julian Michael, Divyansh Kaushik. \\
\textit{2020 PhD admits}: Alisa Liu, Han Guo, Suchin Gururangan, Katherine Lee, Lisa Li, Xikun Zhang, Aditya Gupta, Victor Sanh, Mengzhou Xia, Megha Srivastava. 

A special thank you is also due to those who helped organize virtual visits in light of the unique challenges posed by the COVID-19 pandemic that spring. 

Conference travel to present my research was funded by a NAACL Student Volunteer Award, ACL Student Scholarship, Mozilla Travel Grant, and NeurIPS Student Travel Grant in addition to funding from Cornell University and Claire. \\
\newpage 
\noindent The final thank you must go to my parents, Ram and Saila Bommasani, for their patience to allow me to explore what made me happy and their enduring encouragement in allowing me to forge my own path. Few parents understand these subtleties of parenting better than you.

\end{acknowledgements}

\contentspage
\tablelistpage
\figurelistpage
\listofalgorithms

\normalspacing \setcounter{page}{1} \pagenumbering{arabic}
\pagestyle{cornell} \addtolength{\parskip}{0.5\baselineskip}
\chapter{Introduction}\label{chapter:introduction}
In this chapter, we set forth the motivations and contributions of this work.  
\section{Motivation}
Natural language plays a critical role in the arsenal of mechanisms that humans use to communicate. Inherently, natural language is a rich code with fascinating linguistic structure that humans rely upon to transfer information and achieve communicative goals \citep{informationtheory, miller1951language, chomsky-anti-statistics, chomskysyntax, hockett1960, greenberg1963, chomsky1986knowledge, pinker_bloom_1990, hawkins1994, pinker2003, pinker2005, pinker2007, jaeger2011, chomsky2014aspects, chomsky2014minimalist, gibson2019}. In spite of the fact that natural language is fundamentally a mechanism for human-human discourse, in recent years we have witnessed the emergence of potent computational models of natural language. In particular, society as a whole has come to rely on a variety of language technologies. Prominent examples include machine translation \citep{weaver49, shannonweaver63, lopez2008, koehn2010, googlenmt}, speech recognition and synthesis \citep{vocoder, speechsynthesis, asr2014, neuralasr, neuralspeechsynthesis}, information retrieval and search \citep{termfrequency, salton67, salton71, inversedocumentfrequency, salton75, saltonvectorspace, salton-ir, tf-idf, pagerank, googlesearch2005, manning2008, google2009, googlesearch+bert}, large-scale information extraction \citep{andersen1992, muc3, muc6, cardie97, califf97,  wilks1997, gaizauskas98, web-scale-ie,  choi2005, openie, distant-rel-ex, openie2, babelnet, piskorski2013}, and sentiment analysis \citep{textclassificationsentiment, classificationmutualinformation, subj, mr, large-scale+sentiment-analysis, pang2008, bautin2008international, ye2009, asur2010, liusentiment2012, chau2012, li2014, ravi2015, xing2017}. And the scope for language technologies is only projected to grow even larger in the coming years \citep{hirschberg-manning-2015}. \\

In designing computational models of language, a natural consideration is specifying the appropriate algorithmic primitives. Classical approaches to algorithm design have been considered but generally have struggled to model language faithfully; the exacting nature of deterministic algorithms like quick-sort is ill-suited to the myriad ambiguities found within natural language. Based on empirical findings, the field of natural language processing (NLP) has drifted towards machine learning and probabilistic methods \citep{charniak-1993, manning+schutze, jurafsky+martin, steedman2008, hirschberg-manning-2015, goldberg2017, eisenstein2019, mcclelland2019} despite the initial dismissal of such statistical approaches by \citet{chomsky-anti-statistics}. However, this transition alone does not reconcile that the mathematical primitives used in machine learning and deep learning \citep{machinelearningtextbook, machinelearningtextbook2, deep-learning-textbook}, i.e.~vectors, affine transformations, and nonlinearities, are inconsistent with those present in natural language, i.e.~characters, words, sentences. One of the characteristic successes of NLP in the past decade has been the development of word embeddings \citep{bengio2003, collobert2008, collobert2011, word2vec, glove, wordembeddingsthesis}: explicit methods for encoding words as vectors where the abstract semantic similarity between words is codified as concrete geometric similarity between vectors. In general, a hallmark of modern NLP is the inherent tension between linguistic representations and computational representations as, simply put, words are not numbers. \\

In this thesis, we study computational representations of a fundamental aspect of language: word order. Within natural language, sentences are a standard unit of analysis\footnote{The annual CUNY conference, now in its $34^{th}$ iteration, is entirely dedicated to the topic of sentence processing.} and every sentence is itself a sequence of words. The central question that we consider in this thesis is whether the order of words in a sentence, ascribed by the human who produced it, is the appropriate order for computational models that attempt to comprehend the sentence (in order to perform some downstream task). In order to make principled progress towards answering this question, we contextualize our work against the backdrop of considerations of word order/linear order in the literature bodies of psycholinguistics and algorithms. From a psycholinguistic standpoint, the word order already attested by natural language sentences can be argued to be indicative of an optimization to facilitate human processing. Simultaneously, from an algorithmic perspective, word orders observed in natural language may be computationally suboptimal with respect to certain combinatorial objectives, which naturally begs the question of how (computational) processing may change when presented with optimal word orders. In this sense, the unifying approach we adopt in this thesis is to interlace motivating prior work from both psycholinguistics and algorithms to specify novel word orders, which we then evaluate empirically for downstream NLP tasks. 
\section{Contributions}
\noindent \textbf{Generalized Optimal Linear Orders.} The central contribution of this work is a framework for constructing novel word orders via an optimization procedure and, thereafter, studying the impacts of these orders on downstream NLP tasks. Consequently, we begin by extending and connecting previously disconnected literature from the algorithms community with work that focuses on modelling word order in NLP. We also present three novel word orders generated via the \texttt{Transposition Monte Carlo} algorithm that we introduce. These orders rely on a simple greedy heuristic that allows for (somewhat-transparent) balancing of the original sentence's word order, therefore preserving information encoded in the original word order, and optimization against the objectives we introduce. We demonstrate how to incorporate these novel word orders, which are optimal (with respect to a combinatorial objective), with downstream NLP. In particular, we propose the \texttt{pretrain-permute-finetune} framework, which seamlessly integrates our novel orders with large-scale pretraining. We empirically evaluate the benefits of our method and show it can yield improvements for English language text classification tasks.\\ 

\noindent \textbf{Quantified (sub)optimality of natural language.} Due to the explicit computational framework we develop, we can further quantify the extent to which various natural languages are suboptimal with respect to objectives related to dependency length minimization. As we discuss subsequently, there has been significant work in the psycholinguistics community towards demonstrating that human languages are effective at dependency minimizing (compared to random word orders) and our work helps provide the dual perspective by clarifying the extent to which they are suboptimal. \\ 

\noindent \textbf{Survey of word order in language processing.} Research in human language processing, and sentence processing specifically, has a rich history of studying the influence of word order on processing capabilities in humans. While the corresponding study in natural language processing has arguably lacked similar rigor, this thesis puts forth a joint summary of how multiple communities have studied word order in language processing.

\section{Organizational Outline}
The remainder of this thesis is organized as follows. \\ 

We begin in Chapter 2 (\autoref{chapter:background}) by introducing fundamental preliminaries. These include a self-contained introduction to dependency grammars as well as a discussion of the disparate treatments of word order within NLP. We then examine some of the literature on studying word order in human languages in Chapter 3 (\autoref{chapter:wordorderinhlp}), with a specific focus on cognitive and psycholinguistic arguments centered on human language processing. We pay special attention to the line of work focused on dependency length minimization and dependency locality effects (\autoref{sec:dependencylengthminimization}). \\ 

In Chapter 4 (\autoref{chapter:algorithmic}), we shift gears by providing a generalized framework for studying notions of optimality with respect to dependency length and word order. We further provide several algorithms introduced in prior work that permit tractable (polynomial-time) optimization of various combinatorial objectives. We augment these with heuristic algorithms that allow for balance between retaining the original sentence's order and purely optimizing objectives related to dependency parses. \\

 In Chapter 5 (\autoref{chapter:optimallinearordersforNLP}), we consider how the novel word orders we have constructed influence dependency-related costs and downstream performance in NLP. We find that English already substantially optimizes for the objectives we study compared to a random word order baseline. Further, we show that there is still a substantial margin for further optimization over English and that the heuristic algorithms we introduce perform slightly worse than algorithms that are established in the literature from an optimization perspective. Intriguingly, we find that optimizing for some objectives (most notably \textsc{minimum linear arrangement}) can yield to improvements on other objectives but does not in all cases (especially for the \textsc{bandwidth} objective).  Given these observations, we then evaluate on downstream text classification tasks. We find that the standard English order is a strong baseline but can be improved over in four of the five datasets we study (by using a novel word order introduced in this work). In particular, we show that word orders generated by our heuristic approach often outperform those generated by standards algorithms, suggesting that word order design that strictly optimizes combinatorial objectives is arguably naive and may not be sufficient/desirable for modelling natural language.  \\ 

We conclude this thesis by providing a contextualized summary of the results in Chapter 6 (\autoref{chapter:conclusions}). We further provide a discussion of open problems, future directions, and broader lessons. We complement this with a transparent reporting of the inherent limitations of this work. \\

In \autoref{appendix:reproducibility}, we provide an exhaustive set of details to fully reproduce this work. This includes references to code we used to conduct all experiments and generate all tables/figures used in this work. We further provide details for accessing all datasets used in the work. In \autoref{chapter:appendix-additional-results}, we provide additional results that we did not include in the main thesis. These results help clarify the performance of models for suboptimal hyperparameter settings (and, implicitly, the stability of the results to various hyperparameter settings).  
\section{Previous Works}
The underlying foundation for this work was originally published in \citet{long-distance-dependencies-don't-have-to-be-long}, which was presented at \textit{ACL 2019} during the main conference in the \textit{Student Research Workshop}. It was further presented to a machine learning audience at \textit{NeurIPS 2019} in the \textit{Context and Compositionality in Biological and Artificial Neural Systems Workshop}. In both past works, part of the content that appears in \autoref{chapter:algorithmic} and \autoref{chapter:optimallinearordersforNLP} was introduced. The remainder of the thesis was specifically created for the purpose of this thesis. The official version of this thesis is published in the Cornell University Library.\footnote{See \url{https://ecommons.cornell.edu/handle/1813/103195}.}
\chapter{Background}\label{chapter:background}
In this chapter we introduce preliminary machinery that we will use throughout the thesis --- the dependency parse --- and the existing treatments of word order in NLP. 
\section{Primitives}\label{sec:primitives}
In this thesis, we will denote a sentence by $\sentence$ which is alternatively denoted by a sequence of words $\langle w_1 \dots w_n \rangle$. For simplicity of prose, we will assume sentences contain no duplicates though none of the algorithms or results we present make use of this assumption. Given a sentence, the task of decomposing it into its corresponding sequence of words is known as \textit{tokenization}.\footnote{In this thesis, we will make no distinction between the terms \textit{word} and \textit{token}. Similarly, we will not distinguish \textit{word types} (lexical categories) from \textit{word tokens} (individual occurrences of word types).} In practice, while tokenization technically describes breaking ``natural language text [...]
into distinct meaningful units (or tokens)'' \cite{kaplan2005tokenization}, it is often conflated with various text/string normalization processes (e.g.~lowercasing). \\

In `separating' languages, such as English, the use of whitespace can be taken as a reasonably proxy for token boundaries whereas in other languages, such as Mandarin Chinese, this is not feasible. In general, in this work we will assume access to a tokenizer for the language being studied and will not reconsider any errors introduced during tokenization. In particular, while tokenization is not strictly solved \cite{tokenizationsurvey}, high-quality tokenizers exist for a variety of languages, including some low-resource languages, in standard packages such as Stanford CoreNLP \citep{corenlp} and {\gillius Stanza} \citep{stanza}. 
\section{Dependency Grammars}\label{sec:dependencygrammars}
In this work, we consider \textit{syntactic} representations of language. Specifically, we focus our efforts on \textit{dependency grammars}, which were first formalized in the modern sense by Lucien Tesni{\`e}re \cite{dependencygrammars}.\footnote{\citet{dependencysurvey} provides a more comprehensive primer on dependency grammars and dependency parsing.} Under a dependency grammar, every sentence has a corresponding \textit{dependency parse} which encodes binary relations between words that mark syntactic dependencies. This approach for specifying a sentence-level syntactic structure differs greatly from the phrase-based/constituency grammars championed by Leonard Bloomfield and Noam Chomsky \citep{bloomfieldsyntax, chomskysyntax}. The central difference rests on how clauses are handled: phrase-structure grammars split clauses into subject noun phrases and predicate verb phrases whereas dependency grammars are verb-focused. Further, phrase-structure grammar may generate nodes that do not correspond to any single word in the sentence.\\

Formally, a dependency grammar attributes a dependency parse $\mathcal{G}_{\sentence}$ to every sentence $\sentence = \langle w_1 \dots w_n \rangle$ where $\mathcal{G}_{\sentence}$ is a directed graph with vertex set \hbox{$\mathcal{V} = \big\{w_i \mid i \in [n]\big\}$} and edge set $\mathcal{E}_{\ell}$ given by the directed binary dependency relations. Each dependency relation is labelled (hence a dependency parse is an edge-labelled directed graph) and the specific labels are based on the specific \textit{dependency formalism} used, which we describe subsequently. The direction of the edges is from the syntactic head (the source of the edge) to the syntactic child (the target of the edge); the head is of greater syntactic prominence/salience than the child. Dependency parses are constrained to be trees and, since the main verb plays a central role, are often conceived as rooted trees that are rooted at the main verb. \\
\begin{figure}
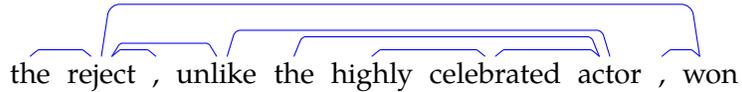

  \centering
  \small
  \begin{dependency}[hide label, edge unit distance=.5ex]
    \begin{deptext}[column sep=0.05cm]
      the\& reject\& ,\& unlike\& the\& highly \& celebrated\& actor\& ,\& won \\
    \end{deptext}                                                                                                                                                                                                                           
    \depedge[edge style={blue}, edge above]{2}{1}{.}
    \depedge[edge style={blue}, edge above]{2}{3}{.}
    \depedge[edge style={blue}, edge above]{2}{4}{.}
    \depedge[edge style={blue}, edge above]{4}{8}{.}
    \depedge[edge style={blue}, edge above]{7}{6}{.}
    \depedge[edge style={blue}, edge above]{8}{5}{.}
    \depedge[edge style={blue}, edge above]{8}{7}{.}
    \depedge[edge style={blue}, edge above]{10}{2}{.}
    \depedge[edge style={blue}, edge above]{10}{9}{.}
  \end{dependency}
    
\caption{Dependency parse of the given sentence. Dependency arcs are drawn canonically (above the linear sequence of words) and the sequence has been lowercased and dependency parsed using the \texttt{spaCy} parser \cite{spacy} for English.}
\label{fig:exampledependencyparse}
\end{figure}

In Figure~\ref{fig:exampledependencyparse}, we provide an example of a dependency parse for the given sentence.\footnote{We do not illustrate the direction or labels of any dependency relations. The reasons for doing so will be made clear in \autoref{chapter:algorithmic}.} As is shown, we will drawn dependency parses in this canonicalized format where all arcs are strictly above the linear sequence of words. If the dependency parse, when depicted this way, has no intersecting edges (i.e.~the drawing is a constructive proof that the underlying graph is planar), we call the dependency parse \textit{projective} \cite{haysprojective}. Under many theories for dependency grammars, most/all sentences in most/all languages are argued to satisfy projectivity constraints. In particular, violations of projectivity in English are very infrequent \cite{minlaprojective} and \citet{projectivityczech} estimates that in Czech, a language that is one of the most frequent to violate projectivity, non-projective sentences constitute less than $2\%$ of all sentences. We revisit the notion of projectivity, as it will prove to be useful for algorithms we subsequently study, in \autoref{subsec:projectivityconstraints}.
\subsection{Dependency Parsing}
In this work, we consider sentences that are both annotated and not annotated with a gold-standard dependency parse. When sentences are annotated, they are taken from the Universal Dependencies Treebank\footnote{{ \url{https://universaldependencies.org/}}} and were introduced by \citet{UD} with annotations following the Universal Dependencies dependency formalism. When sentences are not annotated, we parse them using off-the-shelf pretrained parsed that we describe in later sections. In particular, we strictly consider unannotated data for English. In English, there exist several high-quality pretrained parsers \cite{dozat2016, dozat2017, spacy, tianze2018} and dependency parsing is relatively mature. Comparatively, for other natural languages, and especially low-resource languages, off-the-shelf dependency parsing is less viable \cite{lowresourcedependency} and we revisit this in \autoref{sec:limitations}. 
\section{Word Order in Natural Language Processing}\label{sec:related-work}
In order to understand how word order should be modelled computationally, we begin by cataloguing the dominant approaches to word order in the NLP literature. We revisit the most pertinent methods more extensively in \autoref{chapter:optimallinearordersforNLP}. 
\subsection{Order-Agnostic Models}\label{subsec:sequentialmodels}
Given the challenges of modelling word order faithfully, several approaches in NLP to word order have entirely sacrificed modelling order to prioritize other pertinent phenomena. In some settings, where document-scale representations are desired, it has been argued that the nuances of word order within sentences is fairly marginal. Two well-studied regimes are the design of \textit{topic models} and \textit{word embeddings}. \\

\noindent \textbf{Topic Models.} Topic models are (probabilistic) generative models of text collections that posit that the texts are generated from a small set of latent \textit{topics}. This tradition of proposing generative models of text originates in information retrieval \citep{salton-ir} and has led to a series of works towards designing topic models that yield topics that well-aligned with human notions of topics. Almost all topic models represent documents by their bag-of-words representation, hence neglecting order. The most famous topic model is \textit{Latent Dirichlet Allocation} (LDA) \citep{lda}, which proposes a hierarchical Bayesian approach towards generative modelling; model parameters can be efficiently inferred via Markov Chain Monte Carlo \citep{ldamcmc}, variational inference \citep{lda, ldaonline}, Bayesian Poisson factorization \citep{ldapoisson, ldabayesianpoisson1, ldabayesianpoisson2}, and spectral methods using either method of moments \citep{ldaspectralmoments} or anchor words/separability assumptions \citep{ldaspectralanchor1, ldaspectralanchor2, ldaspectralanchor3}. Order-agnostic topic models have seen a wide array of applications in computational social science and the digital humanities; contributions have been made via textual analysis to the disciplines of political science \citep{ldapoliticalscience}, literature \citep{ldaliterature}, and history \citep{ldahistory} among several others. For a more extensive consideration of topic models, see \citet{topicmodelsurvey1, topicmodelsurvey2, ldathesis}. \\

\noindent \textbf{Word Embeddings.} Word embeddings \citep{bengio2003, collobert2008, collobert2011} are learned mappings from lexical items to vectors that encode natural language semantics in vector spaces. Most word embeddings hinge on a particular interpretation of the distributional hypothesis \citep{distributionalhypothesis1, distributionalhypothesis2}. Classical methods such as LSA \citep{lsa} factorized co-occurrence statistics whereas more recent neural methods \citep{word2vec, glove, fasttext} predict various co-occurrence statistics \citep{predict>count}. In both cases, most approaches are largely order-agnostic (or use low order n-gram statistics) and subsequent work has shown that many neural methods for word embedding can be re-interpreted as factorizations (of the pointwise mutual information) as well \citep{levyfactorize1, levyfactorize2, ethayarajhfactorize}. Similar to topic models, word embeddings have seen a wide array of applications not just within NLP (as initializations using pretrained word representations) but also beyond NLP including to study diachronic societal biases \citep{wordembeddingsdiachronic} and cultural associations \citep{wordembeddingsculture}. For a more extensive consideration of word embeddings, see \citet{wordembeddingssurvey, wordembeddingsthesis}.\\

\noindent \textbf{Bag-of-Words Classifiers.} While topic models and word embeddings learn representations from a collection of documents, bag-of-words and order-agnostic techniques have also been considered in building representations of language within a document and even within a sentence. Many of these methods are classical approaches to text classification. Initial approaches adapted standard algorithms from the machine learning community \citep{machinelearningtextbook} for linear and log-linear classification such as the Naive Bayes and maximum entropy algorithms \citep{naivebayesnlp1, maxent, naivebayesoptimal, naivebayesnlp2, naivebayesir, textclassificationsentiment}, whereas later works considered nonlinear classifiers such as SVMs \citep{textclassificationsvms, textclassificationsentiment} and feed-forward neural networks \citep{collobert2008, collobert2011}. Simultaneously, many of these works such as those of \citet{naivebayesir} and \citet{classificationmutualinformation} had their origins in information retrieval and information theory. In these works, it was standard to use order-agnostic \textit{term frequency} (TF) \citep{termfrequency} and \textit{inverse document frequency} (IDF) \citep{inversedocumentfrequency} features, commonly under the joint framing of the TF-IDF weighting schema \citep{tf-idf} as in the Rocchio classifier \citep{rocchio, rocchio1997}. Comprehensive surveys and analyses of models for text classification are provided by \citet{textclassificationsurvey1, textclassificationsurvey2, textclassificationsurvey3, textclassificationsurvey4}. \\

\noindent \textbf{Order-Agnostic Sentence Encoders.} Following the introduction of Word2Vec and neural networks in NLP in the early 2010's, the community gravitated towards deep learning approaches that no longer required explicitly feature engineering. Consequently, order-agnostic approaches within sentences became less frequent. Nonetheless, order-agnostic representation learning over word representations for sentence encoding has proven to be effective as a strong (and cheap) baseline. Learning-based order-agnostic sentence encoding often uses variants of deep averaging networks for text classification tasks \citep{dan, adversarial-dan}. However, subsequent work showed that the deep averaging was unnecessary and that simple averaging was sufficient \citep{sentenceaverage}. Additionally, some works have viewed averaging methods theoretically (often as random walks) \citep{arora2016, ethayarajh2018} and different weighting schema have emerged to better encode the fact that word-level representations do not contribute uniformly towards the meaning of a sequence \citep{sif}. 

\subsection{Sequential Models}\label{subsec:sequentialmodels}
Given that natural language has an explicit sequential structure and this structure is informative (hence our interest in word order), a large family of approaches in NLP have attempted to model the sequential nature directly. \\

\noindent \textbf{Markov Models.} Markov models are a family of statistical models which make \textit{Markovian assumptions} --- assumptions that strictly  bound the length of dependencies that can be modelled. In particular, a Markov model of Markov order $n$ cannot model a distance of length at least $n+1$ directly. Nonetheless, a recent line of theoretical results suggest that there are workarounds for modelling long-distance dependencies in such models 
\citep{sharan2017, sharan2018}. Within NLP, hidden Markov models (HMMs) have been used for a variety of sequence-tagging applications including part-of-speech tagging, named entity recognition, and information extraction \citep{jelinek1976hmm, freitag99hmm, freitag2000hmm, toutanova2002hmm, collins2002hmm}. In using HMMs in NLP, the causal factorization of the desired probabilities is generally estimated using n-gram statistics. In maximum entropy Markov models (MEMMs), a maximum entropy classifier is introduced to add expressiveness and this has been shown to be more effective in most settings \citep{lau1993memm, ratnaparkhi1994memm, ratnaparkhi1996memm, reynar1997memm, toutanova2000memm, mccallum2000memm}. Alternatively, conditional random fields (CRFs) proved to be effective in weakening the strong independence assumptions that are built into HMMs and the biases\footnote{Towards states that had few successors.} that are inherent to MEMMs \citep{lafferty2001crf, sha2003crf, pinto2003crf, roark2004crf, peng2004crf, sutton2007crf, sutton2012crf}. \\

\noindent \textbf{Parsing.} Sequence-tagging problems, which were extensively studied using Markov models, are a special case of \textit{structured prediction} problems that are prevalent in NLP. In the well-studied setting of parsing, whether it was syntactic constituency parsing, syntactic dependency parsing, or semantic parsing, several approaches were taken to jointly model the structure of the parse and the sequential structure of language \citep{kay-1967-experiments, earley-parser, charniak1983parser, pereira-warren-1983-parsing, kay1986parsing,  kay-1989-head, eisner-1996-three, collins-1996-new, collins-1997-three, charniak-etal-1998-edge, gildea2002, collins-2003-head, klein-manning-2003-accurate, klein-manning-2003-parsing, taskar2004, mcdonald-etal-2005-non, mcdonald-pereira-2006-online, chenmanning2014, dozat2016, dozat2017, shi2017-fast, shi2017-global, gomez2018-global, tianze2018, shi2020}. When compared to other settings where sequential modelling is required in NLP, parsing often invokes highly-specialized routines that center on the unique and rich structure involved. \\

\noindent \textbf{Recurrent Neural Networks.} Given the cognitive motivations for modelling language sequentially in computational methods, \citet{elmanrnn} pioneered the use of recurrent neural networks (RNNs). While these networks have a connectionist interpretation \citep{rumelhart1986, jordan1989}, they ultimately proved to be ineffective due to technical challenges such as vanishing/exploding gradients in representing long-distance relationships. Consequently, later works introduced gated networks such as the long short-term memory (LSTM) network \citep{lstm}. Analogous to the dramatic performance improvements experienced due to word embeddings such as Word2Vec, the community observed similarly benefits in the early to mid 2010's due to LSTMs. This prompted further inquiry into a variety of RNN variants \citep[e.g.][]{gru, qrnn, sru, mogrifier}. More recently, a line of theoretical works has worked towards classifying the theoretical differences between these variants \citep{cnn-rnn-wfsm, weiss2018, rational-recurrence, suzgun2019-generalization, suzgun2019-dynamic, merrill2019, neural-fst}. This has recently culminated in the work of \citet{formalrnnhierarchy} which establishes a formal taxonomy that resolves the relationship between various RNN varieties and other methods from classical automata theory such as weighted finite state machines. \\

\noindent \textbf{Attention.} The emergence of neural networks in NLP for sequence modelling naturally led to their adoption in natural language generation tasks such as machine translation \citep{kalchbrenner2013, cho2014, seq2seq} and summarization \citep{rush2015, chopra2016, nallapati2016}. In these settings, attention came to be a prominent modelling innovation to help induce alignment between the source and target sequences \citep{additiveattention, multiplicativeattention}. Since then, attention has seen application in many other settings that involve sequential modelling in NLP as it enables networks to model long-distance dependencies that would be otherwise difficult to model due to the sequential/recurrent nature of the networks. Given attention's widespread adoption, a line of work has been dedicated to adding sparsity and structure to attention \citep{martins2016, structured-attention-network, niculae2017, mensch2018, malaviya2018, peters2018, niculae2018learning, correia2019, peters2019} whereas a separate line of work has studied its potential utility as an interpretability tool for explaining model behavior \citep{attention-not-explanation, attention-interpretation, supervised-attention-explanation, attention-not-not-explanation, deceiving-attention}. \\

\subsection{Position-Aware Models}\label{subsec:position-aware-models}
Sequential models directly model the sequential nature of language. In recent years, there has been an emergence and considerable shift towards using position-aware models/set encoders. In particular, these models implicitly choose to represent a sequence $\langle w_1 \dots w_n \rangle$ as the set $\{(w_i, i) \mid i \in [n]\}$\footnote{The correspondence between arbitrary sequences and sets of this structure is bijective} as was described in \citet{sequencessets}. In this sense, the encoder is aware of the position but does not explicitly model order (e.g.~there is no explicit notion of adjacency or contiguous spans in this encoding process). Early works in relation extraction also considered position-aware representations \citep{position-aware-relex}.\footnote{To the author's knowledge, this observation and citing of \citet{sequencessets} and \citet{position-aware-relex} has been entirely neglected in all past works in the NLP community \citep[c.f.][]{transformers, transformer-xl}.}\\

\noindent \textbf{Transformers.} \citet{transformers} introduced the Transformer architecture, which has become the dominant position-aware architecture in modern NLP. In particular, all sequences are split into $512$ subword units and subwords are assigned lexical embeddings and position embeddings, which are then summed to yield non-contextual subword representations. These $512$ subword vectors are then iteratively passed through a series of Transformer layers, which decompose into a self-attentive layer\footnote{Self-attention is attention in the sense of \citet{additiveattention} where the given sequence is used in both roles in the attention computation.} and a feed-forward layer. Since these operations are fully parallelizable, as they have no sequential dependence, large-scale training of Transformers on GPU/TPU computing resources has propelled performance forward on a number of tasks. Similarly, since these models can compute on more data per unit time than sequential models like LSTMS\footnote{Given the constraints of current hardware.}, they have led to a series of massive pretrained models that include: GPT \citep{gpt}, BERT \citep{bert}, GPT-2 \citep{gpt2}, XLNet \citep{xlnet}, RoBERTa \citep{roberta}. SpanBERT \citep{spanbert}, ELECTRA \citep{electra}, ALBERT \citep{albert} and T5 \citep{t5}. \\

\noindent \textbf{Position Representations.} Given that what differentiates position-aware models from order-agnostic models is their position representations, surprisingly little work has considered these representations \citep{positionembeddings}. In the original Transformer paper, position embeddings were frozen using cosine waves to initialize them. Recent work has put forth alternative approaches for encoding position \citep{discretecosine}. In particular, \citet{complexposition} demonstrate that using complex-valued vectors, where the amplitude corresponds to the lexical identity and the periodicity corresponds to the variation in position, can be a principled theoretical approach for better modelling word order in Transformer models. Separately, \citet{relative-position} and \citet{transformer-xl} argue for encoding position in a relative fashion to accommodate modelling longer sequences (as the standard Transformer is constrained to $512$ positions).   

\subsection{Alternative Word Orders}\label{subsec:alternative}
Given that natural language processing tasks often requiring understanding an input text, it is unsurprising that most works which model the input in an order-dependent way (generally implicitly) choose to specify the word order to be the same as the order already given in the input. A frequent exception is bidirectional models, which have seen applications in a number of settings. Beyond this, other approaches have considered alternative word orders as a mechanism for studying alignment between different sequences. Much of this literature has centered on machine translation. \\

\noindent \textbf{Bidirectional Models.} 
One natural choice for an alternative order is to use the reverse of the order given. For a language such as English which is read from left-to-right, this would mean the order given by reading the input sequence from right-to-left. While some works have compared between left-to-right models and right-to-left models \citep{seq2seq}, in most downstream settings, bidirectional models are preferred. A bidirectional model is simply one that integrates both the left-to-right and right-to-left models; the bidirectional RNN is a classic model of this type \citep{bidirnn}. Shallowly bidirectional models do this by independently modelling the input from left-to-right and right-to-left and subsequently combining (generally by concatenation or vector addition) the resulting output representations. Such approaches have seen widespread application in NLP;  the ELMo  pretrained model is trained in a shallowly bidirectional fashion \citep{elmo}. In comparison, with the emergence of Transformers, it is possible to process part of the input (e.g.~a single token) while conditioning on the entirety of the remainder of the input at once. Such models are often referred to as deeply bidirectional; BERT \citep{bert} is a model pretrained in this way by making use of a denoising objective in masked language modelling\footnote{Masked language modelling is a cloze task where the objective is to predict the masked word in the input sequence conditional on the remainder of the sequence, which is unmasked.} as opposed to the standard causal language modelling used in ELMo. \\

\noindent \textbf{Permutation Models.} From a language modelling perspective, a unidirectional left-to-right (causal) language model factorizes the sequence probability $p(\langle w_1 \dots w_n \rangle)$ as 
\begin{equation}
    p\left(\langle w_1 \dots w_n \rangle\right) = \prod_{i=1}^np\left(w_i \mid \langle w_1 \dots w_{i-1} \rangle\right). 
\end{equation}
In comparison, a unidirectional right-to-left language model factorizes the sequence probability as
\begin{equation}
    p\left(\langle w_1 \dots w_n \rangle\right) = \prod_{i=1}^np\left(w_i \mid \langle w_{i+1} \dots w_{n} \rangle\right). 
\end{equation}
In the recent work of \citet{xlnet}, the authors introduce a strikingly new approach which generalizes this perspective. In particular, any given ordering of the sequence $\langle w_1 \dots w_n \rangle$ corresponds to a unique factorization of this sequence probability. In their model, XLNet, the authors sample factorizations uniformly (hence considering the behavior in expectation across all $n!$ possible factorizations) and, alongside other modelling innovations, demonstrate that this can be effective in language modelling. As we will demonstrate, our approach could be seen adopting the perspective of trying to identify a single optimal order than sampling from all possible orders with equal likelihood. \\

\noindent \textbf{Order Alignment.} For tasks that involve multiple sequences, order plays an additional role of facilitating (or inhibiting) alignment between the difference sequences. In machine translation, the notion of alignment between the source and target languages is particularly important.\footnote{In fact, attention \citep{additiveattention, multiplicativeattention} emerged in machine translation precisely for the purpose of better aligning the fixed input sequence and the autoregressively generated output sequence.} As a consequence, two sets of approaches towards ensuring improved alignment (explicitly) are \textit{preorders} (changing the order of the source language input to resemble the target language) and \textit{postorders} (changing the order of a monotone output translation to resemble the target language). 
\begin{itemize}
    \item \textit{Preorders} --- Preorders have been well-studied in several machine translation settings. In particular, preorders have been designed using handcrafted rules \citep{brown1992, collins2005preorder, wang2007preorder, xu2009preorder, chang2009preorder}, using learned reorderings/rewritings based on syntactic patterns \citep{xia2004preorder, li2007preorder, genzel2010preorder, dyer2010preorder, katz2011preorder, lerner2013preorder}, or based on learning-based methods that induce hierarchical features instead of exploiting overt syntactic cues \citep{tromble2009preorder, denero2011preorder, visweswariah2011preorder, neubig2012preorder}. Much of the early work in this setting worked towards integrating the up-and-coming\footnote{At the time.} (phrase-based) statistical machine translation with the longstanding tradition of using syntax in machine translation. With the emergence of purely neural machine translation, recent work has studied how to integrate preorders in an end-to-end fashion using neural methods as well \citep{hoshino2014preorder, de-gispert2015preorder, botha2017preorder, kawara2018preorder}. Especially relevant to the current thesis is the work of \citet {daiber2016preorder}, which  studies the relationship between preorders (and their effectiveness) and the flexibility in word orders in different languages. 
    \item \textit{Postorders} --- Given that there are two sequences involved in machine translation, it is natural to consider postorders as the complement to preorders. However, there is a fundamental asymmetry in that preorders involve changing the input (which can be arbitrarily interacted with) whereas postorders involve changing the output after it has been generated. Therefore postorders require more complex inference procedures and (ideally) require joint training procedures. Given this, postorders have been comparatively under-studied and little evidence has been provided to indicate that there are significant advantages to compensate for these substantial complications when compared to preorders. The one caveat is when developing preorders would be challenging. For example, while generate a preorder for English to Japanese may be viable, generating a preorder for Japanese to English is far more complicated (due to the syntactic patterning of both languages). Therefore, one may use a preorder to improve English to Japanese translation but would struggle to do the same for improving Japanese to English translation. Given these difficulties, a postorder may be attractive in the Japanese to English setting as it is reminiscent of the English to Japanese preorder (and can leverage insights/learned features/parameters from generating an English to Japanese preorder). \citet{sudoh2011postorder} introduced postorders for precisely this reason and \citet{goto2012postorder} extended their method with superior postordering techniques. Further, \citet{mehta2015postorder} introduced an oracle algorithm for generating orders for ten Indian languages but their work received little traction thereafter due to empirical shortcomings. 
\end{itemize}
While reordering to induce alignment has received the most interest in the machine translation, the phenomena is arguably more general. In the extreme, it may be appropriate in \textit{every} task where there are multiple sequences of any type. In particular, \citet{wang-eisner-2018} propose the inspired approach of constructing synthetic languages from high-resource languages (where parsing data is available) whose word order mimics a low-resource language of interest (where parsing data is unavailable/limited) to facilitate cross-lingual transfer in dependency parsing. \citet{rasooli-2019} also consider a similarly reordering method on the source side to improve cross-lingual transfer in dependency parsing.  In particular, it is likely that a similar approach may be of more general value in designing cross-lingual and multi-lingual methods, especially when in the low-resource regime for the language of interest. Very recently, \citet{goyal2020} propose to adapt ideas from work on preorders in machine translation to generate paraphrase. In particular, they repurpose the notion of preorders to construct controllable and flexible preorders based on learned syntactic variations. While most other subareas of NLP have yet to consider word order in dynamic ways, the findings of \citet{galactic-treebank} may prove to be a valuable resource for such study. In this work, the authors introduce the Galactic Treebank, which is a collection of hundreds of synthetic languages that are constructed as hybrids or mutations of real/attested human languages (by intertwining the word order/syntactic patterns of the real natural languages to produce mutants).

\chapter{Word Order in Human Language Processing}\label{chapter:wordorderinhlp}
In this chapter, we examine how word order manifests across languages and within certain contexts. We go on to discuss the relationship between word order and sequential processing, honing in on a memory-based theory known as dependency length minimization.
\section{Ordering Behaviors} \label{sec:orderingbehaviors}
The interpretation of word order is highly language-specific. In particular, the premise that ordering information is meaningful to begin with is itself language-dependent. Languages with \textit{fixed} or \textit{rigid} word orders tend to reliably order constituents in a certain way to convey grammaticality. English is an example of such a language. On the other hand, other languages, such as Russian and Nunggubuyu, may have more flexible word orders and are therefore said to have \textit{free} or \textit{flexible} word orders. Within these languages, some, like Russian, may exhibit multiple word ordering structures but prefer one in most settings; this is known as the \textit{dominant} word order. For other languages, there is no dominant word order, as is the case for Nunggubuyu \citep{nunggubuyu}. In languages with flexible word orders, morphological markings (such as inflection) are frequently employed to convey  information to listeners/comprehenders. In particular, \citet{comrie1981} and \citet{haspelmath1999} have argued that it is precisely these morphological markings that allow flexible word order languages to "compensate" for the information that is not encoder in word order.\footnote{These claims have been disputed by \citet{muller2002free}, but the concern here is the causal relationship between flexible word order and morphological markers. In particular, prior works contest that morphological case is prerequisite to free word order whereas \cite{muller2002free} finds evidence to the contrary. We take no position on this and simply note that morphological markings and flexible word orders often co-occur.} In discussing word order, it is natural to narrow the scope to certain aspects that are of linguistic interest. \\ 

\noindent \textbf{Basic Word Orders.} The ordering of constituents is a standard method for categorizing languages \citep{greenberg1963}. At the coarsest granularity, languages can exhibit different canonical orderings of the \textit{subject} (S), \textit{main verb} (V), and \textit{object} (O) within sentences that feature all three.\footnote{From a linguistic perspective, the terms subject and object are ill-specified. In accordance with standard practice, we will think of the subject as the noun or noun phrase that generally exhibits agent-like properties and the object as the noun or noun phrase that generally exhibits patient-like properties.} In fact, it is standard to refer to this as the language's \textit{basic word order}.  In \autoref{tab:svo}, we depict languages that attest each of the six possible arrangements of S, V, and O as well as typological statistics regarding their relative frequencies. In general, we observe that subject-initial languages constitute an overwhelming fraction of the world's languages and that OSV is the minority ordering by a considerable margin. While such analyses are incomplete \citep{dryer2013wals}\footnote{As many languages exhibit different basic word orders across sentences whereas others. In particular, in a language like German, both SOV and SVO orderings are quite common across sentences. Alternatively, in languages such as Latin and Wampiri, constituents may not be contiguous spans, which may complicate the notion of ordering constituents.}, they offer an immediate illustration that word ordering properties can be of interest typologically \citep{dryer1997, dryer2013typology}. Next, we consider whether these order properties can be used to deduce underlying properties of language as a whole and whether we can formulate theories to explain why these orders arise. \\

\begin{table}[h]
\centering
\begin{tabular}{lccc}
Ordering  & \% Languages & Example Language & Reference  \\ 
\toprule
SOV & 40.99 & Japanese & \citep{japanese} \\

SVO & 35.47 &  Mandarin & \citep{mandarin} \\

VSO & 6.90 & Irish & \citep{irish} \\

VOS & 1.82 & Nias & \citep{nias} \\

OVS & 0.80 & Hixkaryana & \citep{hixkaryana} \\

OSV & 0.29 & Nad\"{e}b & \citep{nadeb} \\

\bottomrule
\end{tabular}
\caption{Basic word orders across the world's languages. Statistics regarding the fraction of the world's languages that primarily use a certain ordering come from \citet{dryer2013wals}. 1376 natural languages were the total number of languages in considering these statistics. References refer to entire works dedicated to studying the corresponding language which rigorously demonstrate the language's dominant word order. The unexplained probability mass corresponds to languages without a dominant word order (e.g.~German) or with discontiguous constituents (e.g~Wampiri).}
\label{tab:svo}
\end{table}

\section{Language Universals}\label{sec:languageuniversals}
Given the set of word ordering effects we have seen so far, it is natural to ask whether certain patterns emerge across language languages. More strongly, one can question whether there are certain \textit{universal} properties which exist (and such hypotheses can be readily tested with experimental and statistical techniques at present). \citet{greenberg1963} initiated this study, with early work towards studying the basic word orders we have seen previously. Greenberg argued that there are three determining factors that specific a \textit{basic typology} over languages:
\begin{enumerate}
    \item A language's basic word order
    \item The prevalence of \textit{prepositions} or \textit{postpositions}. In languages such as Turkish, arguments of a constituent systematically appear before it. In particular, adjectives precede nouns, objects precede words, adverbs precede adjectives, and so forth. For this reason, such a language is labelled \textbf{pre}positional. In contrast, in languages such as Thai, the argument of a constituent systematically appears after it. For this reason, such a language is labelled \textbf{post}positional. Since many languages, such as English display both prepositional behavior (e.g.~adjectives before nouns) and postpositional behavior (e.g.~objects after verbs), Greenberg determined the more prevalent of the two to assign this binary feature to languages.\footnote{Greenberg did not consider circumpositional languages, such as Pashto and Kurdish, where aspects of the argument appear on either side of the constituent. Circumposition is generally observed more frequently at the morphological rather than syntactic level.}
    \item The relative position of adjectives with respect to the nouns they modify. Again, in English, the adjective precedes the noun whereas in Thai, the adjective follows the noun. 
\end{enumerate}
Given these features, there are $24 = 6 \times 2 \times 2$ possible feature triples that a language could display. As the wording of items 2 and 3 suggests, these can be viewed as instances of a broader class of local ordering preferences, we return to this point later in this section.  Greenberg excluded all basic word orders that had objects preceding subjects since he argued that these were never dominant word orders in a language.\footnote{While this claim is false in general, it can be argued to be true for the languages Greenberg studied.} Greenberg then studied 30 natural languages and categorized them into each of these 12 groups. While the statistical validity of his work has been questioned \citep{dryer1988, hawkins1990, dryer1998}, subsequent works (especially in recent times when data is more readily accessible and large-scale corpus analyses can be conducted computationally) have clarified the validity of his theories \citep[e.g.][]{dryer1992, dryer2013wals, hahn2020}. More generally, the enterprise Greenberg initiated of unearthing \textit{language universals} based on consistent patterns across a set of sampled languages has spawn important lines of work in cognitive science and linguistics. \\

\noindent \textbf{Harmonic Word Orders.} Of the language universals that Greenberg put forth, perhaps the most notable have been the harmonic word orders. The term \textit{harmonic} refers to the fact that in some languages, the modifiers of a certain syntactic class (e.g.~nouns) consistently either precede or succeed the class. For example, many languages have both numerals and adjectives precede the noun or both succeed the noun as compared to language where one precedes and the other follows; the latter collection of languages are referred to as \textit{disharmonic}. While there has been significant inquiry towards enumerating languages and the types of (dis)harmonies observed \citep[see][]{hawkins1983}, our interest in harmonic word orders is the cognitive approach towards understanding how they may influence learning. In this sense, harmonic word orders have emerged as a direct line of attack for cognitive inquiry towards connecting word ordering effects, language learning and acquisition, and broader theories of human cognition and language processing. \\

In general, consideration of word order harmonies can be attributed to the reliable and overwhelming statistical evidence. Given this evidence, it is natural to further consider whether a broader cognitive explanation that extends beyond linguistics may be the source for the observed phenomena. One especially relevant line of work has argued that a bias towards word order harmonies can be indicative of general cognitive and/or processing constraints for humans \citep{culbertson2016}. In this sense, word order harmonies contribute to simpler grammars and a proclivity for shorter dependencies that is seen across other domains for human cognition. \citet{culbertson2012} strengthen this position by demonstrating that adult language learners learning artificial/synthetic languages demonstrate strong tendencies towards word order harmonies. \citet{culbertson2015} further extend these results by showing similar behaviors for child language learners while clarifying the distinction with respect to adult language learners regarding the strength and nature of the bias towards harmonic word orders. More recently, \citet{culbertson2017} provide fairly resolute confirmation of this theory and separation of adult and child language learning with regards to harmonic word orders. When both children and adults are tasks with learning languages that are regularly disharmonic, children fail to learn the language correctly and instead innovate/fabricate novel word orders which are harmonic (where the correct harmonic is disharmonic). In contrast, adults are able to replicate the nonharmonic patterns correctly. \\

In our work, while we do not directly appeal to cognitive results for language learning (especially for children), we take this to be motivation that insightful choice of word orders (perhaps in a way that aligns with a learner's inductive bias) can facilitate language acquisition. Conversely, suboptimal choices may complicate language learning substantially and can cause humans (and potentially machines) to resort to alternative strategies that better reconcile the nature of the input with the underlying latent biases. 

\section{Sequential and Incremental Processing} \label{sec:sequentialprocessing}
In the previous section, we catalogued a series of word ordering effects in natural language. Subsequent work has tried to directly explain the word ordering effects and the potential underlying language universals \citep[e.g.][]{hawkins1988}  In many of these cases, the corresponding works in linguistics, psycholinguistics, or cognitive science that studied these phenomena either offered theoretical explanations or empirical evidence. However, a loftier goal for psycholinguistics in particular is to create a broader theory for sequential language processing. In particular, such a theory might explain the word ordering behaviors we have described previously as special cases. \\

 Language is processed incrementally \citep{sag2003}. Consequently, any theory that explains general sequential language processing must grapple with this property of how humans process language. In the study of incremental language processing, the \textit{integration function} is defined to be the function describing the processing difficulty in processing a given word $w_i$ given the preceding context $\langle w_1 \dots w_{i-1} \rangle$ \citep[\citet{ford1982}, c.f.][]{tanenhaus1995, gibson98survey, jurafsky2003}.\footnote{In some works \citep[e.g.][]{venhuizen2019}, additional context beyond the preceding linguistic context, such as the social context or world knowledge, is further modelled. We deliberately neglect any description of such work in our review of past work as we restrict ourselves to language understanding and language modelling that is fully communicated via the preceding linguistic context throughout this thesis.} In both theoretical and empirical inquiry towards understand human incremental language processing, most works make use of some mechanism that allows for controlled variation (e.g.~minimal pair constructions) in the input and analyze the incremental processing difficulty of a human(s) comprehending the input. In empirical work, this analysis is often executed by considering differential effects using a measurement mechanism for human processing (e.g~reading times, reading from the scalp, pupil dilation). \\

The consequence of this work is a canonicalized pair of theories: expectation-based incremental language processing and memory-based incremental language processing. The central tenet of the former is that most processing is done preemptively, since many words can be predicted by their context\footnote{It is this principle that motivates causal language modelling.} and any further difficulty can be attributed to how surprising $w_i$ is given the preceding context. In contrast, the latter theory posits that the integration cost of the current word $w_i$ is proportional to the challenges of integrating it with units that must have been retained in memory. Given the longstanding tradition of studying incremental language processing, joint theories that seek to reconcile the approaches have also been proposed. In particular, given there is strong evidence for both theories (and both often have been showed to be reasonably complementary in their explanatory power), joint theories seek to merge the two, as there is undisputed proof of both predictive processing and memory-driven forgetting in human language processing. 
\subsection{Expectation-based Theories} \label{subsec:expectation-based}
In positing a theory of incremental processing that hinges on prediction, it is necessary to operationalize what is predicted and how processing difficulty emerges from failures in prediction. For this reason, expectation-based theories have largely come to be dominated by surprisal-based theories \citep{hale2001, levy2008}, as the predictions given rise to the processing difficulty inherently. In particular, surprisal is an information-theoretic measure that measures how \textit{surprised} or unlikely a word $w_i$ is given the preceding context $\langle w_1 \dots w_{i-1} \rangle$ as 
\begin{equation}
    \texttt{surp}\left(w_i \mid \langle w_1 \dots w_{i-1} \rangle \right) \triangleq - \log \left(p \left(w_i \mid \langle w_1 \dots w_{i-1} \rangle \right) \right). 
\end{equation}
We will use $\texttt{surp}_\theta$ as notation to denote when the probability distribution $p$ is estimated using a model parameterized by weights $\theta$. From a modelling perspective, many methods have been used to estimate surprisal. In particular, probabilistic context-free grammars \citep{hale2001, levy2008}, classical n-gram language models \citep{smith2013}, recurrent neural network language models \citep{van-schijndel-linzen-2018-neural} and syntactically-enriched recurrent neural networks grammars \citep{rnng, hale2018} have all been used as language models, i.e~choices of $\theta$, to estimate this probability. Crucially, surprisal has been shown to be a reliable predictor of human reading times (robust to six orders of magnitude) by \citet{smith2013}. \\

Surprisal has emerged to be a workhorse of several lines of psycholinguistic inquiry since it provides a natural and strong linking hypothesis between density estimation and human behavior as well as due to its information-theoretic interpretation. In particular, surprisal can be attributed as exactly specifying the change to a representation that is caused by the given word $w_i$ where the representation encodes $\langle w_1 \dots w_{i-1} \rangle$, i.e.~the sequence seen so far. In this sense, surprisal codifies the optimal Bayesian behavior and has come to be part of a broader theory of cognition centered on prediction and predictive coding \citep{friston2009, clark2013}. Further, since it specifies the purely optimal behavior, surprisal retains both the advantages and disadvantages associated with being \textit{representation-agnostic}. We will revisit these in considering motivations for joint theories.  \\

Given these findings, among many others, surprisal theory has strong explanatory power in describing human incremental language processing. As it pertains to this thesis, surprisal has also been recently\footnote{Prior works \citep[e.g][]{cancho2003, cancho2006} also considered information theoretic approaches to language to explain word orders but were considerably less effective than the recent line of work. Further, these works used information theoretical tools but did not necessarily appeal to the expectation-based theories which we consider here.} considered for the purposes of explaining word ordering behaviors. In particular, \citet{hahn2018} demonstrate that surprisal and other information theoretic measures, such as point-wise mutual information, can be used to explain adjective ordering preferences in the sense of Greenberg \citep{greenberg1963}. In particular, they are able to predict adjective orders reliably ($96.2\%$ accuracy) using their cognitive model that is grounded in mutual information and makes use of memory constraints. \citet{futrell-2019-information} also provides similar evidence for word ordering behaviors being explained effectively via the use of information theory. Very recently, \citet{hahn2020} strengthened this position by showcasing that surprisal-based methods can be used to demonstrate that Greenberg's language universals emerge out of efficient optimization within language to facilitate communication.
\subsection{Memory-based Theories} \label{subsec:memory-based}
Under memory-based theories of incremental processing, the processing difficulty of associated with the current word $w_i$ is proportional to the difficulty in/error associated with retrieving units from the context $\langle w_1 \dots w_{i-1} \rangle$. In particular, consider the following four examples \citep[reproduced from][]{lossy-context}:

\ex.\label{ex:localityeffects}
    \a. Bob \depmem{threw out} the trash.
    \b. Bob \depmem{threw} the trash \textbf{out}.
    \c. Bob \depmem{threw out} the old trash that had been sitting in the kitchen for several days.
    \d. Bob \depmem{threw} the old trash that had been sitting in the kitchen for several days \textbf{out}.

Observe that in the first pair of sentences, the sentences are perfectly lexically-matched and both convey identical semantic interpretations. For humans, these sentences have similar processing complexity. However, in the latter pair of sentences, while they are again perfectly lexically-matched and again convey identical semantic interpretations, they have starkly different processing complexities. Humans systematically find sentence (1d) to be more challenging to process than sentence (1c), as has been observed by \citet{lohse2004}. Under memory-based theories, many of which stem from the dependency locality theory of \citet{gibson98locality, gibson2000locality}, this difficulty arises due to the increased length of the dependency between \textbf{threw} and \textbf{out}. In other words, in (1d), a human must retrieve the information regarding \textbf{threw} when processing \textbf{out} and the error in this retrieval or its general difficulty increases as a function of the dependency's length. In particular, to interpret any of these four sentences, it is necessary to process the syntactic dependency linking \textbf{threw} and \textbf{out}; it is insufficient to only process only one lexical item or the other to obtain the correct semantic interpretation \citep{jackendoff2002}. \\

Several hypotheses have been proposed to explain what underlying mechanisms explain the observed increase in dependency as a function of length. Some posit that there is an inherent decay in the quality of the representation in memory over time (consistent with other types of memory representations throughout human cognition) whereas others argue that the degradation is tightly connected with the nature of the intervening material and how it interferes with flawless retention of the context. Regardless, there are numerous effects in linguistics where processing difficulty has been showed to increase with increasing dependency length \citep[e.g.~multiple center-embeddings, prepositional phrase attachment; c.f.][]{lossy-context}. Akin to surprisal theories, there is also evidence that dependency locality and memory-based theories are predictive of human behaviors \citep{grodner2005, bartek2011}. However, some subsequent works have questioned whether dependency locality effects are strong predictors of human behavior beyond the laboratory setting; \citet{demberg2008eye} find no such effects when evaluating using naturalistic reading time data.

\subsection{Joint Theories} \label{subsec:joint-theories}
Given the representation-agnostic nature of expectation-based and surprisal theories of incremental processing and the representation-dependent nature of memory-based theories, joint theories must commit to being either representation-agnostic or representation-dependent, thereby adopting one theory as a basis. Then, these approaches engineer mechanisms by which to integrate the other theory. In general, the motivation for studies towards building joint theories is to capitalize on the observation that expectation-based and memory-based theories of incremental processing have been generally shown to explain complementary phenomena. \\

The Psycholinguistically-Motivated Lexicalized Tree Adjoining Grammar of \citet{demberg2008}, which was further extended in \citet{demberg2009}, \citet{demberg2010}, and \citet{demberg2013}, was one of the first joint approaches. In particular, a parser (under the tree adjoining grammar formalism) is equipped with \textsc{predict} and \textsc{verify} operations. The \textsc{predict} operation is akin to expectation-based predictions of processing difficulty. Dually, the \textsc{verify} operation is is memory-driven as it requires validating that the previously predicted structures are indeed correct (and the cost of this verification scales in the length of the dependencies/varies inversely in the strength of the dependency locality effects). This approach more broadly adopts the perspective of endowing a representation-dependent framework (here specified using the tree adjoining grammar) with predict operations and further constraints. \\

Conversely, \citet{lossy-context} have recently introduced the lossy-context surprisal model which extends the author's previous work on noisy-context surprisal \citep{noisy-context1, noisy-context2, noisy-context3}. In this work, the authors adopt a representation-agnostic perspective grounded in surprisal theory. Based on the observation that pure surprisal theory, which uses information theoretic primitives, cannot account for forgetting effects, the authors suggest making the representation of the context \textit{lossy}. What the authors is a more general concern with information theory, in that information theory in the style of \citet{informationtheory} does not account for models of bounded or imperfect computation. Consequently, if any information can be recovered from the (possibly arbitrarily long) preceding context, information theory will account for this information. Recovering this information without error is likely not viable for humans.\footnote{This fact also likely holds for machines and computational models, which have bounded memory and constrained reasoning capabilities.} \\

\noindent \textbf{Information Locality.} Given the constraints of humans (as have been implicitly shown in the literature on dependency locality), \citet{lossy-context} argue for a theory of information locality, which was first introduced by \citet{futrell-2019-information}. Under such a theory, a memory representation $m_t$ is build at every timestep $t$ and this representation likely imperfectly encodes $\langle w_1 \dots w_t \rangle$.\footnote{If it perfectly encodes the context, pure surprisal theory is recovered.} Consequently, specifying the memory representation (and its forgetting effects) appropriately, via a noise model or other lossy information-encoding mechanism, provides the grounds for addressing the forgetting effects that surprisal theory is ill-equipped to handle. In particular, the authors suggest that operationalizing this by using RNNs with bounded context, as in \citet{alemi2017, hahn2019}, may be an effective approach. We remark that a separate line of inquiry, that directly studies information theory under computational constraints, may be more elegant and sensible. In particular, the theory of $\mathcal{V}$-information put forth by \citet{information-theory-computational-constraints} may prove to be a strong formalism for encoding the information theoretic primitives that ground surprisal as well as the bounded computational resources that induce memory/forgetting effects.   
\section{Dependency Length Minimization} \label{sec:dependencylengthminimization}
Both expectation-based theories such as surprisal theory and memory-based theories such as dependency locality theory have been reasonably effective is explaining human behavior in online language processing. Arguably, the evidence for expectation-based theories is stronger and it is this that prompts the recent development of joint theories that are primarily based on predictive processing \citep{lossy-context}. However, dependency locality theories also have a longstanding tradition of enjoying explanatory power with respect to word ordering effects. In particular, dependency locality theory naturally predicts that humans will produce sentences that employ word orders that minimize dependency length \textit{ceteris paribus}. While similar statement can be made regarding expectation-based theories --- humans use word orders that maximize the predictability of subsequent words --- there is comparatively less evidence.\footnote{However, it should be noted that recent works such as \citet{futrell-2019-information} and \citet{lossy-context} argue for this in instantiating a theory of information locality. In particular, \citet{lossy-context} argue that the word ordering effects suggested by dependency length minimization are merely estimates or approximations of what is truly predicted under information locality by neglecting effects beyond those marked by syntactic dependencies.} \\

A very early predecessor of dependency length minimization is attributed to \citet{behaghel1932}, who stated that "what belongs together mentally is placed close together".\footnote{This sentence is translated from German, as reproduced by \citet{temperley2018}.} Similarly, \citet{greenberg1963} also provided early accounts of dependency length minimization.
More nuance and statistically valid evidence of dependency length minimization has been discovered for many natural languages. \citet{yamashita2001} demonstrated statistically meaningful effects via corpus analysis for Japanese. More recently, \citet{futrell2015} extended these results by showing strong dependency length minimization (well beyond what would be predicted by random word orders), with $p < 0.0001$ for 35 of the 37 languages considered and $p < 0.01$ for the other languages (Telugu and Latin), by making use of the Universal Dependencies Treebank \citep{UD}. Additional evidence has been introduced which suggests that the grammars of natural languages are designed such that word orders which necessitate long-distance dependencies are dispreferred \citep{rijkhoff1990, hawkins1990}. More broadly, dependency length minimization, and therefore the word order preferences it predicts, is a core aspect of a broader argument presented by \citet{hawkins1994, jaeger2011, gibson2019} that natural language emerges as an efficient symbolic system for facilitating human communication from the perspective of both a speaker (language production) and a listener (language comprehension). An excellent multi-faceted survey of the literature on dependency length minimization is provided by \citet{temperley2018}. \\

Given the ubiquitous and diverse grounds for justifying dependency length minimization, computational research has considered the question of devising provably minimal artificial languages to push dependency length minimization to its extreme. While the associated optimization problem of minimizing the cumulative/average dependency length has previously been studied in the algorithms and theory computer science community, with sufficiently general results \citep{minlan3, minlan1.58}, \citet{minlaprojective} introduce an algorithm for finding the word order that provably minimizes dependency subject to projectivity constraints. We discuss this algorithm in \autoref{sec:algorithmsforcombinatorialoptimization}, finding that the algorithm is marginally incorrect, and study its impacts in \autoref{sec:results+analysis}. Further, in the parsing community, biasing parsers to generate short dependencies has proven to be a bona fide heuristic \citep{collins-2003-head, klein2004, eisner2005}. In \citet{smith2006}, the authors note that "$95\%$ of dependency links cover $\leq 4$ words in English, Bulgarian, and Portuguese; $\leq 5$ words in German and Turkish; and $\leq 6$ words in  Mandarin", which provides further evidence to the fact that dependency lengths are minimized and hence are fairly local. 
\chapter{Algorithmic Framing}\label{chapter:algorithmic}
In this chapter, we introduce the algorithmic perspective that we use to formalize our approach towards studying linear order in natural language. 
\section{Notation} \label{sec:notation}
Given a sentence $\bar{s} = \langle w_1, \dots, w_n \rangle$ and it dependency parse $\mathcal{G}_{\bar{s}} = (\mathcal{V}, \mathcal{E}_\ell)$, we will define $\mathcal{E}$ as the unlabelled and undirected underlying edge set of $\mathcal{E}_\ell$. 
\begin{definition} \textit{Linear layout} --- A bijective mapping $\pi: \mathcal{V} \to [n]$.
\end{definition}
\noindent Therefore, a linear layout specifies an ordering on the vertices of $\mathcal{G}_{\bar{s}}$ or, equivalently, a re-ordering (hence a permutation\footnote{This is why we denote linear layouts by $\pi$.}) of the words in $\bar{s}$.
Denote the space of linear layouts on $\bar{s}$ by $S_{n}$\footnote{Formally, $S_n$ denotes the symmetric group on $n$ elements.}. Since the linear order of a sentence innately specifies a linear layout, we define the \textit{identity linear layout}.
\begin{definition} \textit{Identity linear layout} --- A linear layout $\pi_I: \mathcal{V} \to [n]$ specified by: 
\begin{equation}\label{eq:identitylinearlayout}
\pi_I(w_i) = i
\end{equation}
\end{definition}
\begin{definition} \textit{Edge distance/length} --- 
A mapping $d_\pi : \mathcal{E} \to \mathbb{N}$ specified by:
\begin{equation}\label{eq:edgedistance}
d_\pi(w_i, w_j) = |\pi(w_i) - \pi(w_j)|
\end{equation}
\end{definition}
\noindent For example, $d_{\pi_I}(w_i, w_j) = |i - j|$. \\ \\ 
We further introduce the sets $L_i$ and $R_i$ which are the set of vertices to the left (or at) position $i$ or the right of position $i$:
\begin{equation}\label{eq:leftandright}
L_\pi(i) = \{u \in \mathcal{V} : \pi(u) \leq i\} \text{ and } R_\pi(i) = \{v \in \mathcal{V} : \pi(v) > i\}
\end{equation}
\begin{definition} \textit{Edge cut} --- A mapping $\theta_\pi: [n] \to \mathbb{N}$ specified by:
\begin{equation}\label{eq:edgecut}
\theta_\pi(i) = \big{|}\{(u,v) \in \mathcal{E} | u \in L_\pi(i) \wedge v \in R_\pi(i)\} \big{|}
\end{equation}
\end{definition}
\noindent For a more complete introduction on linear layouts, see the survey of \citet{diazlinearlayouts} which details both problems and algorithms involving linear layouts. 
\section{Objectives} \label{sec:objectives}
In studying human language processing, we are inherently constrained to view word order as specified by humans/according to the linear layout $\pi_I$. As we consider alternative orders, we begin by assuming we have a dataset $\mathcal{D}$ of $N$ examples. For every sentence $\bar{s}_i = \langle w_1, \dots, w_n \rangle \in \mathcal{D}$\footnote{For simplicity, we assume every sentence in the dataset is length $n$ in this section.}, there are many possible orderings. Consequently, we define an \textit{ordering rule} $r: \mathcal{D} \to S_n$ as a mapping which specifies a re-ordering for every sentence in $\mathcal{D}$. Given that there are a superexponential number of ordering rules ($n!^N$)\footnote{Recall that we have assumed there are no duplicate words within any sentence.}, it is intractable to explicitly consider every possible ordering rule for such a combinatorially-sized set, even for a single task/dataset.  \\ \\ 
Given that exhaustively considering all orderings is infeasible, we instead cast the problem of selecting a word order for a sentence as a combinatorial optimization problem. Consequently, we define an ordering rule $r_f$, parameterized by an objective function $f$, as follows:
\begin{equation}\label{eq:combinatorialoptimization}
    r_f(\bar{s}_i) = \argmin_{\pi \in S_n}f(\pi, \bar{s}_i)
\end{equation}
for a cost function $f$. In \autoref{sec:futuredirections}, we revisit how we may handle the case when such an optimization is ill-posed/there exist multiple solutions. 
\subsection{Bandwidth} \label{subsec:bandwidth}
In the previous chapter, we presented several accounts that suggest that humans have fundamental limitations on their abilities to process long-distance dependencies. In general, long-distance dependencies can be a substantial complication in maintaining incremental parses of a sentence. As an illustration, consider the example given in Figure~\ref{fig:longdistancedependencyexample} as a particularly pathological case. 
\begin{figure}
  \centering
  \small
  \begin{dependency}[hide label, edge unit distance=.5ex]
    \begin{deptext}[column sep=0.05cm]
      the\& horse\& raced\& past\& the\& barn\& fell \\
    \end{deptext}                                               
  \end{dependency}
    
\caption{A garden path construction with a long-distance dependency linking \textit{horse} and \textit{fell}.}
\label{fig:longdistancedependencyexample}
\end{figure}
Here, the long-distance dependency between \textit{horse} and \textit{fell} may contribute to the confusion in parsing this sentence for many readers on an initial pass. \\

In computational models, we have also seen treatments that restrict the ability to model arbitrarily long dependencies. Most saliently, in models with Markovian assumptions, such as the HMMs described in \autoref{subsec:sequentialmodels}, there is a fundamental constraint that prohibits modelling dependencies of length greater than the Markov order. Similarly, in Transformer models when the stride size is the context window length, dependencies of length greater than the context window length can simply not be modelled.  \\

Given the difficulty of handling long-distance dependencies in both the human language processing and computational language processing settings, we can consider an ordering rule which ensures that the longest dependency in every re-ordered sentence is as short as possible. As such, we define the \textsc{bandwidth} cost function as follows:
\begin{equation}\label{eq:bandwidthcost}
    \textsc{bandwidth}(\pi, \bar{s}) = \max_{(w_i, w_j) \in \mathcal{E}}d_\pi(w_i, w_j)
\end{equation}
Consequently, this defines the ordering rule $r_b$ under the framework given in Equation~\ref{eq:combinatorialoptimization}.
\begin{equation}\label{eq:bandwidthrule}
r_b(\bar{s}) = \argmin_{\pi \in S_n}\textsc{bandwidth}(\pi, \bar{s})
\end{equation}
The term \textit{bandwidth} refers to the fact that the optimization problem given in Equation~\ref{eq:bandwidthrule} is known in the combinatorial optimization literature as the \textsc{bandwidth} problem. The problem was introduced in 1967 by \citet{bandwidthharary} for graphs, though it has been posed previously for matrices in the mid 1950s. The matrix formulation is equivalent, as a graph can be viewed as its adjacency matrix and the bandwidth of a matrix is exactly the bandwidth of a graph as it measures the maximal distance from the main diagonal of any non-zero elements. \\

The bandwidth problem for matrices has a rich history that has been especially prompted by its applications in numerical analysis. Specifically, numerical computations can be improved (generally by reduction of space costs and ) for matrices with low bandwidth in several matrix factorization (e.g.~Cholesky) and matrix multiplication schemes. As a result, bandwidth reduction has been integrated to various numerical analysis software \cite{bandwidthsoftware} and some libraries include algorithms for matrices with small bandwidth \cite{bandwidthfast}. In other contexts, bandwidth reduction has also been combined with other methods for various high-volume information retrieval problems \cite{bandwidthIR}. \\

Beyond its applied value, the problem has been also the subject of deep theoretical inquiry. Extensive surveys have been written on the problem \citep{bandwidthsurvey} as well as the corresponding complexity results \cite{bandwidthcomplexity}. In particular, \citet{bandwidthpapadimitriou} demonstrated the problem was NP-Hard for general graphs. 
\subsection{Minimum Linear Arrangement} \label{subsec:minimumlineararrangement}
In general, using \textsc{bandwidth} as a cost function to induce an ordering rule implies that there are many improvements that are (potentially) missed. For example, a linear layout for a graph with two edges, where the edge lengths are $6$ and $5$ achieves equivalent bandwidth as the the linear layout where the edge lengths are $6$ and $1$. In this sense, the \textsc{bandwidth} objective may be at odds with the realities of language processing as both humans and computers must model every dependency and not just the longest one. \\

Once again, we turn to the prior work on human language processing for inspiration. In particular, we have seen in 
\autoref{sec:dependencylengthminimization} that the literature on dependency length minimization has suggested an alternative processing cost. In particular, the works of \citet{gibson98locality, gibson2000locality} describe a cost function as given in Equation~\ref{eq:minLAcost}. This is the exact cost function used in work demonstrating large-scale evidence of dependency length minimization by \citet{futrell2015}.
\begin{equation}\label{eq:minLAcost}
    \textsc{minLA}(\pi, \bar{s}) = \sum_{(w_i, w_j) \in \mathcal{E}}d_\pi(w_i, w_j)
\end{equation}
As we have seen before, this allows us to define the associated ordering rule $r_m$ under the framework given in Equation~\ref{eq:combinatorialoptimization}.
\begin{equation}\label{eq:minLArule}
r_m(\bar{s}) = \argmin_{\pi \in S_n}\textsc{minLA}(\pi, \bar{s})
\end{equation}
Reminiscent of the \textsc{bandwidth} problem, we refer to this objective as the \textsc{minLA} objective as a shorthand that refers to the \textsc{minimum linear arrangement} problem in the algorithms literature.\footnote{While the objective in Equation~\ref{eq:minLAcost} has also been studied in the psycholinguistics literature under the name of \textit{dependency length}, we choose to use the more general algorithmic jargon. In particular, this helps to disambiguate this objective from others we have seen (such as \textsc{bandwidth}).} Introduced by \citet{minlaharper}, the problem has been referred by various names including \textit{optimal linear ordering} or \textit{edge sum} and is sometimes conflated with its edge-weighted analogue. Harper considered the problem originally in the context of generating effective error-correcting codes \cite{minlaharper, minlaerrorcorrecting}. \\

The problem has arisen in a number of different applications. In particular, in wiring problems for circuit design (e.g.~VLSI layout problems), reductions to the minimum linear arrangement problem are frequent \cite{minlavlsi}. Similarly, the problem has been often used for job scheduling \citep{minlascheduling1, minlascheduling2}. The problem shares important theoretical connections with the \textit{crossing number}, which emerges in aesthetic graph drawing problems \citep{minlagraphdrawing}. As we have seen, the problem and objective are also studied in more abstract settings, removed from the pure combinatorial optimization paradigm, including in the dependency minimization literature and for rudimentary models of neural behavior \citep{minlaneural}. \\ 

Similar to the bandwidth problem, the problem has seen a number of theoretical techniques applied to it. This has led to a number of improvements in complexity results for the problem\footnote{We consider this in \autoref{sec:algorithmsforcombinatorialoptimization}.} for restricted graph families but the general problem is NP-Hard \cite{minlaNPHard}. \citet{minlabenchmarks} have benchmarked the problem in several settings (along with providing approximation heuristics) and \citet{minLAlower} have given general arguments for arriving at lower bounds on the problems. \\

\noindent \textbf{Relating \textsc{bandwidth} and \textsc{minLA}.} The \textsc{bandwidth} and \textsc{minimum linear arrangement} cost functions (Equation~\ref{eq:bandwidthcost} and Equation~\ref{eq:minLAcost}) are related in that both operate over edge lengths with one invoking a max where the other invokes a sum. In this sense, this is highly reminiscent of the relationship shared by $p$ norms for $p = 1$ and $p = \infty$. More generally, we can define a family of ordering rules $r_p$ parameterized by input $p \in \mathbb{N} \cup \infty$ as follows: 
\begin{equation}\label{eq:pnormrule}
    r_p(\pi, \bar{s}) = \argmin_{\pi \in S_n}\bigg{(} \sum_{(w_i, w_j) \in \mathcal{E}}d_\pi(w_i, w_j)^p \bigg{)}^{1/p}
\end{equation}
In particular, setting $p = 1$ recovers the ordering rule $r_m$ for \textsc{minLA} as in Equation~\ref{eq:minLArule} and setting $p = \infty$ recovers the ordering rule $r_b$ for \textsc{bandwidth} as in Equation~\ref{eq:bandwidthrule}. 
\subsection{Cutwidth} \label{subsec:cutwidth}
In introducing the \textsc{bandwidth} and \textsc{minLA} objectives, the motivation was that processing long-distance dependencies was challenging for both humans and machines. With that in mind, the length of the dependencies are not the sole property that may correlate with the complexity of processing (and therefore motivate re-ordering to facilitate  processing). As a complement to the length of dependencies, humans also have limits to their processing capabilities regarding memory capacity. In this sense, having many dependencies simultaneously active/uncompleted may also correlate with cognitive load. This phenomenon has been shown for humans across a variety of fronts, perhaps most famously in the experiments of \citet{7pm2}. Miller demonstrated that humans may have fundamental hard constraints on the number of objects they can simultaneously track in their working short-term memories.\footnote{Canonically, Miller claimed this was $7 \pm 2$.} Given that similar challenges have been found in computational language processing, memory mechanisms and attentional architectures have been proposed to circumvent this issue. Rather than introducing computational expressiveness, we next consider how to devise orders that explicitly minimize quantities related with the number of active dependencies. \\

In order to the track the number of active dependencies, we introduced the notion of the \textit{edge cut} previously, which describes the number of dependencies that begin at or before position $i$ under $\pi$ but have yet to be completed (the other vertex of the edge appears after position $i$ under $\pi$). Consequently, we define the \textsc{cutwidth} cost:
\begin{equation}\label{eq:cutwidthcost}
    \textsc{cutwidth}(\pi, \bar{s}) = \max_{w_i \in \mathcal{V}}\theta_\pi(i)
\end{equation}
As we have seen before, this allows us to define the associated ordering rule $r_c$ under the framework given in Equation~\ref{eq:combinatorialoptimization}.
\begin{equation}\label{eq:cutwidthrule}
r_c(\bar{s}) = \argmin_{\pi \in S_n}\textsc{cutwidth}(\pi, \bar{s})
\end{equation}
Akin to the previous two settings, the \textsc{cutwidth} problem is also a problem in the combinatorial optimization literature pertaining to linear layouts. The problem emerged in the 1970's due to \citet{minlavlsi} and in the late 80's from \citet{cutwidthorigin} as a formalism for circuit layout problems. In particular, the cutwidth of a graph scales in the area needed for representing linear VLSI circuit layouts. \\

As with the previous problems, the \textsc{cutwidth} problem has continued to arise in several other applications. \citet{bandwidthIR} used the \textsc{cutwidth} problem alongside the \textsc{bandwidth} problem for information retrieval, \citet{cutwidthnetwork} studied the problem for designing a \textit{PTAS} for network reliability problems, and \citet{cutwidthautomateddrawing} considered the problem in automated graph drawing. \\

Somewhat unlike the previous two problems, the problem has seen less theoretical interest despite its numerous applications. Nonetheless, the problem was shown to be NP-Hard by \citet{cutwidthnp}, which continues the trend seen for the other combinatorial optimization problems we consider. \\

\noindent \textbf{Linking capacity and dependency-length.} Previously, we introduced the \textsc{minLA} cost function as arguably rectifying an issue with \textsc{bandwidth} cost function. In particular, the \textsc{bandwidth} cost function does not explicitly model the aggregate cost which is ultimately perceived in language processing. Both humans and machine must model and process all dependencies in a sequence in order to fully understand the sentential meaning. A similar inadequacy could be posited regarding the \textsc{cutwidth} optimization problem and objective. Consequently, we define \textsc{sum-cutwidth} as:
\begin{equation}\label{eq:sumcutwidthcost}
    \textsc{sum-cutwidth}(\pi, \bar{s}) = \sum_{w_i \in \mathcal{V}}\theta_\pi(i)
\end{equation}
As we have seen before, this allows us to define the associated ordering rule $r_{m'}$ under the framework given in Equation~\ref{eq:combinatorialoptimization}.
\begin{equation}\label{eq:sumcutwidthrule}
r_{m'}(\bar{s}) = \argmin_{\pi \in S_n}\textsc{sum-cutwidth}(\pi, \bar{s})
\end{equation}
As the naming convention for $r_{m'}$ suggests, we note that we have already encountered $r_{m'}$ and \textsc{sum-cutwidth} previously. \\

\begin{theorem}[Equivalence of average edge cut and average dependency length]\label{thm:minLA-two-interpretations}
$\textsl{\textsc{sum-cutwidth} = \textsc{minLA}}$
\end{theorem}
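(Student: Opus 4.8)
The plan is to prove the identity by a double-counting (Fubini-style) argument that interchanges the order of summation over positions and over edges. Both sides are sums over the \emph{same} incidence relation between positions $i \in [n]$ and edges $(u,v) \in \mathcal{E}$: an edge is ``cut'' at position $i$ precisely when it straddles the boundary separating $L_\pi(i)$ from $R_\pi(i)$. The key observation is that $\textsc{sum-cutwidth}$ tallies these (position, edge) incidences grouped by position, whereas $\textsc{minLA}$ tallies the very same incidences grouped by edge; so the two must agree.

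First I would rewrite $\theta_\pi(i)$ using an indicator, so that
\[
\textsc{sum-cutwidth}(\pi, \sentence) = \sum_{i=1}^n \theta_\pi(i) = \sum_{i=1}^n \sum_{(u,v) \in \mathcal{E}} \mathbf{1}\big[u \in L_\pi(i) \wedge v \in R_\pi(i)\big].
\]
Since both index sets are finite, I would then swap the two sums to obtain $\sum_{(u,v) \in \mathcal{E}} \sum_{i=1}^n \mathbf{1}[\,\cdots\,]$, which reorganizes the count edge-by-edge.

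The heart of the argument is the inner count for a fixed edge. Let $a = \min\big(\pi(u), \pi(v)\big)$ and $b = \max\big(\pi(u), \pi(v)\big)$, so that $d_\pi(u,v) = b - a$. By the definitions of $L_\pi$ and $R_\pi$ in Equation~\ref{eq:leftandright}, the indicator equals $1$ exactly when $a \le i$ and $b > i$, i.e. for $i \in \{a, a+1, \dots, b-1\}$, a set of exactly $b - a$ integers, all lying in $[n]$ since $1 \le a < b \le n$. Hence the inner sum equals $d_\pi(u,v)$, and summing over all edges yields $\sum_{(u,v) \in \mathcal{E}} d_\pi(u,v) = \textsc{minLA}(\pi, \sentence)$, as desired.

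The argument is essentially routine; the only place demanding care is the boundary bookkeeping in the inner count. Specifically, I would verify that the half-open convention $\pi(u) \le i$ versus $\pi(v) > i$ yields exactly $b - a$ crossing positions (rather than $b-a \pm 1$), and note that $i = n$ contributes nothing because $R_\pi(n) = \emptyset$, consistent with $b \le n$ forcing $b - 1 \le n - 1$. Once this off-by-one check is settled, the interchange of summation immediately closes the proof.
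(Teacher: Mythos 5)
Your proof is correct and takes essentially the same approach as the paper: both are the same double-counting argument in which each edge $(u,v)$ contributes $1$ to the edge cut $\theta_\pi(i)$ precisely for the $d_\pi(u,v)$ positions $i$ lying in the half-open interval between its endpoints, so summing cuts over positions equals summing lengths over edges. Your write-up merely makes the Fubini interchange and the off-by-one boundary check explicit, which the paper's proof leaves informal.
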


\begin{proof}
For every edge $(w_i, w_j) \in \mathcal{E}$, the edge contributes its length $d_\pi(w_i, w_j)$ to the \textsc{minLA} cost. On the other hand, edge $(w_i, w_j)$ contributes $1$ to the edge cut $\theta_k$ for $k \in \big[\pi(w_i), \pi(w_j)\big)$.\footnote{\textit{WLOG} assume that $\pi(w_i) < \pi(w_j)$, the notation $[a,b)$ indicates $\{a, a + 1, \dots, b-1\}$ for integral $a,b$.} Therefore, in aggregate, edge $(w_i, w_j)$ contributes exactly $\bigg{|}\big[\pi(w_i), \pi(w_j)\big) \bigg{|} = d_\pi(w_i, w_j)$ to the \textsc{cutwidth} cost. As this holds for every edge, it follows that $\textsc{sum-cutwidth} = \textsc{minLA}$. 
\end{proof}
\begin{corollary}
$r_m = r_{m'}$ up to the uniqueness of the solution of the combinatorial optimization problem. 
\end{corollary}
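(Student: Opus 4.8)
The plan is to leverage Theorem~\ref{thm:minLA-two-interpretations} directly, since that result already establishes the pointwise equality of the two cost functions over the entire search space. Concretely, the theorem gives $\textsc{sum-cutwidth}(\pi, \bar{s}) = \textsc{minLA}(\pi, \bar{s})$ for every sentence $\bar{s}$ and every linear layout $\pi \in S_n$. Because this identity holds at every point of the (finite) domain $S_n$, the two functions are literally the same function on $S_n$, and two identical functions must share the same set of minimizers. This reduces the corollary to a nearly immediate consequence rather than a new argument.

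First I would fix an arbitrary sentence $\bar{s}$ and consider the finite, nonempty set $S_n$. Since $S_n$ is finite, both objectives attain their minimum, so the $\argmin$ expressions appearing in Equation~\ref{eq:minLArule} and Equation~\ref{eq:sumcutwidthrule} are nonempty. Appealing to the theorem, I would argue that the minimizer sets coincide, i.e.~$\{\pi \in S_n : \textsc{minLA}(\pi,\bar{s}) \text{ is minimal}\} = \{\pi \in S_n : \textsc{sum-cutwidth}(\pi,\bar{s}) \text{ is minimal}\}$, because minimizing identical functions over identical domains yields identical minimizer sets. Interpreting $r_m(\bar{s})$ and $r_{m'}(\bar{s})$ as selections from these (equal) sets, I would conclude $r_m(\bar{s}) = r_{m'}(\bar{s})$ whenever the minimizer is unique, and since $\bar{s}$ was arbitrary this would hold for every $\bar{s} \in \mathcal{D}$, giving $r_m = r_{m'}$.

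The only subtlety — and precisely what the phrase ``up to the uniqueness of the solution'' is meant to flag — is that the $\argmin$ operator may return a set rather than a single permutation when ties occur. The hard part is therefore not any computation but making the statement precise: the ordering rules are genuinely equal as set-valued maps, and they coincide as single-valued permutation maps exactly when the minimizer is unique, or more generally whenever the same deterministic tie-breaking convention is imposed on both. I would close by observing that no tie-breaking asymmetry can arise here, since the two rules optimize the very same objective over the very same domain, so any shared convention necessarily forces agreement.
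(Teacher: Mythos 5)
Your proposal is correct and is exactly the argument the paper intends: the corollary is stated as an immediate consequence of Theorem~\ref{thm:minLA-two-interpretations}, since pointwise-equal objectives over the same finite domain $S_n$ have identical minimizer sets, and the ``up to uniqueness'' qualifier handles precisely the tie-breaking issue you flag. No gap; your treatment of the set-valued $\argmin$ is, if anything, slightly more explicit than the paper's.
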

To the knowledge of the authors, the following argument has not been considered in the literature on modelling language processing with relation to costs pertaining to dependency length/capacity. From this perspective, the result affords an interesting reinterpretation that dependency length minimization is equivalent to minimizing the number of active dependencies. In this sense, it may suggest that related findings (such as those of \citet{7pm2}) may be more pertinent and that the relationship between dependency length and memory capacity may be much more direct than previously believed. 
\section{Algorithms for Combinatorial Optimization} \label{sec:algorithmsforcombinatorialoptimization}
In the previous section (\autoref{sec:objectives}), we introduced objectives and corresponding ordering rules that are motivated by challenges in both computational and human language processing. As a recap of the section, we consider \autoref{fig:objectives-eval}, which depicts a graph and the evaluation of the three cost functions on the graph (given the permutation depicted using vertex labels). Further, in \autoref{fig:different-optimizations}, we observe that solutions to the problem of finding the re-ordering that minimizes each of the three objectives can be different. Note that, in this specific case, the minimum linear arrangement-optimal solution is optimal for the other objectives and the cutwidth solution is optimal for the bandwidth objective but not for the minimum linear arrangement objective.  \\
\begin{figure}
\centering
\begin{tikzpicture}  
  [scale=.9,auto=center,every node/.style={circle,fill=blue!20}] 
    
    \node (a1) at (0, 0) {1};  
    \node (a2) at (0, 3)  {2}; 
    \node (a3) at (1, 2)  {3};  
    \node (a4) at (2, 1) {4};  
    \node (a5) at (3, 0)  {5};  
    \node (a6) at (2, -1)  {6};  
    \node (a7) at (1, -2)  {7};  
    
    \node (a8) at (0, -3)  {8}; 
    \node (a9) at (-1, -2)  {9};  
    \node (a10) at (-2, -1) {10};  
    \node (a11) at (-3, 0)  {11};  
    \node (a12) at (-2, 1)  {12};  
    \node (a13) at (-1, 2)  {13}; 
  
    \draw (a1) -- (a2); 
    \draw (a1) -- (a3); 
    \draw (a1) -- (a4); 
    \draw (a1) -- (a5); 
    \draw (a1) -- (a6); 
    \draw (a1) -- (a7); 
    \draw (a1) -- (a8); 
    \draw (a1) -- (a9); 
    \draw (a1) -- (a10); 
    \draw (a1) -- (a11); 
    \draw (a1) -- (a12); 
    \draw (a1) -- (a13); 

    \draw (a3) -- (a4); 
    \draw (a4) -- (a5); 
\end{tikzpicture}  
\caption{A graph $\mathcal{G}$ with a linear layout specified by the vertex labels in the figure. Given this linear layout, the bandwidth is $12$ (this is $13 - 1$), the cutwidth is $12$ (this is due to position $1$), and the minimum linear arrangement score is $80$ $\left(\text{this is } \sum_{i=2}^{13} \left(i-1\right) + (4-3) + (5-4) \right)$.} \label{fig:objectives-eval}
\end{figure}
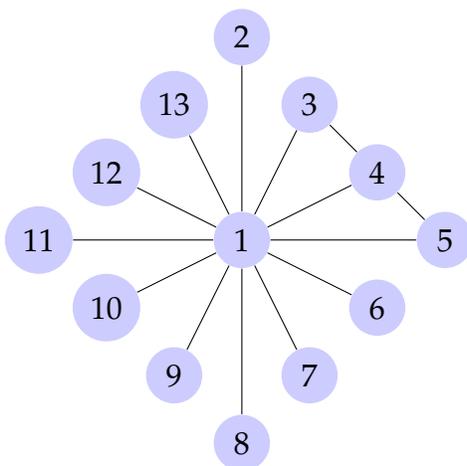

\begin{figure}
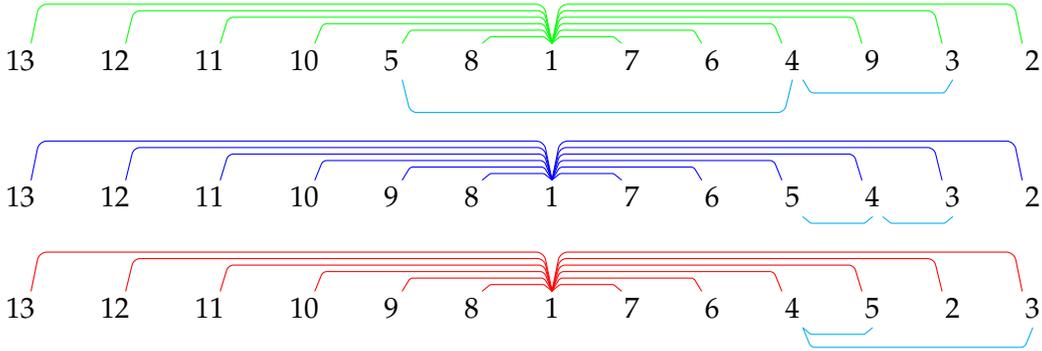

  \centering
  \small
  \begin{dependency}[hide label, edge unit distance=.5ex]
    \begin{deptext}[column sep=0.7cm]
      13 \& 12 \& 11 \& 10 \& 5 \& 8 \& 1 \& 7 \& 6 \& 4 \& 9 \& 3 \& 2  \\
          \end{deptext}
    \depedge[edge style={green}, edge above]{1}{7}{.}
    \depedge[edge style={green}, edge above]{2}{7}{.}
    \depedge[edge style={green}, edge above]{3}{7}{.}
    \depedge[edge style={green}, edge above]{4}{7}{.}
    \depedge[edge style={green}, edge above]{5}{7}{.}
    \depedge[edge style={green}, edge above]{6}{7}{.}
    \depedge[edge style={green}, edge above]{8}{7}{.}
    \depedge[edge style={green}, edge above]{9}{7}{.}
    \depedge[edge style={green}, edge above]{10}{7}{.}
    \depedge[edge style={green}, edge above]{11}{7}{.}
    \depedge[edge style={green}, edge above]{12}{7}{.}
    \depedge[edge style={green}, edge above]{13}{7}{.}
    \depedge[edge style={cyan}, edge below]{5}{10}{.}
    \depedge[edge style={cyan}, edge below]{10}{12}{.}
    
  \end{dependency}
  
    \begin{dependency}[hide label, edge unit distance=.5ex]
    \begin{deptext}[column sep=0.7cm]
      13 \& 12 \& 11 \& 10 \& 9 \& 8 \& 1 \& 7 \& 6 \& 5 \& 4 \& 3 \& 2  \\
          \end{deptext}
    \depedge[edge style={blue}, edge above]{1}{7}{.}
    \depedge[edge style={blue}, edge above]{2}{7}{.}
    \depedge[edge style={blue}, edge above]{3}{7}{.}
    \depedge[edge style={blue}, edge above]{4}{7}{.}
    \depedge[edge style={blue}, edge above]{5}{7}{.}
    \depedge[edge style={blue}, edge above]{6}{7}{.}
    \depedge[edge style={blue}, edge above]{8}{7}{.}
    \depedge[edge style={blue}, edge above]{9}{7}{.}
    \depedge[edge style={blue}, edge above]{10}{7}{.}
    \depedge[edge style={blue}, edge above]{11}{7}{.}
    \depedge[edge style={blue}, edge above]{12}{7}{.}
    \depedge[edge style={blue}, edge above]{13}{7}{.}
    \depedge[edge style={cyan}, edge below]{10}{11}{.}
    \depedge[edge style={cyan}, edge below]{11}{12}{.}
    
  \end{dependency}
  
      \begin{dependency}[hide label, edge unit distance=.5ex]
    \begin{deptext}[column sep=0.7cm]
      13 \& 12 \& 11 \& 10 \& 9 \& 8 \& 1 \& 7 \& 6 \& 4 \& 5 \& 2 \& 3  \\
          \end{deptext}
    \depedge[edge style={red}, edge above]{1}{7}{.}
    \depedge[edge style={red}, edge above]{2}{7}{.}
    \depedge[edge style={red}, edge above]{3}{7}{.}
    \depedge[edge style={red}, edge above]{4}{7}{.}
    \depedge[edge style={red}, edge above]{5}{7}{.}
    \depedge[edge style={red}, edge above]{6}{7}{.}
    \depedge[edge style={red}, edge above]{8}{7}{.}
    \depedge[edge style={red}, edge above]{9}{7}{.}
    \depedge[edge style={red}, edge above]{10}{7}{.}
    \depedge[edge style={red}, edge above]{11}{7}{.}
    \depedge[edge style={red}, edge above]{12}{7}{.}
    \depedge[edge style={red}, edge above]{13}{7}{.}
    \depedge[edge style={cyan}, edge below]{10}{11}{.}
    \depedge[edge style={cyan}, edge below]{10}{13}{.}
    
  \end{dependency}

\caption{Solutions for optimizing each of the three objectives for the graph given in \autoref{fig:objectives-eval}. The linear layout is conveyed via the linear ordering and the numbers refer to the original vertices in the graph (as shown in \autoref{fig:objectives-eval}). The top/{\color{green}green} graph is bandwidth-optimal (bandwidth of 6), the middle/{\color{blue}blue} graph is minimum linear arrangement-optimal (minimum linear arrangement score of 44), the bottom/{\color{red}red} graph cutwidth-optimal (cutwidth of 6). The {\color{cyan}cyan} edges drawn below the linear sequence convey the difference in the optimal solutions.}
\label{fig:different-optimizations}
\end{figure}

In order to make use of these ordering rules for natural language processing applications, it is necessary to tractable solve each of the associated optimization problems. Recall that each of these problems is NP-Hard for general graphs. In spite of this roadblock, also recall that we are considering applying this optimization to sentences equipped with dependency parses, where dependency parses are graphs that are guaranteed/constrained to be trees. \\ 

\noindent \textbf{Bandwidth.} 
Unfortunately, the \textsc{bandwidth} problem remains NP-Hard for trees as well \citep{bandwidthcomplexity}. In fact, the problem is well-known for remaining NP-Hard for a number of graph relaxations \citep{bandwidthnphardcaterpillar, bandwidthnphardrelax} including fairly simple graphs like caterpillar with hair-length at most 3 \citep{bandwidthnphardcaterpillar}. Given this, one natural approach to find tractable algorithms is to consider provable approximations. However, approximations to a factor of $1.5$ do not even exist for both general graphs and trees \citep{bandwidthnoapprox}.\footnote{This further implies that the \textsc{bandwidth} problem does not admit a \textit{PTAS}.} Regardless, approximation guarantees are generally unhelpful as we are considering sentences-scale graphs and therefore \textit{small} graphs, where the approximation factor may be quite significant. Instead, in this thesis, we consider heuristics for the \textsc{bandwidth} problem. In particular, we make use of the frequently-employed heuristic of \citet{bandwidthheuristic}. We discuss this algorithm below and defer the implementation details to a subsequent chapter. \\

The Cuthill-McKee algorithm begins by starting at the vertex and conducting a breadth-first search (BFS) from that vertex. The key to the algorithm's empirical effectiveness is that vertices are visited based on their degree (hence the starting vertex is the vertex with lowest degree). Instead of using the standard algorithm, we use the Reverse Cuthill-McKee algorithm \citep{reverse-cuthill-mckee}, which merely executes the algorithm with reversed index numbers. In theory, this modification has no benefits for general graphs but empirically, it seems this modification turns out to be reliably beneficial \citep{direct-methods-sparse-linear-systems, reverse-cuthill-mckee-practice}.    \\

\noindent \textbf{Minimum Linear Arrangement.} 
Unlike the \textsc{bandwidth} problem, the \textsc{minLA} problem has poly-time solutions for trees. In particular, in a series of results, the run-time for the tree setting was improved from $\mathcal{O}(n^3)$ \citep{minlan3} to $\mathcal{O}(n^{2.2})$ \citep{minlan2.2} to $\mathcal{O}(n^{1.58})$ \citep{minlan1.58}. While there has been progress on developing lower bounds \citep{minLAlower}, matching bounds have been yet to be achieved. In this work, we elect not to use the algorithm of \citet{minlan1.58} and instead introduce an additional constraint and a corresponding algorithm in a subsequent section (\autoref{subsec:projectivityconstraints}). \\

\noindent \textbf{Cutwidth.} 
Similar to \textsc{minLA}, \textsc{cutwidth} also permits poly-time solutions for trees. While the problem remained open as to whether this was possible for a number of years, \citet{cutwidth-nlogn/linear} gave a $\mathcal{O}(n\log(n))$ algorithm. Further akin to \textsc{minLA}, we forego this general algorithm (for trees) for one that involves projectivity constraints (\autoref{subsec:projectivityconstraints}). 

\subsection{Projectivity Constraints} \label{subsec:projectivityconstraints}
Recall that poly-time algorithms exist for both the \textsc{minLA} and \textsc{cutwidth} problems. In both cases, the works introducing the corresponding algorithms we discussed previously develop significant algorithmic machinery to arrive at the final algorithms. We will next see that a linguistically-motivated constraint in \textit{projectivity} yields faster\footnote{From a practical perspective, the algorithms given in \autoref{sec:algorithmsforcombinatorialoptimization} are likely sufficiently fast for almost any language data. In particular, sentences are generally short from a complexity perspective and hence algorithms with asymptotic complexity of $\mathcal{O}(n^{1.58})$ and $\mathcal{O}(n\log n)$ are unlikely to be of concerning cost, especially since the methods we study in \autoref{chapter:optimallinearordersforNLP} are one-time costs.} and simpler algorithms. Recall that projectivity refers to the property of a dependency parse that when the nodes are ordered on a line and the edges are drawn above the line, the parse has no intersecting edges. If we constraint the order $\pi$ outputted by either $r_m$ or $r_c$ to be projective, linear time algorithms are known for \textsc{minLA} \citep{minlaprojective} and \textsc{cutwidth} \citep{cutwidth-nlogn/linear}.\footnote{In some algorithmic contexts, the problem of returning a linear layout that is constrained to be projective is known as the \textsc{planar} version of a problem, e.g.~\textsc{planar-cutwidth}.} We next discuss the algorithms of \citet{minlaprojective} and \citet{cutwidth-nlogn/linear}, showing how they both can be generalized using the framework of \textit{disjoint strategies} \citep{cutwidth-nlogn/linear}. \\
\begin{figure}
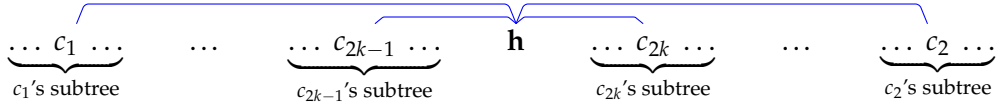

  \centering
  \small
  \begin{dependency}[hide label, edge unit distance=.5ex]
    \begin{deptext}[column sep=0.7cm]
    $\underbrace{\dots \ c_1 \ \dots}_{c_1\text{'s subtree}}$ \& \dots \& $\underbrace{\dots \ c_{2k-1} \ \dots}_{c_{2k-1}\text{'s subtree}}$ \& \textbf{h} \& $\underbrace{\dots \ c_{2k} \ \dots}_{c_{2k}\text{'s subtree}}$ \& \dots \& $\underbrace{\dots \ c_{2} \ \dots}_{c_{2}\text{'s subtree}}$  \\ \\
          \end{deptext}
    \depedge[edge style={blue}, edge above]{1}{4}{.}
    \depedge[edge style={blue}, edge above]{3}{4}{.}
    \depedge[edge style={blue}, edge above]{5}{4}{.}
    \depedge[edge style={blue}, edge above]{7}{4}{.}
  \end{dependency}
\caption{Illustration of the disjoint strategy. The root $h$ is denoted in \textbf{bold} and it has $2k$ children denoted by $c_1, \dots, c_{2k}$. Its children and their subtrees are organized on either side. The order within each child subtree is specified by a linear layouts that has previously been computed in the dynamic program. The order of the children and their subtrees alternates and moving from outside to inside based on their score according to some scoring function. Hence, the subtree rooted at child $c_1$ receives the highest score and the subtree roots at child $c_{2k}$ receives the lowest score. If the root $h$ had $2k+1$ (an odd number) of children, the strategy is followed for the first $2k$ and we discuss the placement of the last child subsequently.}
\label{fig:disjointstrategy}
\end{figure}

\begin{algorithm}
\DontPrintSemicolon
\SetAlgoLined
Input: A tree rooted at $h$ with children $c_1, \dots, c_{2k}$. \\
Input: Optimal linear layouts $\pi_1, \dots, \pi_{2k}$ previously computed in the dynamic program. $\pi_i$ is the optimal linear layout for the tree rooted at $c_i$. \\
$\pi_h \gets \{h : 1\} $\;
$\text{ranked-children} \gets \texttt{sort}\big(\left[1, 2, \dots, {2k}\right], \lambda x. \texttt{score}\left(c_x\right)\big)$\;
$\pi \gets \left(\bigoplus_{i=1}^k \pi_{\text{ranked-children}[2i-1]}\right) \oplus \pi_h \oplus \left(\bigoplus_{i=0}^{k-1} \pi_{\text{ranked-children}[2(k-i)]}\right)$\;
\Return $\pi$
\label{alg:disjointstrategy}
\caption{\texttt{Disjoint Strategy}}
\label{alg:disjointstrategy}
\end{algorithm}

\noindent \textbf{Dynamic Programming for tree-based optimization.} 
Since we are considering both the \textsc{cutwidth} and \text{minLA} problems in the setting of trees, dynamic programming approaches are well-motivated. In particular, we will consider how optimal linear layouts for each of the children at a given node can be integrated to yield the optimal linear layout at the given node. As we will show, both algorithms we consider can be thought of as specific instantiations of the abstract \textit{disjoint strategy} introduced by \citet{cutwidth-nlogn/linear}.\footnote{\textbf{Remark:} \citet{minlaprojective} develop the same general framework in their own work. Since both algorithms share a similar template, we prefer the standardized template to their \textit{ad hoc} description.} In \autoref{fig:disjointstrategy}, we depict what the disjoint strategy looks like and in Algorithm~\ref{alg:disjointstrategy}, we provide the template for the disjoint strategy algorithm. We denote linear layouts programmatically as dictionaries, hence $\pi_h$ is the function $\pi_h: \{h\} \to \{1\}$ given by $\pi_h(h) = 1$. We define the $\oplus$ operator over linear layouts below.
\begin{definition} $\oplus$ --- A binary operator for arbitrary parameters $n, m$ of type $\oplus: S_n \times S_m \to S_{n+m}$ specified by: 
\begin{equation}\label{eq:oplus}
\oplus(\pi_x, \pi_y)(w_i) = \begin{cases} 
      \pi_x(w_i) & w_i \in Dom(\pi_x) \\
      \pi_y(w_i) + n & w_i \in Dom(\pi_y) 
   \end{cases}
\end{equation}
$\bigoplus$ is given by the repeated application of $\oplus$ (analogous to the relationship between $+$ and $\sum$ or $\cup$ and $\bigcup$). 
\end{definition}

\noindent \textbf{Minimum Linear Arrangement.} The function \texttt{score} in Algorithm~\ref{alg:disjointstrategy} is defined such that $\texttt{score}(c_i)$ is the size of the subtree rooted at $c_i$. \\

\noindent \textbf{Cutwidth.} The function \texttt{score} in Algorithm~\ref{alg:disjointstrategy} is defined such that $\texttt{score}(c_i)$ is the modified cutwidth of the subtree rooted at $c_i$ under $\pi_i$. The modified cutwidth of a tree under $\pi$ is the cutwidth of the tree under $\pi$ plus a bit indicating whether that are positions on either side of the root of the tree at which the cutwidth (the maximum edge cut) is obtained. Hence, the modified cutwidth is the cutwidth if all positions at which the edge cut is maximized occur strictly on the left of the root XOR if all positions at which the edge cut is maximized occur strictly on the right of the root. Otherwise, the modified cutwidth is the cutwidth plus one.  \\

\noindent \textbf{Proofs of correctness.} Intuitively, both proofs of correctness hinge on the fact that the disjoint strategy involves placing `larger' subtrees in an alternating outside-to-inside fashion around the current node/root. By selecting the correct measure of `large', the `adverse' effects of the large subtrees affect as few other subtrees as possible (since they are outside the link from these subtrees to the root, which is the only way in which the costs interact across subtrees/with the root). For readers interested in the fully formal proof of correctness, we direct them to the corresponding works: \citet{cutwidth-nlogn/linear} and \citet{minlaprojective}.

\begin{figure}
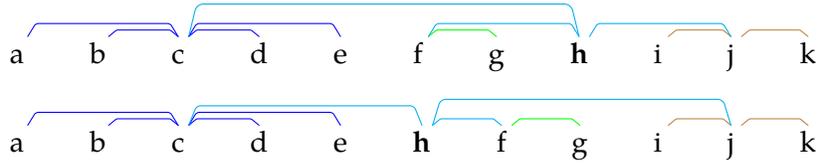

  \centering
  \small
  \begin{dependency}[hide label, edge unit distance=.5ex]
    \begin{deptext}[column sep=0.7cm]
    a \& b \& c \& d \& e \& f \& g \& \textbf{h} \& i \& j \& k  \\
          \end{deptext}
    \depedge[edge style={blue}, edge above]{1}{3}{.}
    \depedge[edge style={blue}, edge above]{2}{3}{.}
    \depedge[edge style={blue}, edge above]{3}{4}{.}
    \depedge[edge style={blue}, edge above]{3}{5}{.}
    \depedge[edge style={cyan}, edge above]{3}{8}{.}
    \depedge[edge style={green}, edge above]{6}{7}{.}
    \depedge[edge style={cyan}, edge above]{6}{8}{.}
    \depedge[edge style={brown}, edge above]{9}{10}{.}
    \depedge[edge style={brown}, edge above]{10}{11}{.}
    \depedge[edge style={cyan}, edge above]{8}{10}{.}
    
  \end{dependency}
    \begin{dependency}[hide label, edge unit distance=.5ex]
    \begin{deptext}[column sep=0.7cm]
    a \& b \& c \& d \& e \& \textbf{h} \& f \& g \& i \& j \& k  \\
          \end{deptext}
    \depedge[edge style={blue}, edge above]{1}{3}{.}
    \depedge[edge style={blue}, edge above]{2}{3}{.}
    \depedge[edge style={blue}, edge above]{3}{4}{.}
    \depedge[edge style={blue}, edge above]{3}{5}{.}
    \depedge[edge style={cyan}, edge above]{3}{6}{.}
    \depedge[edge style={green}, edge above]{7}{8}{.}
    \depedge[edge style={cyan}, edge above]{6}{7}{.}
    \depedge[edge style={brown}, edge above]{9}{10}{.}
    \depedge[edge style={brown}, edge above]{10}{11}{.}
    \depedge[edge style={cyan}, edge above]{6}{10}{.}
  \end{dependency}

\caption{Linear layouts exemplifying the difference between the solutions produced by the \citet{minlaprojective} algorithm (top) and our algorithm (bottom). The root $h$ is denoted in \textbf{bold}. In both algorithms, the linear layouts for the children with the largest subtrees --- the {\color{blue}{blue}} subtree rooted at $c$ and the {\color{brown}{brown}} subtree rooted at $j$ --- are placed on opposite sides. The difference is the placement of the {\color{green}{green}} subtree rooted at child $f$. The arcs whose edge lengths change across the two layouts are those in {\color{cyan}{cyan}}, notably $(c,h), (f,h),$ and $(j,h)$. However, the sum of the edge lengths for $(c,h)$ and $(j,h)$ is constant across the linear layouts. Hence, the difference in minimum linear arrangement scores between the linear layouts is solely dictated by the length of $(f,h)$, which is shorter in our algorithm's layout (bottom layout).}
\label{fig:minLAcorrection}
\end{figure}

\noindent \textbf{Correcting the algorithm of \citet{minlaprojective}.} 
We note that one small correction is made in our algorithm that was not originally addressed by \citet{minlaprojective}. In particular, consider the case when the given node has an odd number of children (i.e.~$2k+1$ children for some non-negative integer $k$). In this case, the $2k$ largest children and their associated subtrees are alternated as dictated by the \textit{disjoint strategy}. \citet{minlaprojective} claim that the placement of the final child does not matter, however this is incorrect. We show this in \autoref{fig:minLAcorrection}. That said, it is fairly simple to correct this error. The final child's subtree is \textit{left-leaning} if more of the child's subtree is oriented to its left than right (according to the linear layout already computed), \textit{right-leaning} if more of the child's subtree oriented to its right than left, and \textit{centered} otherwise. If the child's subtree is leaning in either direction, it should be placed on that side of the root (closest to the root relative to the root's other children, i.e.~in accordance with the \textit{disjoint strategy}). Otherwise, the side it is placed on does not matter.  \\

The proof of correctness that this globally improves the outputted linear layout is fairly straightforward. In particular, since there are $k$ children of the root on either side, the objective will be incremented by $k \ *$ the size of the $2k+1$ child's subtree in irrespective of this decision (where to place child $2k+1$ and its subtree). All arcs above the root and within any of the root's children's subtrees will not change length. Hence the only arc of interest is the one connecting child $2k+1$ and the root. In this case, clearly placing the root on the side opposite of the way the child's subtree is leaning will yield a smaller such arc (as depicted in \autoref{fig:minLAcorrection}). \\

As a concluding remark, we note that the error we presented with the algorithm of \citet{minlaprojective} is exceptionally pervasive. In particular, for every child with an odd number of children in the tree, the algorithm they present may have contributed to a misplacement with probability $\frac{1}{2}$. When evaluated over data we describe in \autoref{sec:data}, we find over $95\%$ of the sentences contain such an instance. Additionally, the runtime analysis of \citet{minlaprojective} indicates that their algorithm is $\mathcal{O}(n)$. Given the sorting required, this is not naively true but can be rectified using appropriate data structures and bucket sort as suggested by \citet{cutwidth-nlogn/linear}. 

\section{Heuristics for Mixed-Objective Optimization}\label{sec:heuristics}
In the previous section, we considered an algorithm for each of the \textsc{bandwidth}, \textsc{minLA}, and \textsc{cutwidth} problems. In all cases, the algorithm had the goal of a producing that a linear layout that minimized the corresponding objective to the extent possible.\footnote{Perhaps with additional constraints such as projectivity.} However, in our setting, we are considering re-ordering the words of a sentence for modelling sentences. From this perspective, there is a clear risk that re-ordering the words to optimize some objective regarding dependencies may obscure other types of order-dependent information in the original sentence. We next consider a template that specifies a simple heuristic for each of these optimization problems. In particular, the heuristic involves a parameter $T$ which bears some relationship with the notion of trading off the extent to which the original sentence's word order is distorted with the extent to which the combinatorial objective is minimized. 
\subsection{Transposition Monte Carlo}\label{subsec:transpositionalgorithm}
\begin{algorithm}
\DontPrintSemicolon
\SetAlgoLined
Input: A sentence $\bar{s}$ and its dependency parse $\mathcal{G}_{\bar{s}} = (\mathcal{V}, \mathcal{E})$.\;
Initialize $\pi = \pi_I$\;
Initialize $c = \texttt{OBJ}(\pi, \bar{s})$\; 
\For{$t \gets 1, \dots, T$}{
$w_i, w_j \sim \mathcal{U}_{\mathcal{V}}$ \;
$\pi_{\text{temp}} \gets \pi$\; 
$\pi_{\text{temp}}(w_i), \pi_{\text{temp}}(w_j) \gets \pi_{\text{temp}}(w_j), \pi_{\text{temp}}(w_i)$\; 
$c_\text{temp} \gets \texttt{OBJ}(\pi_{\text{temp}}, \bar{s})$\; 
\uIf{$c > c_\text{temp}$}{
$\pi \gets \pi_{\text{temp}}$\;
$c \gets c_\text{temp}$\;
}
}
\Return $\pi$
\label{alg:transposition-template}
\caption{\texttt{Transposition Monte Carlo}}
\label{alg:transposition-template}
\end{algorithm}
In Algorithm~\ref{alg:transposition-template}, we present the template we consider for our three heuristic algorithms. Intuitively, the algorithm is a Monte Carlo algorithm that at each time step randomly samples a transposition\footnote{The notation permits the $w_i = w_j$ but there is no benefit to this, so the notation should be taken as sampling $w_i$ and then sampling $w_j$ without replacement from the resulting distribution.} and considers altering the current permutation $\pi$ according to this transposition. For this reason, we call the algorithm the \texttt{Transposition Monte Carlo} algorithm. \\

\noindent We refer to this algorithm as a heuristic since, like any Monte Carlo algorithm, there is no provable guarantee that the algorithm produces the optimal solution. In particular, we note that the algorithm greedily decides whether to update in accordance with a candidate transposition and hence makes \textit{locally optimal} decisions. Consequently, there is no guarantee that this procedure will lead to the \textit{globally optimal} linear layout. However, unlike the algorithms in the section on algorithms for producing linear layouts under projectivity constraints (\autoref{subsec:projectivityconstraints}), the permutation produced by this algorithm need not be projective. We will revisit this in empirically considering the quality of the optimization on natural language data (\autoref{sec:data}). \\

\noindent The \texttt{Transposition Monte Carlo} algorithm is parameterized by two quantities: the objective/cost function being minimized \texttt{OBJ} and the stopping criterion/threshold $T$. 
\begin{enumerate}
    \item \texttt{OBJ} --- 
    In this work, we specify cost functions \texttt{OBJ} in accordance with those which we have seen previously --- the \textsc{bandwidth} cost in \autoref{eq:bandwidthcost}, the \textsc{minLA} cost in \autoref{eq:minLAcost}, and the \textsc{cutwidth} cost in \autoref{eq:cutwidthcost}. We will refer to the associated permutations as $\tilde\pi_{{b}}$, $\tilde\pi_{{m}}$, and $\tilde\pi_{{c}}$ respectively and will similarly denote the induced ordering rules as $\tilde r_{{b}}$, $\tilde r_{{m}}$, and $\tilde r_{{c}}$.
    \item $T$ --- In this work, we fix $T = 1000$ which approximately corresponds to \texttt{Transposition Monte Carlo} taking the same amount of wall-clock time as it takes to run any of the algorithms in \autoref{sec:algorithmsforcombinatorialoptimization} on the same data. However, varying $T$ allows for the possibility of flexibility controlling the extent to which the returned linear layout is distorted. For $T = 0$, we recover the ordering rule $r_I$ which returns $\pi_I$ for a given input sentence. For $T = \infty$, we are not guaranteed to find a global optima to the author's knowledge (due to the local greediness of the algorithm), but we are guaranteed to find a local optima (in the sense that no transposition from the solution yields a better solution). Treating $T$ as a task-dependent hyperparameter in NLP applications and optimizing for it (perhaps on validation data) may encode the extent to which dependency locality is important for the given task.  
\end{enumerate}
\chapter{Optimal Linear Orders for Natural Language Processing}\label{chapter:optimallinearordersforNLP}
In this chapter, we describe how to integrate the novel word orders we have considered with models in natural language processing. We specifically introduce the \texttt{pretrain-permute-finetune} framework which fully operationalizes this. We then conduct an empirical evaluation of our method using the word orders we have introduced previously. 
\section{Motivation}\label{sec:motivation}
In \autoref{sec:related-work}, we discuss several approaches to addressing order, and word order specifically, in computational models of language. The dominant recent paradigm has been to introduce computational expressiveness (e.g.~LSTMs in place of simple RNNs, attention, syntax-augmented neural methods \citep{recursivenn, recursivenn+logic, rnng, rnn+syntax?, unsupervisedrnng}) as a mechanism for handling the difficulties of long-distance dependencies and word order more generally. Alternatively, position-aware Transformers have been considered but their widespread effectiveness hinges on large amounts of data with sufficient hardware resources to exploit the increased ability for parallelism. These methods also tend to be significantly more challenge to optimize in practice \citep{trainingtransformers}. Ultimately, it remains unclear whether it is necessary to rely on additional model complexity or whether restructuring the problem (and the input specifically) may be more appropriate. To this end, we consider the orders which we have introduced in \autoref{chapter:algorithmic} as a mechanism for codifying new views to the problem of sequential modelling language. Next, we explore the potential newfound advantages (and costs/limitations) of using these novel word orders in computational models of language. 
\section{Methods}\label{sec:methods}
We began by considering training models where, for every sentence in the training data, we instead use its re-ordered analogue. In initial experiments, we found the results to be surprisingly promising. However, we found that this paradigm may not be particularly interesting given there are very few, if any, tasks in NLP that are currently conducted using representations built upon strictly task-specific data. Instead, almost all NLP models leverage pretrained (trained on large amounts of unlabelled data in a downstream task-agnostic fashion) representations to serve as potent initializations \citep{ruderthesis}. The past few years have seen the introduction of pretrained \textit{contextualized representations}, which are functions $c: \bigcup_{i=1}^\infty\mathcal{V}^i \to \bigcup_{i=1}^\infty\left(\mathbb{R}^d\right)^i$, that map word sequences (e.g.~sentences) to vector sequences of equal length. In particular, the resulting vectors encode the contextual meaning of the input words. Initial pretrained contextualized representations include: CoVe \citep{cove}, ELMo \citep{elmo}, and GPT \citep{gpt}; the current best\footnote{It is not sufficiently specified to rank pretrained representations in a task-agnostic fashion given that they may have disparate and inconsistent performance across different downstream tasks/datasets. In labelling some of these representations as the `best', we refer to the \texttt{GLUE} \citep{glue} and \texttt{SuperGLUE} \citep{superglue} benchmarks for natural language understanding and similar benchmarks for natural language generation \citep[c.f][]{t5}.} pretrained contextualized representations include: BERT \citep{bert}, GPT-2 \citep{gpt2}, XLNet \citep{xlnet}, RoBERTa \citep{roberta}. SpanBERT \citep{spanbert}, ELECTRA \citep{electra}, ALBERT \citep{albert} and T5 \citep{t5}. \\

\noindent Given the widespread use of transfer learning and pretraining methods in NLP \citep{ruderthesis}, we begin by describe the \texttt{pretrain-and-finetune} framework that has featured prominently across NLP. In particular, the mapping from an input sentence $\bar{s}$ to the predicted output $\hat{y}$ is given by the following process:
\begin{enumerate}
    \item Tokenize the input sentence $\bar{s}$ into words $\langle w_1 \dots w_{n} \rangle$.
    \item Embed the sentence $\langle w_1 \dots w_{n} \rangle$ using the pretrained encoder/contextualized model $c$ to yield vectors $\vec{x}_1, \dots, \vec{x}_{n}$ where $\forall i \in \left[n\right], \vec{x}_i \in \mathbb{R}^d$.
    \item Pass $\vec{x}_1, \dots, \vec{x}_{n}$ into a randomly initialized component $F$ that is trained on the task of interest that outputs the prediction $\hat{y}$. 
\end{enumerate}
In order to modify this process to integrate our novel word orders, one straightforward approach would be simply feeding the permuted sequence of inputs into the pretrained encoder. Perhaps unsurprisingly, we found this to perform quite poorly in initial experiments. After all, the pretrained encoder never observed such permuted constructions during training and these inputs are effectively out-of-distribution for the model. Another natural approach that is more reasonable would be to re-pretrain the encoder using the same training data but with all sentences permuted according to the order being studied. Unfortunately, this is well beyond our computational constraints and likely is not practical for almost all research groups given the tremendous time, cost, and unique GPU and TPU resources required to pretrain modern models. Even for such entities who can bear these expenses, this is unlikely to be feasible if multiple orders are to be experimented with and there are substantial ethical considerations given the sizeable environmental impact \citep{environment}.  \\

Given these roadblocks, we propose permuting the vectors $\vec{x}_1, \dots, \vec{x}_{n}$ in  between steps 2 and 3. In doing so, we seamlessly integrate our permuted orders in a model-agnostic and task-agnostic fashion while preserving the benefits of pretraining. Further, since the permutation for a given example can be pre-computed (and is a one-time cost), the additional runtime cost of our method during both training and inference is simply the cost of computing the optimal orders over the dataset.\footnote{As a reference, for the 5 datasets we study (and over 40000 examples), optimization across all six orders takes less than 3 hours on a single CPU. Several aspects of this are embarrassingly parallel and others can be more cleverly designed using vector/matrix operations in place of dynamic programming and for loops. This suggests that highly parallel GPU implementations can render this runtime cost to be near-zero. Regardless, it is already dramatically dwarfed by the costs of training.} We name our framework \texttt{pretrain-permute-finetune} and we explicitly state the procedure below. 
\begin{enumerate}
    \item Tokenize the input sentence $\bar{s}$ into words $\langle w_1 \dots w_{n} \rangle$.
    \item Embed the sentence $\langle w_1 \dots w_{n} \rangle$ using the pretrained encoder/contextualized model $c$ to yield vectors $\vec{x}_1, \dots, \vec{x}_{n}$ where $\forall i \in \left[n\right], \vec{x}_i \in \mathbb{R}^d$.
    \item Permute the vectors $\vec{x}_1, \dots, \vec{x}_{n}$ according to linear layout $\pi$. In other words, $\forall i \in \left[n\right], \vec{z}_{\pi\left(w_i\right)} \triangleq \vec{x}_i$.
    \item Pass $\vec{z}_1, \dots, \vec{z}_{n}$ into a randomly initialized component $F$ that is trained on the task of interest that outputs the prediction $\hat{y}$. 
\end{enumerate}
\noindent \textbf{Pretrained Contextualized Representations.}
In this thesis, we use a pretrained ELMo \citep{elmo} encoder to specify $c$. ELMo is a pretrained shallowly-bidirectional language model that was pretrained on 30 million sentences, or roughly one billion words, using the 1B Word Benchmark \citep{chelba2013}. The input is first tokenized and then each word is deconstructed into its corresponding character sequences. Each character is embedded independently and then a convolutional neural network is used to encode the sequence and produce word representations. The resulting word representations are passed through a two-layer left-to-right LSTM and a two-layer right-to-left LSTM. Two representations are produced for every word $w_i$. The first is  $\texttt{ELMo}_1\left(w_i \mid \langle w_1 \dots w_n \rangle \right) \in \mathbb{R}^{d}$, which is the concatenated hidden states from the first layer of both LSTMs. Similarly, the second is $\texttt{ELMo}_2\left(w_i \mid \langle w_1 \dots w_n \rangle \right) \in \mathbb{R}^{d}$, which is the concatenated hidden states from the second layer of both LSTMs. In our work,  $\vec{x}_i \triangleq \left[\texttt{ELMo}_1\left(w_i \mid \langle w_1 \dots w_n \rangle \right);\texttt{ELMo}_2\left(w_i \mid \langle w_1 \dots w_n \rangle \right)\right] \in \mathbb{R}^{2d}$.\footnote{$;$ denotes concatenation} \\

\noindent \textbf{Task-specific Component.} We decompose the task-specific model $F$ into a bidirectional LSTM, a max pooling layer, and a linear classifier:
\begin{align}\label{eq:task-specific-model}
    \overleftarrow{h_{1:n}}, \overrightarrow{h_{1:n}} \gets & \texttt{BidiLSTM}(\vec{z}_1, \dots, \vec{z}_n) \\
    \vec{h} \gets & \left[\max\left(\overleftarrow{h_{1:n}} \right)\ ;\  \max\left(\overrightarrow{h_{1:n}} \right) \right] \\
    \hat{y} \gets & \texttt{Softmax}\left(\mathbf{W}\vec{h} + \vec{b} \right)
\end{align}
where $\texttt{BidiLSTM}, \mathbf{W}, \vec{b}$ are all learnable parameters and $\max(\cdot)$ is the element-wise max operation. 
\section{Data}\label{sec:data}
In this work, we evaluate our methods and baselines on five single-sentence text classification datasets. Summary statistics regarding the distributional properties of these datasets are reported in \autoref{tab:dataset-standard-statistics}. We use official dataset splits\footnote{For datasets where the standard split is only into two partitions, we further partition the training set into $\frac{8}{9}$ training data and $\frac{1}{9}$ validation data.} when available and otherwise split the dataset randomly into $80\%$ training data, $10\%$ validation data, and $10\%$ test data.\footnote{When we create data splits, we elect to ensure that the distribution over labels in each dataset partition is equal across partitions.} We also report the fraction of examples that the \texttt{spaCy} parser generates an invalid parse. For examples with invalid parse, since we cannot meaningfully compute optimal linear layouts, we back-off to using the identity linear layout $\pi_I$. All data is publicly available and means for accessing the data are described in \autoref{appendix:reproducibility-data-access}. \\

\begin{table*}[t]
\centering
\begin{tabular}{cccccccc}
\toprule
& Train & Validation & Test & $\frac{\text{Words}}{\text{ex.}}$ & Unique Words & Classes & Fail $\%$ \\
\midrule
\texttt{CR} & 3016 & 377 & 378 & 20 & 5098 & 2 & 19.2\\  
\texttt{SUBJ} & 8000 & 1000 & 1000 & 25 & 20088 & 2 & 13.6\\ 
\texttt{SST-2} & 6151 & 768 & 1821 & 19 & 13959 & 2 & 6.7\\ 
\texttt{SST-5} & 7594 & 949 & 2210 & 19 & 15476 & 5 & 6.8\\ 
\texttt{TREC} & 4846 & 605 & 500 & 10 & 9342 & 6  & 1.0\\
\bottomrule
\end{tabular}
\caption{Summary statistics for text classification datasets. Train, validation, and test refer to the number of examples in the corresponding dataset partition. $\frac{\text{Words}}{\text{ex.}}$ refers to the average number of words in the input sentence for each example in the union of the training data and the validation data. Unique words is the number of unique words in the union of the training data and the validation data. Classes is the size of the label space for the task. Fail $\%$ is the percentage of examples in the union of the training and validation set where the \texttt{spaCy} dependency parser emits an invalid parse (e.g~multiple syntactic roots, not a connected graph). }
\label{tab:dataset-standard-statistics}
\end{table*}

\noindent \textbf{Customer Reviews Sentiment Analysis.} This dataset was introduced by \citet{cr} as a collection of web-scraped customer reviews of products. The task is to predict whether a given review is positive or negative. This dataset will be abbreviated \texttt{CR} hereafter. \\

\noindent \textbf{Movie Reviews Subjectivity Analysis} This dataset was introduced by \citet{subj} as a collection of movie-related texts from \url{rottentomatoes.com} and \url{imdb.com}. The task is to predict whether a given sentence is subjective or objective. Subjective examples were sentences extracted from movie reviews from \url{rottentomatoes.com} and objectives examples were sentences extracted from movie plot summaries from \url{imdb.com}. This dataset will be abbreviated \texttt{SUBJ} hereafter. \\

\noindent \textbf{Movie Reviews Sentiment Analysis} This dataset was introduced by \citet{sst} as the \textit{Stanford Sentiment Treebank}. The dataset extends the prior dataset of \citet{mr}, which is a set of movie reviews from \url{rottentomatoes.com}, by re-annotating them using Amazon Mechanical Turk. In the binary setting, the task is to predict whether the review is positive or negative. In the fine-grained or five-way setting, the task is to predict whether the review is negative, somewhat negative, neutral, somewhat positive, or positive. This dataset will be abbreviated as \texttt{SST-2} to refer to the binary setting and as \texttt{SST-2} to refer to the five-way setting hereafter. \\

\noindent \textbf{Question Classification} This dataset was introduced by \citet{trec} as a collection of data for question classification and was an aggregation of data from \citet{hovy2001} and the TREC 8 \citep{trec8}, 9 \citep{trec9}, and 10 \citep{trec10} evaluations. The task is to predict the semantic class\footnote{The notion of a \textit{semantic class} for questions follows \citet{semanticquestion} and is distinct from the conceptual classes of \citet{conceptualquestion1,  conceptualquestion2, conceptualquestion3}. The distinction primarily centers on the handling of factual questions \citep[c.f.][]{trec}.}  of a question from the following categories: abbreviation, entity, description, human, location, and numerical value.\footnote{The data was annotated a course-grained granularity of six classes and a fine-grained granularity of 50 classes. In this work, we use the coarse-grained granularity.} This dataset will be abbreviated \texttt{TREC} hereafter. \\

\noindent \textbf{Why single-sentence text classification?}
Text classification is a standard and simple test-bed for validating that new methods in NLP have potential. It is also one of the least computationally demanding settings, which allowed us to be more comprehensive and rigorous in considering more datasets, ordering rules, and hyperparameter settings. In this work, we choose to further restrict ourselves to single-sentence datasets as the optimization we do is ill-posed. Consequently, without further constraints/structure, the solutions our algorithms generate may lack fluency across sentence boundaries. In the single-sentence setting, models must learn to grapple with this lack of systematicity in modelling across examples but the task is significantly simpler as there are no modelling challenges within a single example due to this lack of systematicity. As we were mainly interested in what could be done with relatively pure algorithmic optimization, we began with this simpler setting with fewer confounds. We discuss this as both a limitation and opportunity for future work in \autoref{chapter:conclusions}. 
\section{Experimental Conditions}\label{sec:experimental-conditions}
In this work, we tokenize input sentences using \texttt{spaCy} \citep{spacy}. We additionally dependency parse sentences using the \texttt{spaCy} dependency parser, which uses the CLEAR dependency formalism/annotation schema.\footnote{\url{https://github.com/clir/clearnlp-guidelines/blob/master/md/specifications/dependency_labels.md}} We then embed tokens using ELMo pretrained representations. We then freeze these representations, permute them according to the order we are studying, and then pass them through our bidirectional LSTM encoder to fine-tune for the specific task we are considering. We use a linear classifier with a \texttt{Softmax} function on top of the concatenated max-pooled hidden states produced by the LSTM. \\

\noindent \textbf{Why} \texttt{spaCy}\textbf{?}
Using \texttt{spaCy} for both dependency parsing and tokenizing ensured that there were no complications due to misalignment between the dependency parser's training data's tokenization and our tokenization. Further, the \texttt{spaCy} dependencer parser is reasonably high-quality for English and is easily used off-the-shelf. \\

\noindent \textbf{Why ELMo?} 
ELMo representations are high-quality pretrained representations. While superior pretrained representations exist (e.g.~BERT), we chose to use ELMo as it is was compatible with the tokenization schema of \texttt{spaCy} and introduced no further complications due to subwords. While in other works of ours, we rigorously mechanisms for converting from subword-level to word-level representations for BERT and other transformers \citep{interpretingpretrainedcwrs}, we chose to avoid this complication in this work as it was not of interest and was merely a confound. \\

\noindent \textbf{Why frozen representations?} 
\citet{to-tune-or-not} provides compelling and comprehensive evidence the frozen representations perform better than fine-tuned representations when using ELMo as the pretrained model. Further, in this work we are interested in integrating our novel word orders with pretrained representations and wanted to isolate this effect from possible further confounds due to the nature of fine-tuning with a different word order from the pretraining word order. \\

\noindent \textbf{Why bidirectional LSTM encoder?} \\
As we discuss in \autoref{subsec:sequentialmodels}, LSTMs have proven to be reliably and performant encoders across a variety of NLP tasks. Further, bidirectional LSTMs have almost uniformly led to further improvemenets, as we discuss in \autoref{subsec:alternative}. In our \texttt{pretrain-permute-finetune} framework, it was necessary to use a task-specific encoder that was not order-agnostic as otherwise the permutations would have no effect.

\noindent \textbf{Why max-pooling?} 
In initial experiments, we found that the pooling decision had an unclear impact on results but that there was marginal evidence to suggest that max-pooling outperformed averaging (which is the only other pooling choice we considered; we did not consider the concatenation of the two as in \citet{ulmfit}). Further, recent work that specifically studies ELMo for text classification with extensive hyperparameter study also uses max-pooling \citep{to-tune-or-not}. Separately, \citet{maxpool} demonstrate that max-pooling may have both practical advantages and theoretical justification. 

\subsection{Hyperparameters}
We use input representations from ELMo that are $2048$ dimensional. We use a single layer  bidirectional LSTM with output dimensionality $h$ and dropout \citep{dropout} is introduced to the classifier input with dropout probability $p$ as form of regularization. The weights of the LSTM and classifier are initialized according to random samples from PyTorch default distribution.\footnote{\url{https://pytorch.org/docs/stable/nn.init.html}} Optimization is done using the Adam optimizer \citep{adam} and the standard cross-entropy loss, which has proven to be a robust pairing of (optimizer, objective) in numerous NLP applications. Optimizer parameters are set using PyTorch defaults.\footnote{\url{https://pytorch.org/docs/stable/optim.html}} Examples are presented in minibatches of size $16$. The stopping condition for training is the model after training for $12$ epochs. We found this threshold after looking at performance across several different epochs (those in $\{3,6,9,12,15\}$). We found that standard early-stopping methods did not reliably lead to improvements, perhaps due to the fact that models converge so quickly (hence early-stopping and a fixed threshold are near-equal). All hyperparameters that were specified above were optimized on the \texttt{SUBJ} dataset using the identity word order ($r_I$) to ensure this baseline was as well-optimized as possible. Parameter choices for $h$ and $p$ were initially optimized over $\{16, 32, 64, 128, 256\} \times \{0.0, 0.02, 0.2\}$ for the \texttt{SUBJ} (one of the easiest datasets) and \texttt{SST-5} (one of the hardest datasets), again using $r_I$ to ensure the optimization was done to favor the baseline. From these 15 candidate hyperparameter settings, the six highest performing were chosen\footnote{We did this initial reduction of the hyperparameter space due to computational constraints. In doing so, we tried to ensure that the settings yielded the strongest possible baselines.} and all results are provided for these settings (for all word orders and all datasets) in \autoref{chapter:appendix-additional-results}. All results reported subsequently are for hyperparameters individually optimized for the (word order, dataset) pair being considered. All further experimental details are deferred to \autoref{appendix:reproducibility-experimental-details}. 

\section{Results and Analysis}\label{sec:results+analysis}
\noindent \textbf{Orders.} We analyze the results for the following eight orders:
\begin{enumerate}
    \item $r_I$ --- Each sentence is ordered as written.
    \item $r_r$ --- Each sentence is ordered according to a linear layout sampled from the uniform distribution over $S_n$.
    \item $r_b$ --- Each sentence is ordered according to the Cuthill-McKee heuristic to minimize bandwidth.
    \item $r_c$ --- Each sentence is ordered according to the algorithm of \citet{cutwidth-nlogn/linear} to minimize cutwidth. The linear layout is constrained to be optimal among all projective orderings.
    \item $r_m$ --- Each sentence is ordered according to the algorithm of \citet{minlaprojective} to minimize the minimum linear arrangement objective. The linear layout is constrained to be optimal among all projective orderings.
    \item $r_{\tilde{b}}$ --- Each sentence is ordered according the \texttt{Transposition Monte Carlo} algorithm to minimize bandwidth.
    \item $r_{\tilde{c}}$ --- Each sentence is ordered according the \texttt{Transposition Monte Carlo} algorithm to minimize cutwidth.
    \item $r_{\tilde{m}}$ --- Each sentence is ordered according the \texttt{Transposition Monte Carlo} algorithm to minimize the minimum linear arrangement objective.
\end{enumerate}
\noindent \textbf{Optimization effects.} Beyond the standard distributional properties of interest in NLP for datasets (\autoref{tab:dataset-standard-statistics}), it is particularly pertinent to consider the effects of our optimization algorithms on the bandwidth, minimum linear arrangement score, and cutwidth across these datasets. We report this in \autoref{tab:dataset-optimization-scores}. \\

\begin{table*}[t]
\centering
\fontsize{6.95}{8}\selectfont{
\begin{tabular}{c|ccc|ccc|ccc|ccc|ccc}
\toprule
& \multicolumn{3}{c|}{\texttt{CR}} & \multicolumn{3}{c|}{\texttt{SUBJ}} & \multicolumn{3}{c|}{\texttt{SST-2}} & \multicolumn{3}{c|}{\texttt{SST-5}} & \multicolumn{3}{c}{\texttt{TREC}}  \\
& B & C & M & B & C & M & B & C & M & B & C & M & B & C & M \\
\midrule
$r_r$         & 16.03 & 10.07 & 146.9 & 20.42 & 13.20 & 221.9 & 15.92 & 10.92 & 153.1 & 15.84 & 10.90 & 151.6 & 7.55  & 6.05  & 39.24  \\  
$r_I$         & 11.52 & 4.86  & 49.87 & 16.12 & 5.49  & 69.52 & 13.16 & 5.03  & 54.29 & 13.04 & 5.02  & 53.84 & 6.87  & 3.95  & 21.99  \\ 
\midrule
$r_b$         & 5.44  & 5.44  & 55.21 & 6.37  & 6.37  & 78.94 & 5.52  & 5.52  & 58.20 & 5.50  & 5.50  & 57.68 & 3.36  & 3.36  & 17.76  \\  
$r_c$         & 6.58  & 3.34  & 34.68 & 8.41  & 3.62  & 46.78 & 6.54  & 3.20  & 35.69 & 6.51  & 3.20  & 35.43 & 3.57  & 2.45  & 14.76  \\ 
$r_m$         & 6.19  & 3.35  & 34.13 & 7.69  & 3.64  & 45.70 & 6.00  & 3.21  & 34.90 & 5.98  & 3.21  & 34.65 & 3.46  & 2.45  & 14.62  \\  
\midrule
$\tilde{r}_b$ & 6.64  & 5.29  & 55.84 & 8.68  & 6.40  & 81.12 & 7.11  & 5.61  & 61.74 & 7.08  & 5.57  & 60.97 & 3.84  & 3.75  & 21.88  \\  
$\tilde{r}_c$ & 10.42 & 4.02  & 47.00 & 14.60 & 4.57  & 66.03 & 11.66 & 4.08  & 50.89 & 6.96  & 4.07  & 50.44 & 3.79  & 3.00  & 19.66  \\ 
$\tilde{r}_m$ & 6.85  & 3.29  & 35.68 & 8.60  & 3.66  & 49.17 & 7.00  & 3.29  & 37.64 & 11.57 & 3.29  & 37.32 & 5.77  & 2.54  & 15.40  \\ 
\bottomrule
\end{tabular}}
\caption{Bandwidth (B), cutwidth (C), and minimum linear arrangement (M) scores for every (dataset, ordering rule) pair considered.}
\label{tab:dataset-optimization-scores}
\end{table*}

We begin by considering the relationship between the random word orders $r_r$ and the standard English word orders $r_I$ (top band of \autoref{tab:dataset-optimization-scores}). In particular, we observe that across all five datasets, standard English substantially optimizes these three costs better than a randomly chosen ordering would. In the case of minimum linear arrangement, this provides further evidence to corpus analyses conducted by \citet{futrell2015}. Similarly, for the bandwidth and cutwidth objectives, this suggests that these objectives at least correlate with costs that humans may optimize for in sentence production and processing. \\

We then consider the optimization using existing algorithms in the literature as compared to standard English and random word orders (top and middle bands of \autoref{tab:dataset-optimization-scores}). We observe that across all datasets, all optimized ordering rules perform random word orders across all objectives. While it is unsurprising that the optimal order for a given objective outperforms the other orders and standard English, we do note that the margins are quite substantial in comparing standard English to each rule. That is to say, standard English can still be substantially further optimized for any of the given orders. Additionally, we consider the scores associated for an ordering rule that are \textit{not} being optimized for. In particular, we see that optimizing for either cutwidth or minimum linear arrangement yields similar scores across all three objectives and all five datasets. We hypothesize this is due to both algorithms have a shared algorithmic subroutine (the disjoint strategy). Optimizing for either order yields substantial improvements over standard English across all three objectives and only marginally underperforms the bandwidth-optimal order $r_b$. On the other hand, we find an interesting empirical property that the cutwidth and bandwidth scores for $r_b$ are consistently equal. This may suggest that this is a theoretical property of the algorithm that be formally proven. Further, $r_b$ generally (except for \texttt{TREC}) yields greater minimum linear arrangements compared to English. \\

\begin{table*}[t]
\centering
\fontsize{6.95}{8}\selectfont{
\begin{tabular}{c|ccc|ccc|ccc|ccc|ccc}
\toprule
& \multicolumn{3}{c|}{\texttt{CR}} & \multicolumn{3}{c|}{\texttt{SUBJ}} & \multicolumn{3}{c|}{\texttt{SST-2}} & \multicolumn{3}{c|}{\texttt{SST-5}} & \multicolumn{3}{c}{\texttt{TREC}}  \\
& B & C & M & B & C & M & B & C & M & B & C & M & B & C & M \\
\midrule
$r_r$         & 16.03 & 10.07 & 146.9 & 20.42 & 13.20 & 221.9 & 15.92 & 10.92 & 153.1 & 15.84 & 10.90 & 151.6 & 7.55  & 6.05  & 39.24  \\  
$r_I$         & 11.52 & 4.86  & 49.87 & 16.12 & 5.49  & 69.52 & 13.16 & 5.03  & 54.29 & 13.04 & 5.02  & 53.84 & 6.87  & 3.95  & 21.99  \\ 
\midrule
$r_b$         & 5.44  & 5.44  & 55.21 & 6.37  & 6.37  & 78.94 & 5.52  & 5.52  & 58.20 & 5.50  & 5.50  & 57.68 & 3.36  & 3.36  & 17.76  \\  
$r_c$         & 6.58  & 3.34  & 34.68 & 8.41  & 3.62  & 46.78 & 6.54  & 3.20  & 35.69 & 6.51  & 3.20  & 35.43 & 3.57  & 2.45  & 14.76  \\ 
$r_m$         & 6.19  & 3.35  & 34.13 & 7.69  & 3.64  & 45.70 & 6.00  & 3.21  & 34.90 & 5.98  & 3.21  & 34.65 & 3.46  & 2.45  & 14.62  \\  
\midrule
$\tilde{r}_b$ & 6.64  & 5.29  & 55.84 & 8.68  & 6.40  & 81.12 & 7.11  & 5.61  & 61.74 & 7.08  & 5.57  & 60.97 & 3.84  & 3.75  & 21.88  \\  
$\tilde{r}_c$ & 10.42 & 4.02  & 47.00 & 14.60 & 4.57  & 66.03 & 11.66 & 4.08  & 50.89 & 6.96  & 4.07  & 50.44 & 3.79  & 3.00  & 19.66  \\ 
$\tilde{r}_m$ & 6.85  & 3.29  & 35.68 & 8.60  & 3.66  & 49.17 & 7.00  & 3.29  & 37.64 & 11.57 & 3.29  & 37.32 & 5.77  & 2.54  & 15.40  \\ 
\bottomrule
\end{tabular}}
\caption{Duplicated from \autoref{tab:dataset-optimization-scores} for convenience. Bandwidth (B), cutwidth (C), and minimum linear arrangement (M) scores for every (dataset, ordering rule) pair considered.}
\label{tab:dataset-optimization-scores-copy}
\end{table*}

Next, we consider the optimization using the algorithms we introduce with the \texttt{Transposition Monte Carlo} method as compared to English and random word orders (top and bottom bands of \autoref{tab:dataset-optimization-scores}). We observe that the same comparison between random word orders and the word orders we introduce to optimize objectives holds as in the case of the algorithms from the prior literature. Further, the relationship between standard English and these heuristic-based word orders mimics the trends between standard English and the word orders derived from prior algorithms. However, we find that the margins are substantially smaller. This is not particularly surprising, as it suggests that our heuristics are less effective at pure optimization than the more well-established algorithms in the literature. \\

When we strictly consider the word orders generated by the heuristics we introduce (bottom band of \autoref{tab:dataset-optimization-scores}), we observe one striking finding immediately. In particular, the cutwidth objective evaluated on the cutwidth-minimizing order $\tilde{r}_c$ is often greater than the same objective evaluated on the minimum linear arrangement-minimizing order $\tilde{r}_m$. A similar result holds between $\tilde{r}_b$ and $\tilde{r}_m$ for the \texttt{SUBJ} and \texttt{SST-2} datasets. What this implies is that the greediness and transposition-based nature of \texttt{Transposition Monte Carlo} may more directly favor optimizing minimum linear arrangement (and that this coincides with minimizing the other objectives). Alternatively, this may suggest that a more principled and nuanced procedure is needed for the stopping parameter $T$ in the algorithm. \\

Finally, we discuss the relationship between the algorithms given in prior work (and the corresponding word orders) and the algorithms we present under the \texttt{Transposition Monte Carlo framework} (and the corresponding word orders). Here, we compare the middle and bottom bands of \autoref{tab:dataset-optimization-scores}. As observed previously, we see that for the score being optimized, the corresponding word order based on an algorithm from the literature outperforms the corresponding word order based on an algorithm we introduce. More broadly, we see the quality of the optimization across almost all objectives and datasets for all ordering rule pairs $r_k, \tilde{r}_k$ for $k \in \{b,c,m\}$ is better for $r_k$ than $\tilde{r}_k$. This is consistent with our intuition previously --- our heuristics sacrifice the extent to which they purely optimize the objective being considered to retain aspects of the original linear layout $\pi_I$. In the analysis of downstream results, we will address whether this compromise provides any benefits in modelling natural language computationally.  

\begin{table*}[t]
\centering
\large{
\begin{tabular}{cccccc}
\toprule
& \texttt{CR} & \texttt{SUBJ} & \texttt{SST-2} & \texttt{SST-5} & \texttt{TREC} \\
\midrule
$r_I$ & 0.852 & 0.955 & \textbf{0.896} & 0.485 & 0.962 \\
$r_r$ & 0.842 & 0.95 & 0.877 & 0.476 & 0.954 \\
\midrule
$r_b$ & {\color{magenta} \textit{0.854}} & 0.952 & 0.873 & 0.481 & {\color{magenta} \textit{0.966}} \\
$r_c$ & \textbf{0.86} & 0.953 & 0.874 & 0.481 & 0.958 \\
$r_m$ & 0.841 & 0.951 & 0.874 & 0.482 & {\color{magenta} \textit{0.962}} \\
\midrule
$\tilde{r}_b$ & {\color{magenta} \textit{0.852}} & 0.949 & 0.882 & 0.478 & 0.956 \\
$\tilde{r}_c$ & 0.849 & {\color{magenta} \textit{0.956}} & 0.875 & \textbf{0.494} & \textbf{0.968} \\
$\tilde{r}_m$ & 0.844 & \textbf{0.958} & 0.876 & 0.476 & {\color{magenta} \textit{0.962}} \\
\bottomrule
\end{tabular}}
\caption{Full classification results where the result reported is the max across hyperparameter settings. Results use \texttt{pretrain-permute-finetune} framework with the order specified in each row. All other hyperparameters are set as described previously. The top part of the table refers to baselines. The middle part of the table refers to orders derived from pure optimization algorithms. The bottom part of the table refers to orders derived from heuristic algorithms we introduce using \texttt{Transposition Monte Carlo}. The best performing ordering rule for a given dataset is indicated in \textbf{bold}. Any ordering rule (that is neither the best-performing order rule nor $r_I$) that performs at least as well as $r_I$ for a given dataset is indicated in {\color{magenta} \textit{italicized magenta}}. }
\label{tab:main-results}
\end{table*}

\noindent \textbf{Downstream Performance.} 
Given the framing and backdrop we have developed thus far, we next consider the downstream effects of our novel word orders. In \autoref{tab:main-results}, we report these results as well as the results for the random word order baseline $r_r$ and the standard English word order $r_I$. In particular, recall that the $r_I$ performance is reflective of the performance of the state-of-the-art paradigm in general: \texttt{pretrain-and-finetune}. \\

Each entry reflects the optimal choice of hyperparameters (among those we considered) for the corresponding model on the corresponding dataset (based on validation set performance). In \autoref{chapter:appendix-additional-results}, we provide additional results for all hyperparameter settings we studied\footnote{These results appear in Tables~\ref{tab:pool=twelve-p=0.0-h=32}--\ref{tab:pool=twelve-p=0.2-h=256}.} as well as results for all hyperparameters with the stopping condition of training for $15$ epochs (we observed no sustained improvements for any model using any order on any dataset beyond this threshold).\footnote{These results appear in Tables~\ref{tab:pool=fifteen-p=0.0-h=32}--\ref{tab:pool=fifteen-p=0.2-h=256}.}  \\

We begin by considering the comparison between the random word order $r_r$ and the standard English word order $r_I$ (top band of \autoref{tab:main-results}). Note that for $r_r$, we are using a bidirectional LSTM in the task-specific modelling as a type of set encoder. While previous works such as \citet{bommasani-sparse} have also used RNN variants in order-invariant settings, this is fairly nonstandard and requires that the model learns permutation equivariance given that the order bears no information. Unsurprisingly, we see that $r_I$ outperforms $r_r$ across all datasets. However, the margin is fairly small for all five datasets. From this, we can begin by noting that ELMo already is a powerful pretrained encoder and much of the task-specific modelling could have just been achieved by a shallow classifier on top of the ELMo representations. Further, we note that this attunes us to a margin that is fairly significant (the difference between whatever can be gained from the English word order and the pretrained contextualized word representations versus what can be gleaned only from the pretrained contextualized word representations). \\

\begin{table*}[t]
\centering
\large{
\begin{tabular}{cccccc}
\toprule
& \texttt{CR} & \texttt{SUBJ} & \texttt{SST-2} & \texttt{SST-5} & \texttt{TREC} \\
\midrule
$r_I$ & 0.852 & 0.955 & \textbf{0.896} & 0.485 & 0.962 \\
$r_r$ & 0.842 & 0.95 & 0.877 & 0.476 & 0.954 \\
\midrule
$r_b$ & {\color{magenta} \textit{0.854}} & 0.952 & 0.873 & 0.481 & {\color{magenta} \textit{0.966}} \\
$r_c$ & \textbf{0.86} & 0.953 & 0.874 & 0.481 & 0.958 \\
$r_m$ & 0.841 & 0.951 & 0.874 & 0.482 & {\color{magenta} \textit{0.962}} \\
\midrule
$\tilde{r}_b$ & {\color{magenta} \textit{0.852}} & 0.949 & 0.882 & 0.478 & 0.956 \\
$\tilde{r}_c$ & 0.849 & {\color{magenta} \textit{0.956}} & 0.875 & \textbf{0.494} & \textbf{0.968} \\
$\tilde{r}_m$ & 0.844 & \textbf{0.958} & 0.876 & 0.476 & {\color{magenta} \textit{0.962}} \\
\bottomrule
\end{tabular}}
\caption{Duplicated from \autoref{tab:main-results} for convenience. Full classification results where the result reported is the max across hyperparameter settings. Results use \texttt{pretrain-permute-finetune} framework with the order specified in each row. All other hyperparameters are set as described previously. The top part of the table refers to baselines. The middle part of the table refers to orders derived from pure optimization algorithms. The bottom part of the table refers to orders derived from heuristic algorithms we introduce using \texttt{Transposition Monte Carlo}. The best performing ordering rule for a given dataset is indicated in \textbf{bold}. Any ordering rule (that is neither the best-performing order rule nor $r_I$) that performs at least as well as $r_I$ for a given dataset is indicated in {\color{magenta} \textit{italicized magenta}}. }
\label{tab:main-results-copy}
\end{table*}

Next, we consider the comparison of the word orders derived via combinatorial optimization algorithms in the literature and the baselines of standard English and the randomized word order (top and middle bands of \autoref{tab:main-results}). We begin by noting that the optimized word orders \textit{do not} always outperform $r_r$. This is most salient in looking at the results for \texttt{SST-2}. While the margin is exceedingly small when the optimized word orders underperform against $r_r$, this suggests that the lack of systematicity or regularity in the input (perhaps due to the optimization being underdetermined) makes learning from the word order difficult. While none of $r_b, r_c, r_m$ consistently perform as well or better than $r_I$ ($r_b$ performs slightly better on two datasets, slightly worse on two datasets, and substantially worse on \texttt{SST-2} as, arguably, the best-performing of the three), there are multiple instances where, for specific datasets (\texttt{CR} and \texttt{TREC}), they do outperform $r_I$. This alone may suggest that a more careful design of ordering rules could yield improvements. Again, recall this is a highly controlled comparison with something that is a reasonable stand-in for the state-of-the-art regime of \texttt{pretrain-and-finetune}. \\

Looking at the comparison between the word orders derived via algorithms from the literature and their analogues derived from \texttt{Transposition Monte Carlo} variants (middle and bottom bands of \autoref{tab:main-results-copy}), we observe that neither set of orders strictly dominates the other. However, on four of the five datasets (all but \texttt{CR}), the best-performing order is one produced by \texttt{Transposition Monte Carlo}. This provides evidence to the claim that the sole goal in constructing order rules for downstream NLP of pure combinatorial optimization is insufficient and arguably naive. Instead, attempting a balance between retaining information encoded in the original linear layout $\pi_I$ while performing some reduction of dependency-related costs may be beneficial. We revisit this in \autoref{sec:futuredirections}. \\

Finally, we compare the word orders produced using our heuristic to the baselines (top and bottom bands of \autoref{tab:main-results-copy}). We begin by observing that in this case, almost every ordering rule outperforms the random word order (the sole exception being $\tilde{r}_b$ for \texttt{SUBJ} by a margin of just $0.001$). Further, while no single ordering rule reliably outperforms $r_I$ ($\tilde{r}_c$ significantly outperforms $r_I$ on two datasets, is significantly outperformed on one dataset, performs marginally better on one dataset, and performs marginally worse on dataset), we do find a word order that outperforms $r_I$ among $\tilde{r}_b, \tilde{r}_c, \tilde{r}_m$ in four of the five datasets (the exception being \texttt{SST-2}). In fact, the majority (i.e.~on three of the five datasets) of the best-performing ordering rules of all considered are word orders produced using \texttt{Transposition Monte Carlo}. This is additional evidence to suggest the merits of novel ordering rules (that are not standard English) for downstream NLP and that their design likely should try to maintain some of the information-encoding properties of standard English. 

\chapter{Conclusions}\label{chapter:conclusions}
To conclude, in this chapter, we review the findings presented. We further provide the open problems we find most pressing, the future directions we find most interesting, and the inherent limitations we find most important. 
\section{Summary} \label{sec:summary}
In this thesis, we have presented a discussion of the current understanding of word order as it pertains to human language. Dually, we provide a description of the set of approaches that have been taken towards modelling word order in computational models of language. \\

\noindent We then focus on how psycholinguistic approaches to word order can be generalized and formalized under a rich algorithmic framework. In particular, we concentrate on word orders that can be understood in terms of linear layout optimization using dependency parses as a scaffold that specifies the underlying graph structure. We describe existing algorithms for the \textsc{bandwidth}, \textsc{minLA}, and \textsc{cutwidth} problems with a special interest on algorithms when projectivity constraints are imposed. In doing so, we show that the algorithms of \citet{minlaprojective} and \citet{cutwidth-nlogn/linear} can be interpreted on a shared framework. Further, we correct some nuances in the algorithm and analysis of \citet{minlaprojective}. As a taste of how algorithmic interpretation could be used to reinterpret existing psycholinguistic approaches, we also prove \autoref{thm:minLA-two-interpretations}. This result establishes a connection between capacity and memory constraints that was previously unknown in the psycholinguistics literature (to the author's knowledge). To accompany these provable results, we also introduce  simple heuristic algorithms (i.e.~\texttt{Transposition Monte Carlo} algorithms for each of the three objectives we considered). These algorithms provide a mechanism for balancing pure algorithmic optimization with the additional constraints that might be of interest in language (e.g~preserving other types of information encoded in word order). \\

\noindent With these disparate foundations established, the last portion of this thesis discusses the relationship between novel/alternative word orders and computational models of language. We introduce the \texttt{pretrain-permute-finetune} framework to seamlessly integrate our novel orders into the \textit{de facto} \texttt{pretrain-and-finetune} paradigm. In particular, our approach incurs a near-trivial one-time cost and makes no further changes to the pretraining process, training and inference costs (both with respect to time and memory), or model architecture. We then examine the empirical merits of our method on several tasks (and across several hyperparameter settings), showing that our heuristic algorithms may sometimes outperform their provable brethren and that, more broadly, novel word orders can outperform using standard English. In doing so, we establish the grounds for considering the utility of this approach for other languages, for other tasks, and for other models.

\section{Open Problems} \label{sec:openproblems}
\noindent \textbf{Weighted optimization.} 
In this work, we consider orders specified based on objectives evaluated over the dependency parse $\mathcal{G} = (\mathcal{V}, \mathcal{E})$. In representing the parse in this way, we ignore the labels and directionality of the edges.\footnote{Recall how we defined $\mathcal{E}$ as the unlabelled and undirected version of the true edge set of the dependency parse, $\mathcal{E}_\ell$.} While using a bidirectional encoder in the finetuning step of the \texttt{pretrain-permute-finetune} framework may alleviate concerns with directionality, we are ultimately neglecting information made available in the dependency parse. We take this step in our approach since there are known algorithms for the unweighted and undirected version of the combinatorial optimization problems we study. Nonetheless, studying provable and heuristic algorithms for the weighted version of these problems may be of theoretical interest and of empirical value. In particular, this would allow one to specify a weighting on edges dependent on their dependency relation types, which may better represent how certain dependencies are more or less important for downstream modelling. \\

\noindent \textbf{Alternative scaffolds.} In this work, we use dependency parses as the syntactic formalism for guiding the optimization. While this choice was well-motivated given the longstanding literature on dependency locality effects and that the linear tree structure of dependency parses facilitates algorithmic framing in the language of linear layouts, alternative syntactic structures such as constituency trees or semantic formalisms such as AMR semantic parses are also interesting to consider. This yields natural open questions of what appropriate scaffolds are, what resulting objectives and algorithmic approaches are appropriate, and how both the linguistic and algorithmic primitive interact with the downstream NLP task. Further inquiry might also consider the merits of combining multiple approaches given the benefits of joint and multi-task learning and fundamental questions regarding what one linguistic representation may provide that is unavailable from others.  \\

\noindent \textbf{Information in Order.} A central scientific question in studying human language is understanding what information is encoded in word order and how this information is organized. In this work, given that we argue for the merits of representing sentences using two distinct orders (one during the embedding stage and one during the fine-tuning stage), this question is of added value. In particular, a formal theory of what information is available with one order and another (from an information theoretic perspective) might be of value. Arguable an even more interesting question revolves around considering the ease of extracting the information given one ordering or another. Such a framing of orders facilitating the ease of extraction of information (from computational agents with bounded capacities, which is quite different from standard information theory) might thereafter induce a natural optimization-based approach to studying order --- select the order that maximizes the eases of extraction of certain information that is of interest.

\section{Future Directions} \label{sec:futuredirections}
In the previous section, we state open problems that are of interest and merit study. In this section, we provide more concrete future directions and initial perspectives on how to operationalize these directions. \\

\noindent \textbf{End-to-end permutation learning.} In this work, we generate novel word orders that are task-agnostic and that optimizes objectives that are not directly related with downstream performance. While there are fundamental questions about the relationships between discrete algorithmic ideas, classical combinatorial optimization, and modern deep learning, our approach runs stylistically counter to the dominant paradigms of NLP at present. In particular, the notion of attention emerged in machine translation as a soft way of aligning source and target and has proven to empirically more effective than harder alignments. Dually, such end-to-end optimization of attention weights implies that attention can be optimized in a task-specific way, which likely implies better performance. For these reason, studying end-to-end methods for (differentiably) finding orders may be particularly attractive as this would imply the permutation component could be dropped into existing end-to-end differentiable architecture. While reinforcement learning approaches to permutation generation/re-ordering may be natural, we believe a promising future direction would be direct learning of permutations. In this sense, a model would learn permutations of the input that correspond with improved downstream performance. 

At first glance, such optimization of permutations seems hard to imagine given that permutations are sparse/discrete and therefore unnatural for differentiable optimization. A recent line of work in the computer vision and machine learning communities has proposed methods towards differentiable/gradient-based optimization \citep{visual-permutation, latent-permutations}. In particular, many of these methods consider a generalization of the family of permutation matrices to the family of double stochastic matrices (DSMs). Importantly, DSMs specify a polytope (the Birkhoff polytope) which can be reached in a differentiable way via Sinkhorn iterations \citep{sinkhorn1, sinkhorn2}. Given these works, a natural question is whether these methods can be extended to sequential data (which are not of a fixed length, unlike image data), perhaps with padding as is done in Transformers, and what that might suggest for downstream NLP performance. \\

\noindent \textbf{Additional constraints to induce well-posed optimization.} In our approach, the solution to the optimization is not necessarily unique. In particular, for all three objectives, the reverse of an optimal sequence is also optimal. Further, \citet{minlaprojective} claim (without proof) that there can be $2^{\frac{n}{2}}$ optimal solutions for the optimization problem they study. Given that there may be many optima and potentially exponentially many, downstream models are faced with uncertainty about the structure of the input for any given input. In particular, the word orders we are considering may lack regularity and systematicity properties that are found in natural languages, though the structure of the disjoint strategy may help alleviate this.\footnote{They also may lack overt linear proxies for compositionality to the extent found in the corresponding natural language.} Developing additional constraints to further regularize the optima may benefit modelling as different examples will have more standardized formats. Given the observations we discuss in \autoref{sec:languageuniversals} regarding harmonic word orders improving language acquisition for child language learners (by appealing to their inductive biases), such as those of \citet{culbertson2017}, enforcing word order harmonies globally may be a natural means for increases regularity and systematicity in the resultant word orders. Similarly, such approaches may provide additional benefits when extended to the (more general) setting where inputs (may) contain multiple sentences.  \\

\noindent \textbf{Information locality.} In  \autoref{subsec:joint-theories}, we discuss the recent proposal for information locality as a generalization of dependency locality. As syntactic dependencies are clearly only a subset of the linguistic information of interest for a downstream model, a richer theory of locality effects and a broad classes of information may prove fruitful both in study human language processing and in motivating further novel orders. More broadly, we specifically point to the work of \citep{information-theory-computational-constraints} which introduces the $\mathcal{V}$-information theory. In particular, their work builds the standard preliminaries and primitives of Shannon's information theory \citep{informationtheory} with the notion of computational constraints. While they envision this in the context of motivating modern representation learning, we believe this may also be better theoretical machinery for information theoretic treatments of human language processing. After all, humans also have severely bounded processing capabilities. \\

\noindent \textbf{Understanding impact of novel orders.} In this work, we put forth a rigorous and highly controlled empirical evaluation of the word orders we introduce. However, our results and analysis do not explain \textit{why} and \textit{how} the novel word orders lead to improved performance. In particular, a hypothesis that serves as an undercurrent for this work is that the novel word orders we introduce \textit{simplify} computational processing by reducing the need for modelling long-distance dependencies and/or many dependencies simultaneously. Future work that executes a more thorough and nuanced analysis of model behavior and error patterns would help to validate the extent to which this hypothesis is correct. Additionally, it would help to disambiguate what modelling improvements that the work yields coincide with other modelling techniques and what improvements are entirely orthogonal to suggest how this work can integrate with other aspects of the vast literature on computational modelling in NLP.  \\

\noindent \textbf{Broader NLP evaluation.} Perhaps the most obvious future direction is to consider a broader evaluation of our framework. In particular, evaluating across pretraining architectures, fine-tuning architectures, datasets, tasks, and languages are all worthwhile for establishing the boundaries of this method's viability and utility. Additionally, such evaluations may help to further refine the design of novel word orders and suggest more nuanced procedures for integrating them (especially in settings where the dominant paradigm itself of \texttt{pretrain-and-finetune} is insufficient). Additionally, we posit that there may be natural interplay with our work and work on cross-lingual and multi-lingual methods that may especially merit concentrated study. Concretely, one could imagine applying the same ordering rule cross-linguistically and therefore normalizing some of the differences across languages at the input level. Consequently, there may be potential for this to improve alignment (e.g.~between word embeddings, between pretrained encoders) for different language or better enable cross-lingual transfer/multi-lingual representation learning.  
\section{Consequences}
Previously, we have summarized our contributions (\autoref{sec:summary}). In this section, we instead provide more abstract lessons or suggestive takeaways from this work.\\

\noindent \textbf{Word order in NLP.} In this work, we clearly substantiate that there is tremendous scope for improving how word order is considered within NLP and explicitly illuminate well-motivated directions for further empirical work. While our empirical results legitimately and resolutely confirm that theoretically-grounded algorithmic optimization may coincide with empirical NLP improvements, it remains open to what extent this is pervasive across NLP tasks and domains. \\

\noindent \textbf{Generalizing measures of language (sub)optimality.} In this work, we put forth alternative objectives beyond those generally considered in the dependency length minimization literature that may merit further psycholinguistic inquiry. In particular, by adopting the generality of an algorithmic framework, one can naturally pose many objectives that intuitively may align with human optimization in language production and comprehension. Further, in this work we clarify the extent to which human language is optimal and we note that the suboptimality of human language may be equally useful for understanding language's properties as its optimality. \\

\noindent \textbf{Interlacing psycholinguistic or algorithmic methods with NLP.} In this work, we present work that is uniquely at the triple intersection of psycholinguistics, algorithms, and NLP. While such overtly interdisciplinary work is of interest, we also argue that there is great value in considering the interplay between psycholinguistics and NLP or between algorithms and NLP, if not all three simultaneously. There is a long-standing tradition of connecting algorithms/computational theory with computational linguistics and NLP in the study of grammars and parsing \citep{chomsky1957syntactic-structures, chomskysyntax, kay-1967-experiments, earley-parser, joshi75, charniak1983parser, pereira-warren-1983-parsing, kay1986parsing, steedman87, kay-1989-head, eisner-1996-three, collins-1996-new, collins-1997-three, charniak-etal-1998-edge, gildea2002, collins-2003-head, klein-manning-2003-accurate, klein-manning-2003-parsing, steedman2004, mcdonald-etal-2005-non, mcdonald-pereira-2006-online, mitkov2012, chomsky2014minimalist, chomsky2014aspects}. Trailblazers of the field, such as Noam Chomsky and the recently-passed Arvind Joshi made great contributions in this line. Similarly, there are has been a recent uptick in the borrowing of methods from psycholinguistics to rigorously study human language processing\footnote{The first blackbox language learner we have tried to understand.} to interpret and understand neural models of language \citep{linzen-etal-2016-assessing, gulordava-etal-2018-colorless, van-schijndel-linzen-2018-neural, wilcox-etal-2018-rnn, marvin-linzen-2018-targeted, ravfogel-etal-2019-studying, van-schijndel-etal-2019-quantity, mccoy-etal-2019-right, futrell-levy-2019-rnns, wilcox-etal-2019-hierarchical, futrell-etal-2019-neural, van-schijndel-linzen-2019-entropy, prasad-etal-2019-using, ettinger2020, davis2020}.\footnote{The second blackbox language learner we have tried to understand.} And once again, another seminal mind of the field, Ron Kaplan, spoke to precisely this point in his keynote talk "\textit{Computational Psycholinguistics}" \citep{kaplan2019} at ACL 2019 as he received the most recent \href{https://www.aclweb.org/portal/content/announcement-2019-acl-lifetime-achievement-award-lta}{ACL Lifetime Achievement Award}. However, in the present time, beyond parsing and interpretability research, we see less interplay between either algorithms or psycholinguistics and NLP. Perhaps this thesis may serve as an implicit hortative to reconsider this.

\section{Limitations} \label{sec:limitations}
In the spirit of diligent science, we enumerate the limitations we are aware of with this work. We attempt to state these limitations fairly without trying to undercut or lessen their severity. We also note that we have considered the ethical ramifications of this work\footnote{This was inspired by the NeurIPS 2020 authorship guidelines, which require authors to consider the social and ethical impacts of their work.} and remark that we did not come to find any ethical concerns.\footnote{While not an ethical concern of the research, we do note that all figures and tables in this work are designed to be fully colorblind-friendly. In particular, while many figures/tables use color, they can also be interpreted using non-color markers and these markers are referenced in the corresponding caption.} \\

\noindent \textbf{Dependence on Dependencies.} In this work, we generate novel word orders contingent on access to a dependency parse. Therefore, the generality of our method is dependent on access to such a parse. Further, since we exclusively consider English dependency parsers, which are substantially higher quality than dependency parsers for most languages and especially low-resource languages, and datasets which are drawn from similar data distributions as the training data of the dependency parse, it is unclear how well are model with perform in setting where weaker dependency parsers are available or there are domain-adaptation concerns. \\

\noindent \textbf{Rigid Optimization.} By design, our approach is to purely optimize an objective pertaining to dependency locality. While the heuristic algorithms we introduce under the \texttt{Transposition Monte Carlo} framework afford some consideration of negotiating optimization quality with retention of the original sentence's order (through the parameter $T$), our methods provide no natural recipe for integration of arbitrary metadata or auxiliary information/domain knowledge. \\

\noindent \textbf{Evaluation Settings.} As we note in \autoref{sec:experimental-conditions}, we elect to evaluate on single-sentence text classification datasets. While we do provide valid and legitimate reasons for this decision, it does imply that the effectiveness of our methods for other tasks and task types (e.g.~sequence labelling, natural language generation, question answering) is left unaddressed. Further, this means that the difficulties of intra-example irregularity across sentences (due to optimization not being sufficiently determined/constrained) are not considered. As a more general consideration, \citet{linzen2020} argues that the evaluation protocol used in this work may not be sufficient for reaching the desired scientific conclusions regarding word order. \\

\noindent \textbf{Alternative Orders in Pretraining.} 
Due to computational constraints, we only study introducing permutations/re-orderings we generate after embedding using a pretrained encoder. However, a more natural (and computationally intensive/environmentally detrimental \citep{environment}) approach that may offer additional value is to use the orders during pretraining. As such, our work does not make clear that \texttt{pretrain-permute-finetune} is the best way to introduce our permutations into the standard \texttt{pretrain-and-finetune} regime and fails to consider an "obvious" alternative. \\

\noindent \textbf{English Language Processing.} In studying the effects of our orders empirically, we only use English language data.\footnote{In this work, we did not reconsider the source of the pretraining data for ELMo or the datasets we used in evaluating our method. Therefore we are not certain, but it seems likely that data mainly represents Standard American English and not other variants of English such as African American Vernacular English.} Several works have shown that performance on English data is neither inherently representative nor likely to be representative of performance in language processing across the many languages of the world \citep{bender2009, bender2011, bender2012, joshi2020}. Further, recent works have shown that English may actually be favorable for approaches that make use of LSTMs and RNNs \citep{dyer2019, davis2020}, which are models we explicitly use in this work. And finally, the entire premise of dependency locality heavily hinges on typological features of the language being studied.\footnote{This can be inferred from our discussion of dependency locality and word ordering effects in various natural languages in \autoref{chapter:wordorderinhlp}; a particularly salient example is that the notion of dependency locality is quite different in morphologically-rich languages as information is often conveyed through morphological markers and intra-word units rather than word order.} Therefore, the results of this work are severely limited in terms of claims that can be made for natural languages that are not English.   
\bibliography{main}

\begin{thebibliography}{434}
\expandafter\ifx\csname natexlab\endcsname\relax\def\natexlab#1{#1}\fi

\bibitem[{Adolphson and Hu(1973)}]{minlavlsi}
Donald~L. Adolphson and T.~C. Hu. 1973.
\newblock Optimal linear ordering.
\newblock \emph{SIAM Journal on Applied Mathematics}, 25(3):403--423.

\bibitem[{Aggarwal and Zhai(2012)}]{textclassificationsurvey4}
Charu~C. Aggarwal and ChengXiang Zhai. 2012.
\newblock A survey of text classification algorithms.
\newblock In \emph{Mining text data}, pages 163--222. Springer.

\bibitem[{Alemi et~al.(2017)Alemi, Fischer, Dillon, and Murphy}]{alemi2017}
Alexander~A. Alemi, Ian Fischer, Joshua~V. Dillon, and Kevin Murphy. 2017.
\newblock \href {https://openreview.net/forum?id=HyxQzBceg} {Deep variational
  information bottleneck}.
\newblock In \emph{International Conference on Learning Representations}.

\bibitem[{Alghamdi and Alfalqi(2015)}]{topicmodelsurvey1}
Rubayyi Alghamdi and Khalid Alfalqi. 2015.
\newblock \href {https://doi.org/10.14569/IJACSA.2015.060121} {A survey of
  topic modeling in text mining}.
\newblock \emph{International Journal of Advanced Computer Science and
  Applications}, 6.

\bibitem[{Almarwani et~al.(2019)Almarwani, Aldarmaki, and
  Diab}]{discretecosine}
Nada Almarwani, Hanan Aldarmaki, and Mona Diab. 2019.
\newblock \href {https://doi.org/10.18653/v1/D19-1380} {Efficient sentence
  embedding using discrete cosine transform}.
\newblock In \emph{Proceedings of the 2019 Conference on Empirical Methods in
  Natural Language Processing and the 9th International Joint Conference on
  Natural Language Processing (EMNLP-IJCNLP)}, pages 3672--3678, Hong Kong,
  China. Association for Computational Linguistics.

\bibitem[{Anandkumar et~al.(2012)Anandkumar, Foster, Hsu, Kakade, and
  Liu}]{ldaspectralmoments}
Anima Anandkumar, Dean~P. Foster, Daniel~J. Hsu, Sham~M. Kakade, and Yi-Kai
  Liu. 2012.
\newblock A spectral algorithm for latent dirichlet allocation.
\newblock In \emph{Advances in Neural Information Processing Systems}, pages
  917--925.

\bibitem[{Andersen et~al.(1992)Andersen, Hayes, Weinstein, Huettner, Schmandt,
  and Nirenburg}]{andersen1992}
Peggy~M. Andersen, Philip~J. Hayes, Steven~P. Weinstein, Alison~K. Huettner,
  Linda~M. Schmandt, and Irene~B. Nirenburg. 1992.
\newblock \href {https://doi.org/10.3115/974499.974531} {Automatic extraction
  of facts from press releases to generate news stories}.
\newblock In \emph{Third Conference on Applied Natural Language Processing},
  pages 170--177, Trento, Italy. Association for Computational Linguistics.

\bibitem[{Arora et~al.(2013)Arora, Ge, Halpern, Mimno, Moitra, Sontag, Wu, and
  Zhu}]{ldaspectralanchor2}
Sanjeev Arora, Rong Ge, Yonatan Halpern, David Mimno, Ankur Moitra, David
  Sontag, Yichen Wu, and Michael Zhu. 2013.
\newblock A practical algorithm for topic modeling with provable guarantees.
\newblock In \emph{International Conference on Machine Learning}, pages
  280--288.

\bibitem[{Arora et~al.(2012)Arora, Ge, and Moitra}]{ldaspectralanchor1}
Sanjeev Arora, Rong Ge, and Ankur Moitra. 2012.
\newblock Learning topic models--going beyond svd.
\newblock In \emph{2012 IEEE 53rd Annual Symposium on Foundations of Computer
  Science}, pages 1--10. IEEE.

\bibitem[{Arora et~al.(2016)Arora, Li, Liang, Ma, and Risteski}]{arora2016}
Sanjeev Arora, Yuanzhi Li, Yingyu Liang, Tengyu Ma, and Andrej Risteski. 2016.
\newblock \href {https://doi.org/10.1162/tacl_a_00106} {A latent variable model
  approach to {PMI}-based word embeddings}.
\newblock \emph{Transactions of the Association for Computational Linguistics},
  4:385--399.

\bibitem[{Arora et~al.(2017)Arora, Liang, and Ma}]{sif}
Sanjeev Arora, Yingyu Liang, and Tengyu Ma. 2017.
\newblock \href {https://openreview.net/forum?id=SyK00v5xx} {A simple but
  tough-to-beat baseline for sentence embeddings}.
\newblock In \emph{International Conference on Learning Representations}.

\bibitem[{{Asur} and {Huberman}(2010)}]{asur2010}
Sitaram {Asur} and Bernardo~A. {Huberman}. 2010.
\newblock Predicting the future with social media.
\newblock In \emph{2010 IEEE/WIC/ACM International Conference on Web
  Intelligence and Intelligent Agent Technology}, volume~1, pages 492--499.

\bibitem[{{Azad} et~al.(2017){Azad}, {Jacquelin}, {Bulu\c{c}}, and
  {Ng}}]{reverse-cuthill-mckee-practice}
Ariful {Azad}, Mathias {Jacquelin}, Aydin {Bulu\c{c}}, and Esmond~G. {Ng}.
  2017.
\newblock The reverse cuthill-mckee algorithm in distributed-memory.
\newblock In \emph{2017 IEEE International Parallel and Distributed Processing
  Symposium (IPDPS)}, pages 22--31.

\bibitem[{Bahdanau et~al.(2015)Bahdanau, Cho, and Bengio}]{additiveattention}
Dzmitry Bahdanau, Kyunghyun Cho, and Yoshua Bengio. 2015.
\newblock \href {https://arxiv.org/abs/1409.0473} {Neural machine translation
  by jointly learning to align and translate}.
\newblock In \emph{International Conference on Learning Representations}.

\bibitem[{Baroni et~al.(2014)Baroni, Dinu, and Kruszewski}]{predict>count}
Marco Baroni, Georgiana Dinu, and Germ{\'a}n Kruszewski. 2014.
\newblock \href {https://doi.org/10.3115/v1/P14-1023} {Don{'}t count, predict!
  a systematic comparison of context-counting vs. context-predicting semantic
  vectors}.
\newblock In \emph{Proceedings of the 52nd Annual Meeting of the Association
  for Computational Linguistics (Volume 1: Long Papers)}, pages 238--247,
  Baltimore, Maryland. Association for Computational Linguistics.

\bibitem[{Bartek et~al.(2011)Bartek, Lewis, Vasishth, and Smith}]{bartek2011}
Brian Bartek, Richard~L Lewis, Shravan Vasishth, and Mason~R. Smith. 2011.
\newblock In search of on-line locality effects in sentence comprehension.
\newblock \emph{Journal of experimental psychology. Learning, memory, and
  cognition}, 37 5:1178--98.

\bibitem[{Bautin et~al.(2008)Bautin, Vijayarenu, and
  Skiena}]{bautin2008international}
Mikhael Bautin, Lohit Vijayarenu, and Steven Skiena. 2008.
\newblock International sentiment analysis for news and blogs.
\newblock pages 19--26.

\bibitem[{Behaghel(1932)}]{behaghel1932}
Otto Behaghel. 1932.
\newblock \href {https://books.google.com/books?id=hcr8ygEACAAJ}
  {\emph{Deutsche Syntax: eine geschichtliche Darstellung : Wortstellung ;
  Periodenbau}}.
\newblock v. 4.

\bibitem[{Bender(2009)}]{bender2009}
Emily~M. Bender. 2009.
\newblock \href {https://www.aclweb.org/anthology/W09-0106} {Linguistically
  na{\"\i}ve != language independent: Why {NLP} needs linguistic typology}.
\newblock In \emph{Proceedings of the {EACL} 2009 Workshop on the Interaction
  between Linguistics and Computational Linguistics: Virtuous, Vicious or
  Vacuous?}, pages 26--32, Athens, Greece. Association for Computational
  Linguistics.

\bibitem[{Bender(2011)}]{bender2011}
Emily~M. Bender. 2011.
\newblock On achieving and evaluating language-independence in nlp.
\newblock \emph{Linguistic Issues in Language Technology}, 6.

\bibitem[{Bender(2012)}]{bender2012}
Emily~M. Bender. 2012.
\newblock \href {https://www.aclweb.org/anthology/N12-4001} {100 things you
  always wanted to know about linguistics but were afraid to ask*}.
\newblock In \emph{Tutorial Abstracts at the Conference of the North {A}merican
  Chapter of the Association for Computational Linguistics: Human Language
  Technologies}, Montr{\'e}al, Canada. Association for Computational
  Linguistics.

\bibitem[{Bengio et~al.(2003)Bengio, Ducharme, Vincent, and
  Janvin}]{bengio2003}
Yoshua Bengio, R{\'e}jean Ducharme, Pascal Vincent, and Christian Janvin. 2003.
\newblock \href {http://dl.acm.org/citation.cfm?id=944919.944966} {A neural
  probabilistic language model}.
\newblock \emph{J. Mach. Learn. Res.}, 3:1137--1155.

\bibitem[{Berger et~al.(1996)Berger, Pietra, and Pietra}]{maxent}
Adam~L. Berger, Vincent J.~Della Pietra, and Stephen A.~Della Pietra. 1996.
\newblock A maximum entropy approach to natural language processing.
\newblock \emph{Computational linguistics}, 22(1):39--71.

\bibitem[{Bishop(2006)}]{machinelearningtextbook2}
Christopher~M. Bishop. 2006.
\newblock \emph{Pattern Recognition and Machine Learning (Information Science
  and Statistics)}.
\newblock Springer-Verlag, Berlin, Heidelberg.

\bibitem[{Blache et~al.(1997)Blache, Karpinski, and
  Wirtgen}]{bandwidthnoapprox}
Gunter Blache, Marek Karpinski, and J\"urgen Wirtgen. 1997.
\newblock On approximation intractability of the bandwidth problem.
\newblock Technical report.

\bibitem[{Blei et~al.(2003)Blei, Ng, and Jordan}]{lda}
David~M. Blei, Andrew~Y. Ng, and Michael~I. Jordan. 2003.
\newblock Latent dirichlet allocation.
\newblock \emph{Journal of machine Learning research}, 3(Jan):993--1022.

\bibitem[{Bloomfield(1933)}]{bloomfieldsyntax}
Leonard Bloomfield. 1933.
\newblock \emph{Language}.
\newblock The University of Chicago Press.

\bibitem[{Bojanowski et~al.(2017)Bojanowski, Grave, Joulin, and
  Mikolov}]{fasttext}
Piotr Bojanowski, Edouard Grave, Armand Joulin, and Tomas Mikolov. 2017.
\newblock \href {https://doi.org/10.1162/tacl_a_00051} {Enriching word vectors
  with subword information}.
\newblock \emph{Transactions of the Association for Computational Linguistics},
  5:135--146.

\bibitem[{Bommasani(2019)}]{long-distance-dependencies-don't-have-to-be-long}
Rishi Bommasani. 2019.
\newblock \href {https://doi.org/10.18653/v1/P19-2012} {Long-distance
  dependencies don{'}t have to be long: Simplifying through provably
  (approximately) optimal permutations}.
\newblock In \emph{Proceedings of the 57th Annual Meeting of the Association
  for Computational Linguistics: Student Research Workshop}, pages 89--99,
  Florence, Italy. Association for Computational Linguistics.

\bibitem[{Bommasani and Cardie(2019)}]{positionembeddings}
Rishi Bommasani and Claire Cardie. 2019.
\newblock Towards understanding position embeddings.
\newblock In \emph{Proceedings of the 2019 ACL Workshop BlackboxNLP: Analyzing
  and Interpreting Neural Networks for NLP}, Florence, Italy. Association for
  Computational Linguistics.

\bibitem[{Bommasani et~al.(2020)Bommasani, Davis, and
  Cardie}]{interpretingpretrainedcwrs}
Rishi Bommasani, Kelly Davis, and Claire Cardie. 2020.
\newblock Interpreting pretrained contextualized representations via reductions
  to static embeddings.
\newblock In \emph{Proceedings of the 58th Annual Meeting of the Association
  for Computational Linguistics}, Seattle, Washington. Association for
  Computational Linguistics.

\bibitem[{Bommasani et~al.(2019)Bommasani, Katiyar, and
  Cardie}]{bommasani-sparse}
Rishi Bommasani, Arzoo Katiyar, and Claire Cardie. 2019.
\newblock \href {https://doi.org/10.18653/v1/W19-1503} {{SPARSE}: Structured
  prediction using argument-relative structured encoding}.
\newblock In \emph{Proceedings of the Third Workshop on Structured Prediction
  for {NLP}}, pages 13--17, Minneapolis, Minnesota. Association for
  Computational Linguistics.

\bibitem[{Botafogo(1993)}]{bandwidthIR}
Rodrigo~A. Botafogo. 1993.
\newblock Cluster analysis for hypertext systems.
\newblock In \emph{Proceedings of the 16th annual international ACM SIGIR
  conference on Research and development in information retrieval}, pages
  116--125.

\bibitem[{Botha et~al.(2017)Botha, Pitler, Ma, Bakalov, Salcianu, Weiss,
  McDonald, and Petrov}]{botha2017preorder}
Jan~A. Botha, Emily Pitler, Ji~Ma, Anton Bakalov, Alex Salcianu, David Weiss,
  Ryan McDonald, and Slav Petrov. 2017.
\newblock \href {https://doi.org/10.18653/v1/D17-1309} {Natural language
  processing with small feed-forward networks}.
\newblock In \emph{Proceedings of the 2017 Conference on Empirical Methods in
  Natural Language Processing}, pages 2879--2885, Copenhagen, Denmark.
  Association for Computational Linguistics.

\bibitem[{Bowman et~al.(2015)Bowman, Potts, and Manning}]{recursivenn+logic}
Samuel~R. Bowman, Christopher Potts, and Christopher~D. Manning. 2015.
\newblock \href {https://doi.org/10.18653/v1/W15-4002} {Recursive neural
  networks can learn logical semantics}.
\newblock In \emph{Proceedings of the 3rd Workshop on Continuous Vector Space
  Models and their Compositionality}, pages 12--21, Beijing, China. Association
  for Computational Linguistics.

\bibitem[{Bradbury et~al.(2017)Bradbury, Merity, Xiong, and Socher}]{qrnn}
James Bradbury, Stephen Merity, Caiming Xiong, and Richard Socher. 2017.
\newblock \href {https://openreview.net/forum?id=H1zJ-v5xl&noteId=H1zJ-v5xl}
  {Quasi-recurrent neural networks}.
\newblock In \emph{International Conference on Learning Representations}.

\bibitem[{Brown(2001)}]{nias}
Lea Brown. 2001.
\newblock A grammar of nias selatan.

\bibitem[{Brown et~al.(1992)Brown, Della~Pietra, Dellapietra, Lafferty, and
  Mercer}]{brown1992}
Peter Brown, Stephen Della~Pietra, Vincent Dellapietra, John Lafferty, and
  L.~Mercer. 1992.
\newblock Analysis, statistical transfer, and synthesis in machine translation.
\newblock In \emph{Proc. International Conference on Theoretical Methodological
  Issues in Machine Translation, 1992}.

\bibitem[{Califf and Mooney(1997)}]{califf97}
Mary~Elaine Califf and Raymond~J. Mooney. 1997.
\newblock \href {https://www.aclweb.org/anthology/W97-1002} {Relational
  learning of pattern-match rules for information extraction}.
\newblock In \emph{{C}o{NLL}97: Computational Natural Language Learning}.

\bibitem[{Ferrer-i Cancho(2006)}]{cancho2006}
Ramon Ferrer-i Cancho. 2006.
\newblock \href {https://doi.org/10.1209/epl/i2006-10406-0} {Why do syntactic
  links not cross?}
\newblock \emph{Europhysics Letters ({EPL})}, 76(6):1228--1235.

\bibitem[{Ferrer-i Cancho and Sol\'{e}(2003)}]{cancho2003}
Ramon Ferrer-i Cancho and Ricard~V. Sol\'{e}. 2003.
\newblock \href {http://www.jstor.org/stable/3138238} {Least effort and the
  origins of scaling in human language}.
\newblock \emph{Proceedings of the National Academy of Sciences of the United
  States of America}, 100(3):788--791.

\bibitem[{Cardie(1997)}]{cardie97}
Claire Cardie. 1997.
\newblock \href
  {http://www.aaai.org/ojs/index.php/aimagazine/article/view/1322} {Empirical
  methods in information extraction}.
\newblock \emph{{AI} Magazine}, 18(4):65--80.

\bibitem[{Chan and George(1980)}]{reverse-cuthill-mckee}
W.~M. Chan and Alan George. 1980.
\newblock A linear time implementation of the reverse cuthill-mckee algorithm.
\newblock \emph{BIT}, 20:8--14.

\bibitem[{Chang et~al.(2009)Chang, Jurafsky, and Manning}]{chang2009preorder}
Pi-Chuan Chang, Daniel Jurafsky, and Christopher~D. Manning. 2009.
\newblock \href {https://www.aclweb.org/anthology/W09-0436} {Disambiguating
  {``}{DE}{''} for {C}hinese-{E}nglish machine translation}.
\newblock In \emph{Proceedings of the Fourth Workshop on Statistical Machine
  Translation}, pages 215--223, Athens, Greece. Association for Computational
  Linguistics.

\bibitem[{Charniak(1983)}]{charniak1983parser}
Eugene Charniak. 1983.
\newblock A parser with something for everyone.
\newblock \emph{Parsing natural language}, pages 117--149.

\bibitem[{Charniak(1994)}]{charniak-1993}
Eugene Charniak. 1994.
\newblock \emph{Statistical Language Learning}.
\newblock MIT Press, Cambridge, MA, USA.

\bibitem[{Charniak et~al.(1998)Charniak, Goldwater, and
  Johnson}]{charniak-etal-1998-edge}
Eugene Charniak, Sharon Goldwater, and Mark Johnson. 1998.
\newblock \href {https://www.aclweb.org/anthology/W98-1115} {Edge-based
  best-first chart parsing}.
\newblock In \emph{Sixth Workshop on Very Large Corpora}.

\bibitem[{Chau and Xu(2012)}]{chau2012}
Michael Chau and Jennifer Xu. 2012.
\newblock Business intelligence in blogs: Understanding consumer interactions
  and communities.
\newblock \emph{MIS quarterly}, pages 1189--1216.

\bibitem[{Chelba et~al.(2013)Chelba, Mikolov, Schuster, Ge, Brants, Koehn, and
  Robinson}]{chelba2013}
Ciprian Chelba, Tomas Mikolov, Mike Schuster, Qi~Ge, Thorsten Brants, Phillipp
  Koehn, and Tony Robinson. 2013.
\newblock \href {http://arxiv.org/abs/1312.3005} {One billion word benchmark
  for measuring progress in statistical language modeling}.
\newblock Technical report, Google.

\bibitem[{Chen and Manning(2014)}]{chenmanning2014}
Danqi Chen and Christopher~D. Manning. 2014.
\newblock \href {https://doi.org/10.3115/v1/D14-1082} {A fast and accurate
  dependency parser using neural networks}.
\newblock In \emph{Proceedings of the 2014 Conference on Empirical Methods in
  Natural Language Processing ({EMNLP})}, pages 740--750, Doha, Qatar.
  Association for Computational Linguistics.

\bibitem[{Chen et~al.(2018)Chen, Sun, Athiwaratkun, Cardie, and
  Weinberger}]{adversarial-dan}
Xilun Chen, Yu~Sun, Ben Athiwaratkun, Claire Cardie, and Kilian Weinberger.
  2018.
\newblock \href {https://doi.org/10.1162/tacl_a_00039} {Adversarial deep
  averaging networks for cross-lingual sentiment classification}.
\newblock \emph{Transactions of the Association for Computational Linguistics},
  6:557--570.

\bibitem[{Chinchor et~al.(1993)Chinchor, Hirschman, and Lewis}]{muc3}
Nancy Chinchor, Lynette Hirschman, and David~D. Lewis. 1993.
\newblock \href {https://www.aclweb.org/anthology/J93-3001} {Evaluating message
  understanding systems: An analysis of the third message understanding
  conference ({MUC}-3)}.
\newblock \emph{Computational Linguistics}, 19(3):409--450.

\bibitem[{Chinn et~al.(1982)Chinn, Chv{\'a}talov{\'a}, Dewdney, and
  Gibbs}]{bandwidthsurvey}
Phyllis~Z. Chinn, Jarmila Chv{\'a}talov{\'a}, Alexander~K. Dewdney, and
  Norman~E. Gibbs. 1982.
\newblock The bandwidth problem for graphs and matrices --- a survey.
\newblock \emph{Journal of Graph Theory}, 6(3):223--254.

\bibitem[{Chiu et~al.(2018)Chiu, Sainath, Wu, Prabhavalkar, Nguyen, Chen,
  Kannan, Weiss, Rao, Gonina et~al.}]{neuralasr}
Chung-Cheng Chiu, Tara~N. Sainath, Yonghui Wu, Rohit Prabhavalkar, Patrick
  Nguyen, Zhifeng Chen, Anjuli Kannan, Ron~J. Weiss, Kanishka Rao, Ekaterina
  Gonina, et~al. 2018.
\newblock State-of-the-art speech recognition with sequence-to-sequence models.
\newblock In \emph{2018 IEEE International Conference on Acoustics, Speech and
  Signal Processing (ICASSP)}, pages 4774--4778. IEEE.

\bibitem[{Cho et~al.(2014{\natexlab{a}})Cho, van Merri{\"e}nboer, Bahdanau, and
  Bengio}]{gru}
Kyunghyun Cho, Bart van Merri{\"e}nboer, Dzmitry Bahdanau, and Yoshua Bengio.
  2014{\natexlab{a}}.
\newblock \href {https://doi.org/10.3115/v1/W14-4012} {On the properties of
  neural machine translation: Encoder{--}decoder approaches}.
\newblock In \emph{Proceedings of {SSST}-8, Eighth Workshop on Syntax,
  Semantics and Structure in Statistical Translation}, pages 103--111, Doha,
  Qatar. Association for Computational Linguistics.

\bibitem[{Cho et~al.(2014{\natexlab{b}})Cho, van Merri{\"e}nboer, Gulcehre,
  Bahdanau, Bougares, Schwenk, and Bengio}]{cho2014}
Kyunghyun Cho, Bart van Merri{\"e}nboer, Caglar Gulcehre, Dzmitry Bahdanau,
  Fethi Bougares, Holger Schwenk, and Yoshua Bengio. 2014{\natexlab{b}}.
\newblock \href {https://doi.org/10.3115/v1/D14-1179} {Learning phrase
  representations using {RNN} encoder{--}decoder for statistical machine
  translation}.
\newblock In \emph{Proceedings of the 2014 Conference on Empirical Methods in
  Natural Language Processing ({EMNLP})}, pages 1724--1734, Doha, Qatar.
  Association for Computational Linguistics.

\bibitem[{Choi et~al.(2005)Choi, Cardie, Riloff, and Patwardhan}]{choi2005}
Yejin Choi, Claire Cardie, Ellen Riloff, and Siddharth Patwardhan. 2005.
\newblock \href {https://www.aclweb.org/anthology/H05-1045} {Identifying
  sources of opinions with conditional random fields and extraction patterns}.
\newblock In \emph{Proceedings of Human Language Technology Conference and
  Conference on Empirical Methods in Natural Language Processing}, pages
  355--362, Vancouver, British Columbia, Canada. Association for Computational
  Linguistics.

\bibitem[{{Chomsky}(1956)}]{chomsky-anti-statistics}
Noam {Chomsky}. 1956.
\newblock Three models for the description of language.
\newblock \emph{IRE Transactions on Information Theory}, 2(3):113--124.

\bibitem[{Chomsky(1957)}]{chomsky1957syntactic-structures}
Noam Chomsky. 1957.
\newblock Syntactic structures.

\bibitem[{Chomsky(1965)}]{chomskysyntax}
Noam Chomsky. 1965.
\newblock \emph{Aspects of the Theory of Syntax}.
\newblock The MIT Press, Cambridge.

\bibitem[{Chomsky(1986)}]{chomsky1986knowledge}
Noam Chomsky. 1986.
\newblock \emph{Knowledge of language: Its nature, origin, and use}.
\newblock Greenwood Publishing Group.

\bibitem[{Chomsky(2014{\natexlab{a}})}]{chomsky2014aspects}
Noam Chomsky. 2014{\natexlab{a}}.
\newblock \emph{Aspects of the Theory of Syntax}, volume~11.
\newblock MIT Press, Cambridge, MA, USA.

\bibitem[{Chomsky(2014{\natexlab{b}})}]{chomsky2014minimalist}
Noam Chomsky. 2014{\natexlab{b}}.
\newblock \emph{The minimalist program}.
\newblock MIT Press, Cambridge, MA, USA.

\bibitem[{Chopra et~al.(2016)Chopra, Auli, and Rush}]{chopra2016}
Sumit Chopra, Michael Auli, and Alexander~M. Rush. 2016.
\newblock \href {https://doi.org/10.18653/v1/N16-1012} {Abstractive sentence
  summarization with attentive recurrent neural networks}.
\newblock In \emph{Proceedings of the 2016 Conference of the North {A}merican
  Chapter of the Association for Computational Linguistics: Human Language
  Technologies}, pages 93--98, San Diego, California. Association for
  Computational Linguistics.

\bibitem[{Chung(1984)}]{minlan1.58}
Fan Chung. 1984.
\newblock On optimal linear arrangements of trees.
\newblock \emph{Computers \& mathematics with applications}, 10(1):43--60.

\bibitem[{Clark(2013)}]{clark2013}
Andy Clark. 2013.
\newblock \href {https://doi.org/10.1017/S0140525X12000477} {Whatever next?
  predictive brains, situated agents, and the future of cognitive science}.
\newblock \emph{Behavioral and Brain Sciences}, 36(3):181--204.

\bibitem[{Clark et~al.(2020)Clark, Luong, Le, and Manning}]{electra}
Kevin Clark, Minh-Thang Luong, Quoc~V. Le, and Christopher~D. Manning. 2020.
\newblock \href {https://openreview.net/forum?id=r1xMH1BtvB} {Electra:
  Pre-training text encoders as discriminators rather than generators}.
\newblock In \emph{International Conference on Learning Representations}.

\bibitem[{Collins(1997)}]{collins-1997-three}
Michael Collins. 1997.
\newblock \href {https://doi.org/10.3115/976909.979620} {Three generative,
  lexicalised models for statistical parsing}.
\newblock In \emph{35th Annual Meeting of the Association for Computational
  Linguistics and 8th Conference of the {E}uropean Chapter of the Association
  for Computational Linguistics}, pages 16--23, Madrid, Spain. Association for
  Computational Linguistics.

\bibitem[{Collins(2002)}]{collins2002hmm}
Michael Collins. 2002.
\newblock \href {https://doi.org/10.3115/1118693.1118694} {Discriminative
  training methods for hidden {M}arkov models: Theory and experiments with
  perceptron algorithms}.
\newblock In \emph{Proceedings of the 2002 Conference on Empirical Methods in
  Natural Language Processing ({EMNLP} 2002)}, pages 1--8. Association for
  Computational Linguistics.

\bibitem[{Collins(2003)}]{collins-2003-head}
Michael Collins. 2003.
\newblock \href {https://doi.org/10.1162/089120103322753356} {Head-driven
  statistical models for natural language parsing}.
\newblock \emph{Computational Linguistics}, 29(4):589--637.

\bibitem[{Collins et~al.(2005)Collins, Koehn, and
  Ku{\v{c}}erov{\'a}}]{collins2005preorder}
Michael Collins, Philipp Koehn, and Ivona Ku{\v{c}}erov{\'a}. 2005.
\newblock \href {https://doi.org/10.3115/1219840.1219906} {Clause restructuring
  for statistical machine translation}.
\newblock In \emph{Proceedings of the 43rd Annual Meeting of the Association
  for Computational Linguistics ({ACL}{'}05)}, pages 531--540, Ann Arbor,
  Michigan. Association for Computational Linguistics.

\bibitem[{Collins(1996)}]{collins-1996-new}
Michael~John Collins. 1996.
\newblock \href {https://doi.org/10.3115/981863.981888} {A new statistical
  parser based on bigram lexical dependencies}.
\newblock In \emph{34th Annual Meeting of the Association for Computational
  Linguistics}, pages 184--191, Santa Cruz, California, USA. Association for
  Computational Linguistics.

\bibitem[{Collobert and Weston(2008)}]{collobert2008}
Ronan Collobert and Jason Weston. 2008.
\newblock \href {https://doi.org/10.1145/1390156.1390177} {A unified
  architecture for natural language processing: Deep neural networks with
  multitask learning}.
\newblock In \emph{Proceedings of the 25th International Conference on Machine
  Learning}, ICML '08, pages 160--167, New York, NY, USA. ACM.

\bibitem[{Collobert et~al.(2011)Collobert, Weston, Bottou, Karlen, Kavukcuoglu,
  and Kuksa}]{collobert2011}
Ronan Collobert, Jason Weston, L{\'e}on Bottou, Michael Karlen, Koray
  Kavukcuoglu, and Pavel Kuksa. 2011.
\newblock \href {http://dl.acm.org/citation.cfm?id=1953048.2078186} {Natural
  language processing (almost) from scratch}.
\newblock \emph{J. Mach. Learn. Res.}, 12:2493--2537.

\bibitem[{Comrie(1981)}]{comrie1981}
Bernard Comrie. 1981.
\newblock \emph{Language universals and linguistic typology}.
\newblock Oxford.

\bibitem[{Correa and Schulz(2004)}]{minlascheduling1}
Jos{\'e}~R Correa and Andreas~S Schulz. 2004.
\newblock Single machine scheduling with precedence constraints.
\newblock In \emph{International Conference on Integer Programming and
  Combinatorial Optimization}, pages 283--297. Springer.

\bibitem[{Correia et~al.(2019)Correia, Niculae, and Martins}]{correia2019}
Gon{\c{c}}alo~M. Correia, Vlad Niculae, and Andr{\'e} F.~T. Martins. 2019.
\newblock \href {https://doi.org/10.18653/v1/D19-1223} {Adaptively sparse
  transformers}.
\newblock In \emph{Proceedings of the 2019 Conference on Empirical Methods in
  Natural Language Processing and the 9th International Joint Conference on
  Natural Language Processing (EMNLP-IJCNLP)}, pages 2174--2184, Hong Kong,
  China. Association for Computational Linguistics.

\bibitem[{Culbertson and Kirby(2016)}]{culbertson2016}
Jennifer Culbertson and Simon Kirby. 2016.
\newblock Simplicity and specificity in language: Domain-general biases have
  domain-specific effects.
\newblock \emph{Frontiers in Psychology}, 6.

\bibitem[{Culbertson and Newport(2015)}]{culbertson2015}
Jennifer Culbertson and Elissa~L. Newport. 2015.
\newblock \href
  {https://doi.org/https://doi.org/10.1016/j.cognition.2015.02.007} {Harmonic
  biases in child learners: In support of language universals}.
\newblock \emph{Cognition}, 139:71 -- 82.

\bibitem[{Culbertson and Newport(2017)}]{culbertson2017}
Jennifer Culbertson and Elissa~L. Newport. 2017.
\newblock \href {https://doi.org/10.1162/OPMI\_a\_00010} {Innovation of word
  order harmony across development}.
\newblock \emph{Open Mind}, 1(2):91--100.

\bibitem[{Culbertson et~al.(2012)Culbertson, Smolensky, and
  Legendre}]{culbertson2012}
Jennifer Culbertson, Paul Smolensky, and G\'{e}raldine Legendre. 2012.
\newblock \href
  {https://doi.org/https://doi.org/10.1016/j.cognition.2011.10.017} {Learning
  biases predict a word order universal}.
\newblock \emph{Cognition}, 122(3):306 -- 329.

\bibitem[{Cuthill and McKee(1969)}]{bandwidthheuristic}
Elizabeth Cuthill and James McKee. 1969.
\newblock Reducing the bandwidth of sparse symmetric matrices.
\newblock In \emph{Proceedings of the 1969 24th national conference}, pages
  157--172.

\bibitem[{Dai et~al.(2019)Dai, Yang, Yang, Carbonell, Le, and
  Salakhutdinov}]{transformer-xl}
Zihang Dai, Zhilin Yang, Yiming Yang, Jaime Carbonell, Quoc Le, and Ruslan
  Salakhutdinov. 2019.
\newblock \href {https://doi.org/10.18653/v1/P19-1285} {Transformer-{XL}:
  Attentive language models beyond a fixed-length context}.
\newblock In \emph{Proceedings of the 57th Annual Meeting of the Association
  for Computational Linguistics}, pages 2978--2988, Florence, Italy.
  Association for Computational Linguistics.

\bibitem[{Daiber et~al.(2016)Daiber, Stanojevi{\'c}, Aziz, and
  Sima{'}an}]{daiber2016preorder}
Joachim Daiber, Milo{\v{s}} Stanojevi{\'c}, Wilker Aziz, and Khalil Sima{'}an.
  2016.
\newblock \href {https://doi.org/10.18653/v1/W16-2213} {Examining the
  relationship between preordering and word order freedom in machine
  translation}.
\newblock In \emph{Proceedings of the First Conference on Machine Translation:
  Volume 1, Research Papers}, pages 118--130, Berlin, Germany. Association for
  Computational Linguistics.

\bibitem[{Davis and van Schijndel(2020)}]{davis2020}
Forrest Davis and Marten van Schijndel. 2020.
\newblock \href {https://arxiv.org/abs/2005.00165} {Recurrent neural network
  language models always learn english-like relative clause attachment}.
\newblock In \emph{Proceedings of the 58th Annual Meeting of the Association
  for Computational Linguistics}, Seattle, Washington. Association for
  Computational Linguistics.

\bibitem[{Davis(2006)}]{direct-methods-sparse-linear-systems}
Timothy~A. Davis. 2006.
\newblock \href {https://doi.org/10.1137/1.9780898718881.fm} {\emph{Direct
  Methods for Sparse Linear Systems}}.

\bibitem[{Dean(2009)}]{google2009}
Jeffrey Dean. 2009.
\newblock \href {http://doi.acm.org/10.1145/1498759.1498761} {Challenges in
  building large-scale information retrieval systems: invited talk}.
\newblock In \emph{WSDM '09: Proceedings of the Second ACM International
  Conference on Web Search and Data Mining}, pages 1--1, New York, NY, USA.

\bibitem[{Deerwester et~al.(1990)Deerwester, Dumais, Furnas, Landauer, and
  Harshman}]{lsa}
Scott Deerwester, Susan~T. Dumais, George~W. Furnas, Thomas~K. Landauer, and
  Richard Harshman. 1990.
\newblock Indexing by latent semantic analysis.
\newblock \emph{Journal of the American society for information science},
  41(6):391--407.

\bibitem[{Demberg(2010)}]{demberg2010}
Vera Demberg. 2010.
\newblock Broad-coverage model of prediction in human sentence processing.

\bibitem[{Demberg and Keller(2008{\natexlab{a}})}]{demberg2008eye}
Vera Demberg and Frank Keller. 2008{\natexlab{a}}.
\newblock \href
  {https://doi.org/https://doi.org/10.1016/j.cognition.2008.07.008} {Data from
  eye-tracking corpora as evidence for theories of syntactic processing
  complexity}.
\newblock \emph{Cognition}, 109(2):193 -- 210.

\bibitem[{Demberg and Keller(2008{\natexlab{b}})}]{demberg2008}
Vera Demberg and Frank Keller. 2008{\natexlab{b}}.
\newblock \href {https://www.aclweb.org/anthology/W08-2304} {A
  psycholinguistically motivated version of {TAG}}.
\newblock In \emph{Proceedings of the Ninth International Workshop on Tree
  Adjoining Grammar and Related Frameworks ({TAG}+9)}, pages 25--32,
  T{\"u}bingen, Germany. Association for Computational Linguistics.

\bibitem[{Demberg and Keller(2009)}]{demberg2009}
Vera Demberg and Frank Keller. 2009.
\newblock A computational model of prediction in human parsing: Unifying
  locality and surprisal effects.
\newblock In \emph{CogSci 2009 Proceedings}, pages 1888--1893. Cognitive
  Science Society.

\bibitem[{Demberg et~al.(2013)Demberg, Keller, and Koller}]{demberg2013}
Vera Demberg, Frank Keller, and Alexander Koller. 2013.
\newblock \href {https://doi.org/10.1162/COLI\_a\_00160} {Incremental,
  predictive parsing with psycholinguistically motivated tree-adjoining
  grammar}.
\newblock \emph{Computational Linguistics}, 39(4):1025--1066.

\bibitem[{DeNero and Uszkoreit(2011)}]{denero2011preorder}
John DeNero and Jakob Uszkoreit. 2011.
\newblock \href {https://www.aclweb.org/anthology/D11-1018} {Inducing sentence
  structure from parallel corpora for reordering}.
\newblock In \emph{Proceedings of the 2011 Conference on Empirical Methods in
  Natural Language Processing}, pages 193--203, Edinburgh, Scotland, UK.
  Association for Computational Linguistics.

\bibitem[{Derbyshire(1979)}]{hixkaryana}
Desmond~C. Derbyshire. 1979.
\newblock \emph{Hixkaryana}, volume~1 of \emph{Lingua Descriptive Studies}.
\newblock North-Holland, Amsterdam.

\bibitem[{Devlin et~al.(2019)Devlin, Chang, Lee, and Toutanova}]{bert}
Jacob Devlin, Ming-Wei Chang, Kenton Lee, and Kristina Toutanova. 2019.
\newblock \href {https://www.aclweb.org/anthology/N19-1423} {{BERT}:
  Pre-training of deep bidirectional transformers for language understanding}.
\newblock In \emph{Proceedings of the 2019 Conference of the North {A}merican
  Chapter of the Association for Computational Linguistics: Human Language
  Technologies, Volume 1 (Long and Short Papers)}, pages 4171--4186,
  Minneapolis, Minnesota. Association for Computational Linguistics.

\bibitem[{D\'{i}az et~al.(1999)D\'{i}az, Penrose, Petit, and
  Serna}]{bandwidthnphardrelax}
Josep D\'{i}az, Mathew Penrose, Jordi Petit, and Maria Serna. 1999.
\newblock \href {https://doi.org/10.1007/3-540-48686-0_10} {Layout problems on
  lattice graphs}.
\newblock volume 1627, pages 103--112.

\bibitem[{D{\'\i}az et~al.(2002)D{\'\i}az, Petit, and
  Serna}]{diazlinearlayouts}
Josep D{\'\i}az, Jordi Petit, and Maria Serna. 2002.
\newblock A survey of graph layout problems.
\newblock \emph{ACM Computing Surveys (CSUR)}, 34(3):313--356.

\bibitem[{Dillon and \'{O}~Cr\'{o}inin(1961)}]{irish}
Myles Dillon and Donncha \'{O}~Cr\'{o}inin. 1961.
\newblock \emph{Teach Yourself Irish}.
\newblock The English Universities Press Ltd., London.

\bibitem[{Dodge et~al.(2019)Dodge, Gururangan, Card, Schwartz, and
  Smith}]{dodge2019}
Jesse Dodge, Suchin Gururangan, Dallas Card, Roy Schwartz, and Noah~A. Smith.
  2019.
\newblock \href {https://doi.org/10.18653/v1/D19-1224} {Show your work:
  Improved reporting of experimental results}.
\newblock In \emph{Proceedings of the 2019 Conference on Empirical Methods in
  Natural Language Processing and the 9th International Joint Conference on
  Natural Language Processing (EMNLP-IJCNLP)}, pages 2185--2194, Hong Kong,
  China. Association for Computational Linguistics.

\bibitem[{Domingos and Pazzani(1997)}]{naivebayesoptimal}
Pedro Domingos and Michael Pazzani. 1997.
\newblock On the optimality of the simple bayesian classifier under zero-one
  loss.
\newblock \emph{Machine learning}, 29(2-3):103--130.

\bibitem[{Dozat and Manning(2016)}]{dozat2016}
Timothy Dozat and Christopher~D. Manning. 2016.
\newblock Deep biaffine attention for neural dependency parsing.
\newblock \emph{ArXiv}, abs/1611.01734.

\bibitem[{Dozat et~al.(2017)Dozat, Qi, and Manning}]{dozat2017}
Timothy Dozat, Peng Qi, and Christopher~D. Manning. 2017.
\newblock \href {https://doi.org/10.18653/v1/K17-3002} {{S}tanford{'}s
  graph-based neural dependency parser at the {C}o{NLL} 2017 shared task}.
\newblock In \emph{Proceedings of the {C}o{NLL} 2017 Shared Task: Multilingual
  Parsing from Raw Text to Universal Dependencies}, pages 20--30, Vancouver,
  Canada. Association for Computational Linguistics.

\bibitem[{Dridan and Oepen(2012)}]{tokenizationsurvey}
Rebecca Dridan and Stephan Oepen. 2012.
\newblock \href {https://www.aclweb.org/anthology/P12-2074} {{T}okenization:
  Returning to a long solved problem {---} a survey, contrastive experiment,
  recommendations, and toolkit {---}}.
\newblock In \emph{Proceedings of the 50th Annual Meeting of the Association
  for Computational Linguistics (Volume 2: Short Papers)}, pages 378--382, Jeju
  Island, Korea. Association for Computational Linguistics.

\bibitem[{Dryer(1988)}]{dryer1988}
Matthew~S. Dryer. 1988.
\newblock \href {https://doi.org/https://doi.org/10.1016/0024-3841(88)90059-9}
  {Object-verb order and adjective-noun order: Dispelling a myth}.
\newblock \emph{Lingua}, 74(2):185 -- 217.
\newblock Papers in Universal Grammar: Generative and Typological Approaches.

\bibitem[{Dryer(1992)}]{dryer1992}
Matthew~S. Dryer. 1992.
\newblock \href {http://www.jstor.org/stable/416370} {The greenbergian word
  order correlations}.
\newblock \emph{Language}, 68(1):81--138.

\bibitem[{Dryer(1997)}]{dryer1997}
Matthew~S. Dryer. 1997.
\newblock \href {https://doi.org/https://doi.org/10.1075/sl.21.1.04dry} {On the
  six-way word order typology}.
\newblock \emph{Studies in Language. International Journal sponsored by the
  Foundations of Language}, 21(1):69--103.

\bibitem[{Dryer(1998)}]{dryer1998}
Matthew~S. Dryer. 1998.
\newblock {Why statistical universals are better than absolute universals}.
\newblock pages 1--23.

\bibitem[{Dryer(2013{\natexlab{a}})}]{dryer2013wals}
Matthew~S. Dryer. 2013{\natexlab{a}}.
\newblock \href {https://wals.info/chapter/s6} {Determining dominant word
  order}.
\newblock In Matthew~S. Dryer and Martin Haspelmath, editors, \emph{The World
  Atlas of Language Structures Online}. Max Planck Institute for Evolutionary
  Anthropology, Leipzig.

\bibitem[{Dryer(2013{\natexlab{b}})}]{dryer2013typology}
Matthew~S. Dryer. 2013{\natexlab{b}}.
\newblock \href {https://doi.org/https://doi.org/10.1075/sl.37.2.02dry} {On the
  six-way word order typology, again}.
\newblock \emph{Studies in Language. International Journal sponsored by the
  Foundations of Language}, 37(2):267--301.

\bibitem[{Dudley(1939)}]{vocoder}
Homer Dudley. 1939.
\newblock The vocoder.
\newblock pages 122--126. Bell Labs.

\bibitem[{Dudley et~al.(1939)Dudley, Riesz, and Watkins}]{speechsynthesis}
Homer Dudley, RR~Riesz, and SSA Watkins. 1939.
\newblock A synthetic speaker.
\newblock \emph{Journal of the Franklin Institute}, 227(6):739--764.

\bibitem[{Dyer(2017)}]{rnn+syntax?}
Chris Dyer. 2017.
\newblock \href {https://doi.org/10.18653/v1/K17-1001} {Should neural network
  architecture reflect linguistic structure?}
\newblock In \emph{Proceedings of the 21st Conference on Computational Natural
  Language Learning ({C}o{NLL} 2017)}, page~1, Vancouver, Canada. Association
  for Computational Linguistics.

\bibitem[{Dyer et~al.(2016)Dyer, Kuncoro, Ballesteros, and Smith}]{rnng}
Chris Dyer, Adhiguna Kuncoro, Miguel Ballesteros, and Noah~A. Smith. 2016.
\newblock \href {https://doi.org/10.18653/v1/N16-1024} {Recurrent neural
  network grammars}.
\newblock In \emph{Proceedings of the 2016 Conference of the North {A}merican
  Chapter of the Association for Computational Linguistics: Human Language
  Technologies}, pages 199--209, San Diego, California. Association for
  Computational Linguistics.

\bibitem[{Dyer et~al.(2019)Dyer, Melis, and Blunsom}]{dyer2019}
Chris Dyer, G{\'a}bor Melis, and Phil Blunsom. 2019.
\newblock \href {http://arxiv.org/abs/1909.09428} {A critical analysis of
  biased parsers in unsupervised parsing}.

\bibitem[{Dyer and Resnik(2010)}]{dyer2010preorder}
Chris Dyer and Philip Resnik. 2010.
\newblock \href {https://www.aclweb.org/anthology/N10-1128} {Context-free
  reordering, finite-state translation}.
\newblock In \emph{Human Language Technologies: The 2010 Annual Conference of
  the North {A}merican Chapter of the Association for Computational
  Linguistics}, pages 858--866, Los Angeles, California. Association for
  Computational Linguistics.

\bibitem[{Earley(1970)}]{earley-parser}
Jay Earley. 1970.
\newblock \href {https://doi.org/10.1145/362007.362035} {An efficient
  context-free parsing algorithm}.
\newblock \emph{Commun. ACM}, 13(2):94--102.

\bibitem[{Eisenstein(2019)}]{eisenstein2019}
Jacob Eisenstein. 2019.
\newblock \href {https://books.google.com/books?id=72yuDwAAQBAJ}
  {\emph{Introduction to Natural Language Processing}}.
\newblock Adaptive Computation and Machine Learning series. MIT Press.

\bibitem[{Eisner and Smith(2005)}]{eisner2005}
Jason Eisner and Noah~A. Smith. 2005.
\newblock \href {https://www.aclweb.org/anthology/W05-1504} {Parsing with soft
  and hard constraints on dependency length}.
\newblock In \emph{Proceedings of the Ninth International Workshop on Parsing
  Technology}, pages 30--41, Vancouver, British Columbia. Association for
  Computational Linguistics.

\bibitem[{Eisner(1996)}]{eisner-1996-three}
Jason~M. Eisner. 1996.
\newblock \href {https://www.aclweb.org/anthology/C96-1058} {Three new
  probabilistic models for dependency parsing: An exploration}.
\newblock In \emph{{COLING} 1996 Volume 1: The 16th International Conference on
  Computational Linguistics}.

\bibitem[{Elman(1990)}]{elmanrnn}
Jeffrey~L. Elman. 1990.
\newblock Finding structure in time.
\newblock \emph{Cognitive science}, 14(2):179--211.

\bibitem[{Ethayarajh(2018)}]{ethayarajh2018}
Kawin Ethayarajh. 2018.
\newblock \href {https://doi.org/10.18653/v1/W18-3012} {Unsupervised random
  walk sentence embeddings: A strong but simple baseline}.
\newblock In \emph{Proceedings of The Third Workshop on Representation Learning
  for {NLP}}, pages 91--100, Melbourne, Australia. Association for
  Computational Linguistics.

\bibitem[{Ethayarajh et~al.(2019)Ethayarajh, Duvenaud, and
  Hirst}]{ethayarajhfactorize}
Kawin Ethayarajh, David Duvenaud, and Graeme Hirst. 2019.
\newblock \href {https://doi.org/10.18653/v1/P19-1315} {Towards understanding
  linear word analogies}.
\newblock In \emph{Proceedings of the 57th Annual Meeting of the Association
  for Computational Linguistics}, pages 3253--3262, Florence, Italy.
  Association for Computational Linguistics.

\bibitem[{Ettinger(2020)}]{ettinger2020}
Allyson Ettinger. 2020.
\newblock \href {https://doi.org/10.1162/tacl\_a\_00298} {What bert is not:
  Lessons from a new suite of psycholinguistic diagnostics for language
  models}.
\newblock \emph{Transactions of the Association for Computational Linguistics},
  8:34--48.

\bibitem[{Etzioni et~al.(2008)Etzioni, Banko, Soderland, and Weld}]{openie}
Oren Etzioni, Michele Banko, Stephen Soderland, and Daniel~S. Weld. 2008.
\newblock \href {https://doi.org/10.1145/1409360.1409378} {Open information
  extraction from the web}.
\newblock \emph{Commun. ACM}, 51(12):68--74.

\bibitem[{Etzioni et~al.(2004)Etzioni, Cafarella, Downey, Kok, Popescu, Shaked,
  Soderland, Weld, and Yates}]{web-scale-ie}
Oren Etzioni, Michael Cafarella, Doug Downey, Stanley Kok, Ana-Maria Popescu,
  Tal Shaked, Stephen Soderland, Daniel~S. Weld, and Alexander Yates. 2004.
\newblock \href {https://doi.org/10.1145/988672.988687} {Web-scale information
  extraction in knowitall: (preliminary results)}.
\newblock In \emph{Proceedings of the 13th International Conference on World
  Wide Web}, WWW 2004, pages 100--110, New York, NY, USA. Association for
  Computing Machinery.

\bibitem[{Etzioni et~al.(2011)Etzioni, Fader, Christensen, Soderland, and
  Mausam}]{openie2}
Oren Etzioni, Anthony Fader, Janara Christensen, Stephen Soderland, and Mausam
  Mausam. 2011.
\newblock Open information extraction: The second generation.
\newblock In \emph{Proceedings of the Twenty-Second International Joint
  Conference on Artificial Intelligence - Volume Volume One}, IJCAI 2011, pages
  3--10. AAAI Press.

\bibitem[{Faruqui(2016)}]{wordembeddingsthesis}
Manaal Faruqui. 2016.
\newblock \emph{Diverse Context for Learning Word Representations}.
\newblock Ph.D. thesis, Carnegie Mellon University.

\bibitem[{Fellows and Langston(1994)}]{bandwidthsoftware}
Michael~R. Fellows and Michael~A. Langston. 1994.
\newblock On search, decision, and the efficiency of polynomial-time
  algorithms.
\newblock \emph{Journal of Computer and System Sciences}, 49(3):769--779.

\bibitem[{Firth(1957)}]{distributionalhypothesis2}
John~Rupert Firth. 1957.
\newblock Applications of general linguistics.
\newblock \emph{Transactions of the Philological Society}, 56(1):1--14.

\bibitem[{Ford et~al.(1982)Ford, Bresnan, and Kaplan}]{ford1982}
Marilyn Ford, Joan Bresnan, and Ronald~M. Kaplan. 1982.
\newblock A competence-based theory of syntactic closure.
\newblock \emph{The Mental Representation of Grammatical Relations}, pages
  727--796.

\bibitem[{Freitag and McCallum(1999)}]{freitag99hmm}
Dayne Freitag and Andrew McCallum. 1999.
\newblock Information extraction with hmms and shrinkage.

\bibitem[{Freitag and McCallum(2000)}]{freitag2000hmm}
Dayne Freitag and Andrew McCallum. 2000.
\newblock Information extraction with hmm structures learned by stochastic
  optimization.
\newblock In \emph{AAAI/IAAI}.

\bibitem[{Friston and Kiebel(2009)}]{friston2009}
Karl Friston and Stefan Kiebel. 2009.
\newblock Predictive coding under the free-energy principle.
\newblock \emph{Philosophical Transactions of the Royal Society B: Biological
  Sciences}, 364(1521):1211--1221.

\bibitem[{Futrell(2019)}]{futrell-2019-information}
Richard Futrell. 2019.
\newblock \href {https://doi.org/10.18653/v1/W19-7902} {Information-theoretic
  locality properties of natural language}.
\newblock In \emph{Proceedings of the First Workshop on Quantitative Syntax
  (Quasy, SyntaxFest 2019)}, pages 2--15, Paris, France. Association for
  Computational Linguistics.

\bibitem[{Futrell et~al.(2020)Futrell, Gibson, and Levy}]{lossy-context}
Richard Futrell, Edward Gibson, and Roger~P. Levy. 2020.
\newblock \href {https://doi.org/10.1111/cogs.12814} {Lossy-context surprisal:
  An information-theoretic model of memory effects in sentence processing}.
\newblock \emph{Cognitive Science}, 44(3):e12814.

\bibitem[{Futrell and Levy(2017)}]{noisy-context3}
Richard Futrell and Roger Levy. 2017.
\newblock \href {https://www.aclweb.org/anthology/E17-1065} {Noisy-context
  surprisal as a human sentence processing cost model}.
\newblock In \emph{Proceedings of the 15th Conference of the {E}uropean Chapter
  of the Association for Computational Linguistics: Volume 1, Long Papers},
  pages 688--698, Valencia, Spain. Association for Computational Linguistics.

\bibitem[{Futrell and Levy(2019)}]{futrell-levy-2019-rnns}
Richard Futrell and Roger~P. Levy. 2019.
\newblock \href {https://doi.org/10.7275/jb34-9986} {Do {RNN}s learn human-like
  abstract word order preferences?}
\newblock In \emph{Proceedings of the Society for Computation in Linguistics
  ({SC}i{L}) 2019}, pages 50--59.

\bibitem[{Futrell et~al.(2015)Futrell, Mahowald, and Gibson}]{futrell2015}
Richard Futrell, Kyle Mahowald, and Edward Gibson. 2015.
\newblock \href {https://doi.org/10.1073/pnas.1502134112} {Large-scale evidence
  of dependency length minimization in 37 languages}.
\newblock \emph{Proceedings of the National Academy of Sciences},
  112(33):10336--10341.

\bibitem[{Futrell et~al.(2019)Futrell, Wilcox, Morita, Qian, Ballesteros, and
  Levy}]{futrell-etal-2019-neural}
Richard Futrell, Ethan Wilcox, Takashi Morita, Peng Qian, Miguel Ballesteros,
  and Roger Levy. 2019.
\newblock \href {https://doi.org/10.18653/v1/N19-1004} {Neural language models
  as psycholinguistic subjects: Representations of syntactic state}.
\newblock In \emph{Proceedings of the 2019 Conference of the North {A}merican
  Chapter of the Association for Computational Linguistics: Human Language
  Technologies, Volume 1 (Long and Short Papers)}, pages 32--42, Minneapolis,
  Minnesota. Association for Computational Linguistics.

\bibitem[{Gaizauskas and Wilks(1998)}]{gaizauskas98}
Robert Gaizauskas and Yorick Wilks. 1998.
\newblock \href {https://doi.org/doi:10.1108/EUM0000000007162} {Information
  extraction: beyond document retrieval}.
\newblock \emph{Journal of Documentation}, 54(1):70--105.

\bibitem[{Garey et~al.(1978)Garey, Graham, Johnson, and
  Knuth}]{bandwidthcomplexity}
Michael~R. Garey, Ronald~L. Graham, David~S. Johnson, and Donald~E. Knuth.
  1978.
\newblock Complexity results for bandwidth minimization.
\newblock \emph{SIAM Journal on Applied Mathematics}, 34(3):477--495.

\bibitem[{Garey et~al.(1974)Garey, Johnson, and Stockmeyer}]{minlaNPHard}
Michael~R. Garey, David~S. Johnson, and Larry Stockmeyer. 1974.
\newblock Some simplified np-complete problems.
\newblock In \emph{Proceedings of the sixth annual ACM symposium on Theory of
  computing}, pages 47--63.

\bibitem[{Garg et~al.(2018)Garg, Schiebinger, Jurafsky, and
  Zou}]{wordembeddingsdiachronic}
Nikhil Garg, Londa Schiebinger, Dan Jurafsky, and James Zou. 2018.
\newblock Word embeddings quantify 100 years of gender and ethnic stereotypes.
\newblock \emph{Proceedings of the National Academy of Sciences},
  115(16):E3635--E3644.

\bibitem[{Gavril(2011)}]{cutwidthnp}
Fanica Gavril. 2011.
\newblock Some np-complete problems on graphs.
\newblock In \emph{CISS 2011}.

\bibitem[{Genzel(2010)}]{genzel2010preorder}
Dmitriy Genzel. 2010.
\newblock \href {https://www.aclweb.org/anthology/C10-1043} {Automatically
  learning source-side reordering rules for large scale machine translation}.
\newblock In \emph{Proceedings of the 23rd International Conference on
  Computational Linguistics (Coling 2010)}, pages 376--384, Beijing, China.
  Coling 2010 Organizing Committee.

\bibitem[{Gerrish and Blei(2012)}]{ldapoliticalscience}
Sean Gerrish and David~M. Blei. 2012.
\newblock How they vote: Issue-adjusted models of legislative behavior.
\newblock In \emph{Advances in Neural Information Processing Systems}, pages
  2753--2761.

\bibitem[{Gibson(1998)}]{gibson98locality}
Edward Gibson. 1998.
\newblock \href {https://doi.org/https://doi.org/10.1016/S0010-0277(98)00034-1}
  {Linguistic complexity: locality of syntactic dependencies}.
\newblock \emph{Cognition}, 68(1):1 -- 76.

\bibitem[{Gibson(2000)}]{gibson2000locality}
Edward Gibson. 2000.
\newblock The dependency locality theory: A distance-based theory of linguistic
  complexity.

\bibitem[{Gibson et~al.(2019)Gibson, Futrell, Piantadosi, Dautriche, Mahowald,
  Bergen, and Levy}]{gibson2019}
Edward Gibson, Richard Futrell, Steven~P. Piantadosi, Isabelle Dautriche, Kyle
  Mahowald, Leon Bergen, and Roger Levy. 2019.
\newblock \href {https://doi.org/https://doi.org/10.1016/j.tics.2019.02.003}
  {How efficiency shapes human language}.
\newblock \emph{Trends in Cognitive Sciences}, 23(5):389 -- 407.

\bibitem[{Gibson and Pearlmutter(1998)}]{gibson98survey}
Edward Gibson and Neal~J Pearlmutter. 1998.
\newblock \href {https://doi.org/https://doi.org/10.1016/S1364-6613(98)01187-5}
  {Constraints on sentence comprehension}.
\newblock \emph{Trends in Cognitive Sciences}, 2(7):262 -- 268.

\bibitem[{Gildea and Jurafsky(2002)}]{gildea2002}
Daniel Gildea and Daniel Jurafsky. 2002.
\newblock \href {https://doi.org/10.1162/089120102760275983} {Automatic
  labeling of semantic roles}.
\newblock \emph{Computational Linguistics}, 28(3):245--288.

\bibitem[{Gildea and Temperley(2007)}]{minlaprojective}
Daniel Gildea and David Temperley. 2007.
\newblock \href {https://www.aclweb.org/anthology/P07-1024} {Optimizing
  grammars for minimum dependency length}.
\newblock In \emph{Proceedings of the 45th Annual Meeting of the Association of
  Computational Linguistics}, pages 184--191, Prague, Czech Republic.
  Association for Computational Linguistics.

\bibitem[{de~Gispert et~al.(2015)de~Gispert, Iglesias, and
  Byrne}]{de-gispert2015preorder}
Adri{\`a} de~Gispert, Gonzalo Iglesias, and Bill Byrne. 2015.
\newblock \href {https://doi.org/10.3115/v1/N15-1105} {Fast and accurate
  preordering for {SMT} using neural networks}.
\newblock In \emph{Proceedings of the 2015 Conference of the North {A}merican
  Chapter of the Association for Computational Linguistics: Human Language
  Technologies}, pages 1012--1017, Denver, Colorado. Association for
  Computational Linguistics.

\bibitem[{Godbole et~al.(2007)Godbole, Srinivasaiah, and
  Skiena}]{large-scale+sentiment-analysis}
Namrata Godbole, Manjunath Srinivasaiah, and Steven Skiena. 2007.
\newblock Large-scale sentiment analysis for news and blogs (system
  demonstration).
\newblock In \emph{ICWSM}.

\bibitem[{Goldberg and Klipker(1976)}]{minlan3}
Mark~K. Goldberg and I.~A. Klipker. 1976.
\newblock Minimal placing pf trees on a line.

\bibitem[{Goldberg and Hirst(2017)}]{goldberg2017}
Yoav Goldberg and Graeme Hirst. 2017.
\newblock \emph{Neural Network Methods in Natural Language Processing}.
\newblock Morgan \& Claypool Publishers.

\bibitem[{G{\'o}mez-Rodr{\'\i}guez et~al.(2018)G{\'o}mez-Rodr{\'\i}guez, Shi,
  and Lee}]{gomez2018-global}
Carlos G{\'o}mez-Rodr{\'\i}guez, Tianze Shi, and Lillian Lee. 2018.
\newblock \href {https://doi.org/10.18653/v1/P18-1248} {Global transition-based
  non-projective dependency parsing}.
\newblock In \emph{Proceedings of the 56th Annual Meeting of the Association
  for Computational Linguistics (Volume 1: Long Papers)}, pages 2664--2675,
  Melbourne, Australia. Association for Computational Linguistics.

\bibitem[{Goodfellow et~al.(2016)Goodfellow, Bengio, and
  Courville}]{deep-learning-textbook}
Ian Goodfellow, Yoshua Bengio, and Aaron Courville. 2016.
\newblock \emph{Deep Learning}.
\newblock The MIT Press.

\bibitem[{Gopalan et~al.(2015)Gopalan, Hofman, and Blei}]{ldapoisson}
Prem Gopalan, Jake~M. Hofman, and David~M. Blei. 2015.
\newblock Scalable recommendation with hierarchical poisson factorization.
\newblock In \emph{Proceedings of the Thirty-First Conference on Uncertainty in
  Artificial Intelligence}, pages 326--335.

\bibitem[{Goto et~al.(2012)Goto, Utiyama, and Sumita}]{goto2012postorder}
Isao Goto, Masao Utiyama, and Eiichiro Sumita. 2012.
\newblock \href {https://www.aclweb.org/anthology/P12-2061} {Post-ordering by
  parsing for {J}apanese-{E}nglish statistical machine translation}.
\newblock In \emph{Proceedings of the 50th Annual Meeting of the Association
  for Computational Linguistics (Volume 2: Short Papers)}, pages 311--316, Jeju
  Island, Korea. Association for Computational Linguistics.

\bibitem[{Goyal and Durrett(2020)}]{goyal2020}
Tanya Goyal and Greg Durrett. 2020.
\newblock Neural syntactic preordering for controlled paraphrase generation.
\newblock In \emph{Proceedings of the 58th Annual Meeting of the Association
  for Computational Linguistics}, Seattle, Washington. Association for
  Computational Linguistics.

\bibitem[{Greenberg(1963)}]{greenberg1963}
Joseph Greenberg. 1963.
\newblock Some universals of grammar with particular reference to the order of
  meaningful elements.
\newblock \emph{In J. Greenberg, ed., Universals of Language. 73-113.
  Cambridge, MA.}

\bibitem[{Griffiths and Steyvers(2004)}]{ldamcmc}
Thomas~L. Griffiths and Mark Steyvers. 2004.
\newblock Finding scientific topics.
\newblock \emph{Proceedings of the National academy of Sciences}, 101(suppl
  1):5228--5235.

\bibitem[{Grishman and Sundheim(1996)}]{muc6}
Ralph Grishman and Beth Sundheim. 1996.
\newblock \href {https://www.aclweb.org/anthology/C96-1079} {Message
  understanding conference- 6: A brief history}.
\newblock In \emph{{COLING} 1996 Volume 1: The 16th International Conference on
  Computational Linguistics}.

\bibitem[{Grodner and Gibson(2005)}]{grodner2005}
Daniel Grodner and Edward Gibson. 2005.
\newblock Consequences of the serial nature of linguistic input for sentenial
  complexity.
\newblock \emph{Cognitive science}, 29 2:261--90.

\bibitem[{Gulordava et~al.(2018)Gulordava, Bojanowski, Grave, Linzen, and
  Baroni}]{gulordava-etal-2018-colorless}
Kristina Gulordava, Piotr Bojanowski, Edouard Grave, Tal Linzen, and Marco
  Baroni. 2018.
\newblock \href {https://doi.org/10.18653/v1/N18-1108} {Colorless green
  recurrent networks dream hierarchically}.
\newblock In \emph{Proceedings of the 2018 Conference of the North {A}merican
  Chapter of the Association for Computational Linguistics: Human Language
  Technologies, Volume 1 (Long Papers)}, pages 1195--1205, New Orleans,
  Louisiana. Association for Computational Linguistics.

\bibitem[{Hahn et~al.(2018)Hahn, Degen, Goodman, Jurafsky, and
  Futrell}]{hahn2018}
Michael Hahn, Judith Degen, Noah~D. Goodman, Dan Jurafsky, and Richard Futrell.
  2018.
\newblock An information-theoretic explanation of adjective ordering
  preferences.
\newblock \emph{Cognitive Science}.

\bibitem[{Hahn and Futrell(2019)}]{hahn2019}
Michael Hahn and Richard Futrell. 2019.
\newblock \href {https://doi.org/10.3390/e21070640} {Estimating predictive
  rate--distortion curves via neural variational inference}.
\newblock \emph{Entropy}, 21(7).

\bibitem[{Hahn et~al.(2020)Hahn, Jurafsky, and Futrell}]{hahn2020}
Michael Hahn, Dan Jurafsky, and Richard Futrell. 2020.
\newblock \href {https://doi.org/10.1073/pnas.1910923117} {Universals of word
  order reflect optimization of grammars for efficient communication}.
\newblock \emph{Proceedings of the National Academy of Sciences},
  117(5):2347--2353.

\bibitem[{Hale(2001)}]{hale2001}
John Hale. 2001.
\newblock \href {https://www.aclweb.org/anthology/N01-1021} {A probabilistic
  earley parser as a psycholinguistic model}.
\newblock In \emph{Second Meeting of the North {A}merican Chapter of the
  Association for Computational Linguistics}.

\bibitem[{Hale et~al.(2018)Hale, Dyer, Kuncoro, and Brennan}]{hale2018}
John Hale, Chris Dyer, Adhiguna Kuncoro, and Jonathan Brennan. 2018.
\newblock \href {https://doi.org/10.18653/v1/P18-1254} {Finding syntax in human
  encephalography with beam search}.
\newblock In \emph{Proceedings of the 56th Annual Meeting of the Association
  for Computational Linguistics (Volume 1: Long Papers)}, pages 2727--2736,
  Melbourne, Australia. Association for Computational Linguistics.

\bibitem[{Harary(1967)}]{bandwidthharary}
Frank Harary. 1967.
\newblock Problem 16.
\newblock \emph{Theory of Graphs and its Applications}, page 161.

\bibitem[{Harper(1964)}]{minlaharper}
Lawrence~H. Harper. 1964.
\newblock Optimal assignments of numbers to vertices.
\newblock \emph{Journal of the Society for Industrial and Applied Mathematics},
  12(1):131--135.

\bibitem[{Harper(1966)}]{minlaerrorcorrecting}
Lawrence~H. Harper. 1966.
\newblock Optimal numberings and isoperimetric problems on graphs.
\newblock \emph{Journal of Combinatorial Theory}, 1(3):385--393.

\bibitem[{Harris(1954)}]{distributionalhypothesis1}
Zellig~S. Harris. 1954.
\newblock Distributional structure.
\newblock \emph{Word}, 10(2-3):146--162.

\bibitem[{Haspelmath(1999)}]{haspelmath1999}
Martin Haspelmath. 1999.
\newblock Optimality and diachronic adaptation.
\newblock \emph{Zeitschrift f{\"u}r Sprachwissenschaft}, 18:180 -- 205.

\bibitem[{Hawkins(1983)}]{hawkins1983}
J.A. Hawkins. 1983.
\newblock \href {https://books.google.com/books?id=l1myAAAAIAAJ} {\emph{Word
  Order Universals}}.
\newblock Quantitative analyses of linguistic structure. Academic Press.

\bibitem[{Hawkins et~al.(1994)Hawkins, Anderson, Bresnan, Comrie, Dressler,
  Ewen, and Huddleston}]{hawkins1994}
J.A. Hawkins, S.R. Anderson, J.~Bresnan, B.~Comrie, W.~Dressler, C.J. Ewen, and
  R.~Huddleston. 1994.
\newblock \href {https://books.google.com/books?id=pYanyVy49dgC} {\emph{A
  Performance Theory of Order and Constituency}}.
\newblock Cambridge Studies in Linguistics. Cambridge University Press.

\bibitem[{Hawkins(1988)}]{hawkins1988}
John~A Hawkins. 1988.
\newblock \emph{Explaining language universals}.
\newblock Blackwell.

\bibitem[{Hawkins(1990)}]{hawkins1990}
John~A. Hawkins. 1990.
\newblock \href {http://www.jstor.org/stable/4178670} {A parsing theory of word
  order universals}.
\newblock \emph{Linguistic Inquiry}, 21(2):223--261.

\bibitem[{Hays(1964)}]{haysprojective}
David~G. Hays. 1964.
\newblock Dependency theory: A formalism and some observations.
\newblock \emph{Language}, 40(4):511--525.

\bibitem[{Heath(1984)}]{nunggubuyu}
Jeffrey Heath. 1984.
\newblock \emph{A Functional Grammar of Nunggubuyu}.
\newblock Humanities Press / Australian Institute of Aboriginal Studies,
  Atlantic Highlands N. J. / Canberra.

\bibitem[{Hirschberg and Manning(2015)}]{hirschberg-manning-2015}
Julia Hirschberg and Christopher~D. Manning. 2015.
\newblock \href {https://science.sciencemag.org/content/349/6245/261} {Advances
  in natural language processing}.
\newblock \emph{Science}, 349(6245):261--266.

\bibitem[{Hochreiter and Schmidhuber(1997)}]{lstm}
Sepp Hochreiter and J\"{u}rgen Schmidhuber. 1997.
\newblock \href {https://doi.org/10.1162/neco.1997.9.8.1735} {Long short-term
  memory}.
\newblock \emph{Neural Comput.}, 9(8):1735--1780.

\bibitem[{Hockett(1960)}]{hockett1960}
Charles~F. Hockett. 1960.
\newblock \href {https://www.jstor.org/stable/24940617} {The origin of speech}.
\newblock \emph{Scientific American}, 203(3):88--97.

\bibitem[{Hoffman et~al.(2010)Hoffman, Bach, and Blei}]{ldaonline}
Matthew Hoffman, Francis~R. Bach, and David~M. Blei. 2010.
\newblock Online learning for latent dirichlet allocation.
\newblock In \emph{advances in neural information processing systems}, pages
  856--864.

\bibitem[{Honnibal and Montani(2017)}]{spacy}
Matthew Honnibal and Ines Montani. 2017.
\newblock spacy 2: Natural language understanding with bloom embeddings,
  convolutional neural networks and incremental parsing.
\newblock \emph{To appear}.

\bibitem[{Hoshino et~al.(2014)Hoshino, Soyer, Miyao, and
  Aizawa}]{hoshino2014preorder}
Sho Hoshino, Hubert Soyer, Yusuke Miyao, and Akiko Aizawa. 2014.
\newblock \href {https://www.aclweb.org/anthology/W14-7008} {{J}apanese to
  {E}nglish machine translation using preordering and compositional distributed
  semantics}.
\newblock In \emph{Proceedings of the 1st Workshop on {A}sian Translation
  ({WAT}2014)}, pages 55--63, Tokyo, Japan. Workshop on Asian Translation.

\bibitem[{Hovy et~al.(2001)Hovy, Gerber, Hermjakob, Lin, and
  Ravichandran}]{hovy2001}
Eduard Hovy, Laurie Gerber, Ulf Hermjakob, Chin-Yew Lin, and Deepak
  Ravichandran. 2001.
\newblock \href {https://www.aclweb.org/anthology/H01-1069} {Toward
  semantics-based answer pinpointing}.
\newblock In \emph{Proceedings of the First International Conference on Human
  Language Technology Research}.

\bibitem[{Howard and Ruder(2018)}]{ulmfit}
Jeremy Howard and Sebastian Ruder. 2018.
\newblock \href {https://doi.org/10.18653/v1/P18-1031} {Universal language
  model fine-tuning for text classification}.
\newblock In \emph{Proceedings of the 56th Annual Meeting of the Association
  for Computational Linguistics (Volume 1: Long Papers)}, pages 328--339,
  Melbourne, Australia. Association for Computational Linguistics.

\bibitem[{Hu and Liu(2004)}]{cr}
Minqing Hu and Bing Liu. 2004.
\newblock Mining and summarizing customer reviews.
\newblock In \emph{Proceedings of the tenth ACM SIGKDD international conference
  on Knowledge discovery and data mining}, pages 168--177.

\bibitem[{Iyyer et~al.(2015)Iyyer, Manjunatha, Boyd-Graber, and
  Daum{\'e}~III}]{dan}
Mohit Iyyer, Varun Manjunatha, Jordan Boyd-Graber, and Hal Daum{\'e}~III. 2015.
\newblock \href {https://doi.org/10.3115/v1/P15-1162} {Deep unordered
  composition rivals syntactic methods for text classification}.
\newblock In \emph{Proceedings of the 53rd Annual Meeting of the Association
  for Computational Linguistics and the 7th International Joint Conference on
  Natural Language Processing (Volume 1: Long Papers)}, pages 1681--1691,
  Beijing, China. Association for Computational Linguistics.

\bibitem[{Jackendoff(2002)}]{jackendoff2002}
Ray Jackendoff. 2002.
\newblock \href
  {https://www.degruyter.com/view/book/9783110902341/10.1515/9783110902341.67.xml}
  {\emph{English particle constructions, the lexicon, and the autonomy of
  syntax}}, pages 67 -- 94. De Gruyter Mouton, Berlin, Boston.

\bibitem[{Jaeger and Tily(2011)}]{jaeger2011}
T.~Florian Jaeger and Harry Tily. 2011.
\newblock \href {https://doi.org/10.1002/wcs.126} {On language ‘utility’:
  processing complexity and communicative efficiency}.
\newblock \emph{WIREs Cognitive Science}, 2(3):323--335.

\bibitem[{Jain and Wallace(2019)}]{attention-not-explanation}
Sarthak Jain and Byron~C. Wallace. 2019.
\newblock \href {https://doi.org/10.18653/v1/N19-1357} {{A}ttention is not
  {E}xplanation}.
\newblock In \emph{Proceedings of the 2019 Conference of the North {A}merican
  Chapter of the Association for Computational Linguistics: Human Language
  Technologies, Volume 1 (Long and Short Papers)}, pages 3543--3556,
  Minneapolis, Minnesota. Association for Computational Linguistics.

\bibitem[{{Jelinek}(1976)}]{jelinek1976hmm}
Fred {Jelinek}. 1976.
\newblock Continuous speech recognition by statistical methods.
\newblock \emph{Proceedings of the IEEE}, 64(4):532--556.

\bibitem[{Jelodar et~al.(2019)Jelodar, Wang, Yuan, Feng, Jiang, Li, and
  Zhao}]{topicmodelsurvey2}
Hamed Jelodar, Yongli Wang, Chi Yuan, Xia Feng, Xiahui Jiang, Yanchao Li, and
  Liang Zhao. 2019.
\newblock Latent dirichlet allocation (lda) and topic modeling: models,
  applications, a survey.
\newblock \emph{Multimedia Tools and Applications}, 78(11):15169--15211.

\bibitem[{Joachims(1997)}]{rocchio1997}
Thorsten Joachims. 1997.
\newblock A probabilistic analysis of the rocchio algorithm with tfidf for text
  categorization.
\newblock In \emph{Proceedings of the 14th International Conference on Machine
  Learning}.

\bibitem[{Joachims(1998)}]{textclassificationsvms}
Thorsten Joachims. 1998.
\newblock Text categorization with support vector machines: Learning with many
  relevant features.
\newblock In \emph{European conference on machine learning}, pages 137--142.
  Springer.

\bibitem[{Jordan(1989)}]{jordan1989}
Michael~I. Jordan. 1989.
\newblock Serial order: A parallel distributed processing approach.
\newblock \emph{Advances in Connectionist Theory}.

\bibitem[{Joshi et~al.(1975)Joshi, Levy, and Takahashi}]{joshi75}
Aravind~K. Joshi, Leon~S. Levy, and Masako Takahashi. 1975.
\newblock \href {https://doi.org/https://doi.org/10.1016/S0022-0000(75)80019-5}
  {Tree adjunct grammars}.
\newblock \emph{Journal of Computer and System Sciences}, 10(1):136 -- 163.

\bibitem[{Joshi et~al.(2020{\natexlab{a}})Joshi, Chen, Liu, Weld, Zettlemoyer,
  and Levy}]{spanbert}
Mandar Joshi, Danqi Chen, Yinhan Liu, Daniel~S. Weld, Luke Zettlemoyer, and
  Omer Levy. 2020{\natexlab{a}}.
\newblock \href {https://doi.org/10.1162/tacl\_a\_00300} {Spanbert: Improving
  pre-training by representing and predicting spans}.
\newblock \emph{Transactions of the Association for Computational Linguistics},
  8:64--77.

\bibitem[{Joshi et~al.(2020{\natexlab{b}})Joshi, Santy, Budhiraja, Bali, and
  Choudhury}]{joshi2020}
Pratik Joshi, Sebastin Santy, Amar Budhiraja, Kalika Bali, and Monojit
  Choudhury. 2020{\natexlab{b}}.
\newblock \href {https://arxiv.org/abs/2004.09095} {The state and fate of
  linguistic diversity and inclusion in the nlp world}.
\newblock In \emph{Proceedings of the 58th Annual Meeting of the Association
  for Computational Linguistics}, Seattle, Washington. Association for
  Computational Linguistics.

\bibitem[{Jurafsky(2003)}]{jurafsky2003}
Dan Jurafsky. 2003.
\newblock Probabilistic modeling in psycholinguistics: Linguistic comprehension
  and production.
\newblock In \emph{PROBABILISTIC LINGUISTICS}, pages 39--96. MIT Press.

\bibitem[{Jurafsky and Martin(2000)}]{jurafsky+martin}
Daniel Jurafsky and James~H. Martin. 2000.
\newblock \emph{Speech and Language Processing: An Introduction to Natural
  Language Processing, Computational Linguistics, and Speech Recognition}, 1st
  edition.
\newblock Prentice Hall PTR, USA.

\bibitem[{Kalchbrenner and Blunsom(2013)}]{kalchbrenner2013}
Nal Kalchbrenner and Phil Blunsom. 2013.
\newblock \href {https://www.aclweb.org/anthology/D13-1176} {Recurrent
  continuous translation models}.
\newblock In \emph{Proceedings of the 2013 Conference on Empirical Methods in
  Natural Language Processing}, pages 1700--1709, Seattle, Washington, USA.
  Association for Computational Linguistics.

\bibitem[{Kaplan(2005)}]{kaplan2005tokenization}
Ronald~M. Kaplan. 2005.
\newblock A method for tokenizing text.
\newblock \emph{Inquiries into words, constraints and contexts}, 55.

\bibitem[{Kaplan(2020)}]{kaplan2019}
Ronald~M. Kaplan. 2020.
\newblock \href {https://doi.org/10.1162/coli\_a\_00359} {Computational
  psycholinguistics}.
\newblock \emph{Computational Linguistics}, 45(4):607--626.

\bibitem[{Karger(2001)}]{cutwidthnetwork}
David~R. Karger. 2001.
\newblock A randomized fully polynomial time approximation scheme for the
  all-terminal network reliability problem.
\newblock \emph{SIAM review}, 43(3):499--522.

\bibitem[{Katz-Brown et~al.(2011)Katz-Brown, Petrov, McDonald, Och, Talbot,
  Ichikawa, Seno, and Kazawa}]{katz2011preorder}
Jason Katz-Brown, Slav Petrov, Ryan McDonald, Franz Och, David Talbot, Hiroshi
  Ichikawa, Masakazu Seno, and Hideto Kazawa. 2011.
\newblock \href {https://www.aclweb.org/anthology/D11-1017} {Training a parser
  for machine translation reordering}.
\newblock In \emph{Proceedings of the 2011 Conference on Empirical Methods in
  Natural Language Processing}, pages 183--192, Edinburgh, Scotland, UK.
  Association for Computational Linguistics.

\bibitem[{Kawara et~al.(2018)Kawara, Chu, and Arase}]{kawara2018preorder}
Yuki Kawara, Chenhui Chu, and Yuki Arase. 2018.
\newblock \href {https://doi.org/10.18653/v1/P18-3004} {Recursive neural
  network based preordering for {E}nglish-to-{J}apanese machine translation}.
\newblock In \emph{Proceedings of {ACL} 2018, Student Research Workshop}, pages
  21--27, Melbourne, Australia. Association for Computational Linguistics.

\bibitem[{Kay(1967)}]{kay-1967-experiments}
Martin Kay. 1967.
\newblock \href {https://www.aclweb.org/anthology/C67-1009} {Experiments with a
  powerful parser}.
\newblock In \emph{{COLING} 1967 Volume 1: Conference Internationale Sur Le
  Traitement Automatique Des Langues}.

\bibitem[{Kay(1986)}]{kay1986parsing}
Martin Kay. 1986.
\newblock Parsing in functional unification grammar.
\newblock In \emph{Readings in natural language processing}, pages 125--138.

\bibitem[{Kay(1989)}]{kay-1989-head}
Martin Kay. 1989.
\newblock \href {https://www.aclweb.org/anthology/W89-0206} {Head-driven
  parsing}.
\newblock In \emph{Proceedings of the First International Workshop on Parsing
  Technologies}, pages 52--62, Pittsburgh, Pennsylvania, USA. Carnegy Mellon
  University.

\bibitem[{Kim et~al.(2017)Kim, Denton, Hoang, and
  Rush}]{structured-attention-network}
Yoon Kim, Carl Denton, Luong Hoang, and Alexander~M. Rush. 2017.
\newblock \href {https://openreview.net/forum?id=HkE0Nvqlg} {Structured
  attention networks}.
\newblock In \emph{International Conference on Learning Representations}.

\bibitem[{Kim et~al.(2019)Kim, Rush, Yu, Kuncoro, Dyer, and
  Melis}]{unsupervisedrnng}
Yoon Kim, Alexander Rush, Lei Yu, Adhiguna Kuncoro, Chris Dyer, and G{\'a}bor
  Melis. 2019.
\newblock \href {https://doi.org/10.18653/v1/N19-1114} {Unsupervised recurrent
  neural network grammars}.
\newblock In \emph{Proceedings of the 2019 Conference of the North {A}merican
  Chapter of the Association for Computational Linguistics: Human Language
  Technologies, Volume 1 (Long and Short Papers)}, pages 1105--1117,
  Minneapolis, Minnesota. Association for Computational Linguistics.

\bibitem[{Kingma and Ba(2015)}]{adam}
Diederik~P. Kingma and Jimmy Ba. 2015.
\newblock \href {https://arxiv.org/abs/1412.6980} {Adam: A method for
  stochastic optimization}.
\newblock In \emph{International Conference on Learning Representations}.

\bibitem[{Klein and Manning(2004)}]{klein2004}
Dan Klein and Christopher Manning. 2004.
\newblock \href {https://doi.org/10.3115/1218955.1219016} {Corpus-based
  induction of syntactic structure: Models of dependency and constituency}.
\newblock In \emph{Proceedings of the 42nd Annual Meeting of the Association
  for Computational Linguistics ({ACL}-04)}, pages 478--485, Barcelona, Spain.

\bibitem[{Klein and Manning(2003{\natexlab{a}})}]{klein-manning-2003-parsing}
Dan Klein and Christopher~D. Manning. 2003{\natexlab{a}}.
\newblock \href {https://www.aclweb.org/anthology/N03-1016} {{A}* parsing: Fast
  exact {V}iterbi parse selection}.
\newblock In \emph{Proceedings of the 2003 Human Language Technology Conference
  of the North {A}merican Chapter of the Association for Computational
  Linguistics}, pages 119--126.

\bibitem[{Klein and Manning(2003{\natexlab{b}})}]{klein-manning-2003-accurate}
Dan Klein and Christopher~D. Manning. 2003{\natexlab{b}}.
\newblock \href {https://doi.org/10.3115/1075096.1075150} {Accurate
  unlexicalized parsing}.
\newblock In \emph{Proceedings of the 41st Annual Meeting of the Association
  for Computational Linguistics}, pages 423--430, Sapporo, Japan. Association
  for Computational Linguistics.

\bibitem[{Koehn(2010)}]{koehn2010}
Philipp Koehn. 2010.
\newblock \emph{Statistical Machine Translation}, 1st edition.
\newblock Cambridge University Press, USA.

\bibitem[{Kozlowski et~al.(2019)Kozlowski, Taddy, and
  Evans}]{wordembeddingsculture}
Austin~C. Kozlowski, Matt Taddy, and James~A. Evans. 2019.
\newblock The geometry of culture: Analyzing the meanings of class through word
  embeddings.
\newblock \emph{American Sociological Review}, 84(5):905--949.

\bibitem[{Kuno(1973)}]{japanese}
Susumo Kuno. 1973.
\newblock \emph{The Structure of the Japanese Language}.
\newblock Massachusetts Institute of Technology Press, Cambridge,
  Massachusetts.

\bibitem[{Lafferty et~al.(2001)Lafferty, McCallum, and
  Pereira}]{lafferty2001crf}
John~D. Lafferty, Andrew McCallum, and Fernando C.~N. Pereira. 2001.
\newblock Conditional random fields: Probabilistic models for segmenting and
  labeling sequence data.
\newblock In \emph{Proceedings of the Eighteenth International Conference on
  Machine Learning}, pages 282--289, San Francisco, CA, USA. Morgan Kaufmann
  Publishers Inc.

\bibitem[{Lan et~al.(2020)Lan, Chen, Goodman, Gimpel, Sharma, and
  Soricut}]{albert}
Zhenzhong Lan, Mingda Chen, Sebastian Goodman, Kevin Gimpel, Piyush Sharma, and
  Radu Soricut. 2020.
\newblock \href {https://openreview.net/forum?id=H1eA7AEtvS} {Albert: A lite
  bert for self-supervised learning of language representations}.
\newblock In \emph{International Conference on Learning Representations}.

\bibitem[{Lau et~al.(1993)Lau, Rosenfeld, and Roukos}]{lau1993memm}
Raymond Lau, Ronald Rosenfeld, and Salim Roukos. 1993.
\newblock \href {https://www.aclweb.org/anthology/H93-1021} {Adaptive language
  modeling using the maximum entropy principle}.
\newblock In \emph{{H}uman {L}anguage {T}echnology: Proceedings of a Workshop
  Held at Plainsboro, New Jersey, March 21-24, 1993}.

\bibitem[{Lee et~al.(2019)Lee, Cho, Bindel, and Mimno}]{ldaspectralanchor3}
Moontae Lee, Sungjun Cho, David Bindel, and David Mimno. 2019.
\newblock \href {https://doi.org/10.18653/v1/D19-1504} {Practical correlated
  topic modeling and analysis via the rectified anchor word algorithm}.
\newblock In \emph{Proceedings of the 2019 Conference on Empirical Methods in
  Natural Language Processing and the 9th International Joint Conference on
  Natural Language Processing (EMNLP-IJCNLP)}, pages 4991--5001, Hong Kong,
  China. Association for Computational Linguistics.

\bibitem[{Lehnert(1986)}]{conceptualquestion3}
W~Lehnert. 1986.
\newblock \emph{A Conceptual Theory of Question Answering}, pages 651--657.
  Morgan Kaufmann Publishers Inc., San Francisco, CA, USA.

\bibitem[{Lehnert(1977{\natexlab{a}})}]{conceptualquestion1}
Wendy~G. Lehnert. 1977{\natexlab{a}}.
\newblock A conceptual theory of question answering.
\newblock In \emph{Proceedings of the 5th International Joint Conference on
  Artificial Intelligence - Volume 1}, pages 158--164, San Francisco, CA, USA.
  Morgan Kaufmann Publishers Inc.

\bibitem[{Lehnert(1977{\natexlab{b}})}]{conceptualquestion2}
Wendy~Grace Lehnert. 1977{\natexlab{b}}.
\newblock The process of question answering.

\bibitem[{Lei et~al.(2018)Lei, Zhang, Wang, Dai, and Artzi}]{sru}
Tao Lei, Yu~Zhang, Sida~I. Wang, Hui Dai, and Yoav Artzi. 2018.
\newblock \href {https://doi.org/10.18653/v1/D18-1477} {Simple recurrent units
  for highly parallelizable recurrence}.
\newblock In \emph{Proceedings of the 2018 Conference on Empirical Methods in
  Natural Language Processing}, pages 4470--4481, Brussels, Belgium.
  Association for Computational Linguistics.

\bibitem[{Lerner and Petrov(2013)}]{lerner2013preorder}
Uri Lerner and Slav Petrov. 2013.
\newblock \href {https://www.aclweb.org/anthology/D13-1049} {Source-side
  classifier preordering for machine translation}.
\newblock In \emph{Proceedings of the 2013 Conference on Empirical Methods in
  Natural Language Processing}, pages 513--523, Seattle, Washington, USA.
  Association for Computational Linguistics.

\bibitem[{Levy and Goldberg(2014)}]{levyfactorize1}
Omer Levy and Yoav Goldberg. 2014.
\newblock Neural word embedding as implicit matrix factorization.
\newblock In \emph{Advances in neural information processing systems}, pages
  2177--2185.

\bibitem[{Levy et~al.(2015)Levy, Goldberg, and Dagan}]{levyfactorize2}
Omer Levy, Yoav Goldberg, and Ido Dagan. 2015.
\newblock \href {https://doi.org/10.1162/tacl_a_00134} {Improving
  distributional similarity with lessons learned from word embeddings}.
\newblock \emph{Transactions of the Association for Computational Linguistics},
  3:211--225.

\bibitem[{Levy(2008{\natexlab{a}})}]{levy2008}
Roger Levy. 2008{\natexlab{a}}.
\newblock \href
  {https://doi.org/https://doi.org/10.1016/j.cognition.2007.05.006}
  {Expectation-based syntactic comprehension}.
\newblock \emph{Cognition}, 106(3):1126 -- 1177.

\bibitem[{Levy(2008{\natexlab{b}})}]{noisy-context1}
Roger Levy. 2008{\natexlab{b}}.
\newblock \href {https://www.aclweb.org/anthology/D08-1025} {A noisy-channel
  model of human sentence comprehension under uncertain input}.
\newblock In \emph{Proceedings of the 2008 Conference on Empirical Methods in
  Natural Language Processing}, pages 234--243, Honolulu, Hawaii. Association
  for Computational Linguistics.

\bibitem[{Levy(2011)}]{noisy-context2}
Roger Levy. 2011.
\newblock \href {https://www.aclweb.org/anthology/P11-1106} {Integrating
  surprisal and uncertain-input models in online sentence comprehension: formal
  techniques and empirical results}.
\newblock In \emph{Proceedings of the 49th Annual Meeting of the Association
  for Computational Linguistics: Human Language Technologies}, pages
  1055--1065, Portland, Oregon, USA. Association for Computational Linguistics.

\bibitem[{Lewis(1998)}]{naivebayesir}
David~D. Lewis. 1998.
\newblock Naive (bayes) at forty: The independence assumption in information
  retrieval.
\newblock In \emph{European conference on machine learning}, pages 4--15.
  Springer.

\bibitem[{Lewis and Gale(1994)}]{naivebayesnlp1}
David~D. Lewis and William~A. Gale. 1994.
\newblock A sequential algorithm for training text classifiers.
\newblock In \emph{SIGIR’94}, pages 3--12. Springer.

\bibitem[{Li and Thompson(1981)}]{mandarin}
Charles~N. Li and Sandra~A. Thompson. 1981.
\newblock \emph{Mandarin Chinese: a Functional Reference Grammar}.
\newblock University of California Press, Berkeley.

\bibitem[{Li et~al.(2007)Li, Li, Zhang, Li, Zhou, and Guan}]{li2007preorder}
Chi-Ho Li, Minghui Li, Dongdong Zhang, Mu~Li, Ming Zhou, and Yi~Guan. 2007.
\newblock \href {https://www.aclweb.org/anthology/P07-1091} {A probabilistic
  approach to syntax-based reordering for statistical machine translation}.
\newblock In \emph{Proceedings of the 45th Annual Meeting of the Association of
  Computational Linguistics}, pages 720--727, Prague, Czech Republic.
  Association for Computational Linguistics.

\bibitem[{Li et~al.(2014)Li, Xie, Chen, Wang, and Deng}]{li2014}
Xiaodong Li, Haoran Xie, Li~Chen, Jianping Wang, and Xiaotie Deng. 2014.
\newblock \href {https://doi.org/https://doi.org/10.1016/j.knosys.2014.04.022}
  {News impact on stock price return via sentiment analysis}.
\newblock \emph{Knowledge-Based Systems}, 69:14 -- 23.

\bibitem[{Li and Roth(2002)}]{trec}
Xin Li and Dan Roth. 2002.
\newblock \href {https://www.aclweb.org/anthology/C02-1150} {Learning question
  classifiers}.
\newblock In \emph{{COLING} 2002: The 19th International Conference on
  Computational Linguistics}.

\bibitem[{Lin et~al.(2019)Lin, Zhu, Gormley, and Eisner}]{neural-fst}
Chu-Cheng Lin, Hao Zhu, Matthew~R. Gormley, and Jason Eisner. 2019.
\newblock \href {https://doi.org/10.18653/v1/N19-1024} {Neural finite-state
  transducers: Beyond rational relations}.
\newblock In \emph{Proceedings of the 2019 Conference of the North {A}merican
  Chapter of the Association for Computational Linguistics: Human Language
  Technologies, Volume 1 (Long and Short Papers)}, pages 272--283, Minneapolis,
  Minnesota. Association for Computational Linguistics.

\bibitem[{Linzen(2020)}]{linzen2020}
Tal Linzen. 2020.
\newblock How can we accelerate progress towards human-like linguistic
  generalization?
\newblock In \emph{Proceedings of the 58th Annual Meeting of the Association
  for Computational Linguistics}, Seattle, Washington. Association for
  Computational Linguistics.

\bibitem[{Linzen et~al.(2016)Linzen, Dupoux, and
  Goldberg}]{linzen-etal-2016-assessing}
Tal Linzen, Emmanuel Dupoux, and Yoav Goldberg. 2016.
\newblock \href {https://doi.org/10.1162/tacl_a_00115} {Assessing the ability
  of {LSTM}s to learn syntax-sensitive dependencies}.
\newblock \emph{Transactions of the Association for Computational Linguistics},
  4:521--535.

\bibitem[{Liu(2012)}]{liusentiment2012}
Bing Liu. 2012.
\newblock \emph{Sentiment Analysis and Opinion Mining}.
\newblock Morgan \& Claypool Publishers.

\bibitem[{Liu et~al.(2020)Liu, Liu, Gao, Chen, and Han}]{trainingtransformers}
Liyuan Liu, Xiaodong Liu, Jianfeng Gao, Weizhu Chen, and Jiawei Han. 2020.
\newblock \href {http://arxiv.org/abs/2004.08249} {Understanding the difficulty
  of training transformers}.

\bibitem[{Liu and Vannelli(1995)}]{minLAlower}
Weiguo Liu and Anthony Vannelli. 1995.
\newblock Generating lower bounds for the linear arrangement problem.
\newblock \emph{Discrete applied mathematics}, 59(2):137--151.

\bibitem[{Liu et~al.(2019)Liu, Ott, Goyal, Du, Joshi, Chen, Levy, Lewis,
  Zettlemoyer, and Stoyanov}]{roberta}
Yinhan Liu, Myle Ott, Naman Goyal, Jingfei Du, Mandar Joshi, Danqi Chen, Omer
  Levy, Mike Lewis, Luke Zettlemoyer, and Veselin Stoyanov. 2019.
\newblock \href {http://arxiv.org/abs/1907.11692} {Roberta: {A} robustly
  optimized {BERT} pretraining approach}.
\newblock \emph{CoRR}, abs/1907.11692.

\bibitem[{Lohse et~al.(2004)Lohse, Hawkins, and Wasow}]{lohse2004}
Barbara Lohse, Barbara~A. Hawkins, and Barbara John A~Thomas Wasow. 2004.
\newblock Domain minimization in english verb-particle constructions.
\newblock \emph{Language}, 80:238 -- 261.

\bibitem[{Lopez(2008)}]{lopez2008}
Adam Lopez. 2008.
\newblock Statistical machine translation.
\newblock \emph{ACM Computing Surveys (CSUR)}, 40(3):1--49.

\bibitem[{Luhn(1957)}]{termfrequency}
Hans~Peter Luhn. 1957.
\newblock A statistical approach to mechanized encoding and searching of
  literary information.
\newblock \emph{IBM Journal of Research and Development}, 1(4):309--317.

\bibitem[{Luong et~al.(2015)Luong, Pham, and Manning}]{multiplicativeattention}
Thang Luong, Hieu Pham, and Christopher~D. Manning. 2015.
\newblock \href {https://doi.org/10.18653/v1/D15-1166} {Effective approaches to
  attention-based neural machine translation}.
\newblock In \emph{Proceedings of the 2015 Conference on Empirical Methods in
  Natural Language Processing}, pages 1412--1421, Lisbon, Portugal. Association
  for Computational Linguistics.

\bibitem[{Makedon and Sudborough(1989)}]{cutwidthorigin}
Fillia Makedon and Ivan~Hal Sudborough. 1989.
\newblock On minimizing width in linear layouts.
\newblock \emph{Discrete Applied Mathematics}, 23(3):243--265.

\bibitem[{Malaviya et~al.(2018)Malaviya, Ferreira, and Martins}]{malaviya2018}
Chaitanya Malaviya, Pedro Ferreira, and Andr{\'e} F.~T. Martins. 2018.
\newblock \href {https://doi.org/10.18653/v1/P18-2059} {Sparse and constrained
  attention for neural machine translation}.
\newblock In \emph{Proceedings of the 56th Annual Meeting of the Association
  for Computational Linguistics (Volume 2: Short Papers)}, pages 370--376,
  Melbourne, Australia. Association for Computational Linguistics.

\bibitem[{Manning et~al.(2014)Manning, Surdeanu, Bauer, Finkel, Bethard, and
  McClosky}]{corenlp}
Christopher Manning, Mihai Surdeanu, John Bauer, Jenny Finkel, Steven Bethard,
  and David McClosky. 2014.
\newblock \href {https://doi.org/10.3115/v1/P14-5010} {The {S}tanford
  {C}ore{NLP} natural language processing toolkit}.
\newblock In \emph{Proceedings of 52nd Annual Meeting of the Association for
  Computational Linguistics: System Demonstrations}, pages 55--60, Baltimore,
  Maryland. Association for Computational Linguistics.

\bibitem[{Manning et~al.(2008)Manning, Raghavan, and Sch\"{u}tze}]{manning2008}
Christopher~D. Manning, Prabhakar Raghavan, and Hinrich Sch\"{u}tze. 2008.
\newblock \emph{Introduction to Information Retrieval}.
\newblock Cambridge University Press, USA.

\bibitem[{Manning and Sch\"{u}tze(1999)}]{manning+schutze}
Christopher~D. Manning and Hinrich Sch\"{u}tze. 1999.
\newblock \emph{Foundations of Statistical Natural Language Processing}.
\newblock MIT Press, Cambridge, MA, USA.

\bibitem[{Martins and Astudillo(2016)}]{martins2016}
Andre Martins and Ramon Astudillo. 2016.
\newblock \href {http://proceedings.mlr.press/v48/martins16.html} {From softmax
  to sparsemax: A sparse model of attention and multi-label classification}.
\newblock In \emph{Proceedings of The 33rd International Conference on Machine
  Learning}, volume~48 of \emph{Proceedings of Machine Learning Research},
  pages 1614--1623, New York, New York, USA. PMLR.

\bibitem[{Marvin and Linzen(2018)}]{marvin-linzen-2018-targeted}
Rebecca Marvin and Tal Linzen. 2018.
\newblock \href {https://doi.org/10.18653/v1/D18-1151} {Targeted syntactic
  evaluation of language models}.
\newblock In \emph{Proceedings of the 2018 Conference on Empirical Methods in
  Natural Language Processing}, pages 1192--1202, Brussels, Belgium.
  Association for Computational Linguistics.

\bibitem[{McCallum et~al.(2000)McCallum, Freitag, and
  Pereira}]{mccallum2000memm}
Andrew McCallum, Dayne Freitag, and Fernando C.~N. Pereira. 2000.
\newblock Maximum entropy markov models for information extraction and
  segmentation.
\newblock In \emph{Proceedings of the Seventeenth International Conference on
  Machine Learning}, ICML 2000, pages 591--598, San Francisco, CA, USA. Morgan
  Kaufmann Publishers Inc.

\bibitem[{McCallum and Nigam(1998)}]{naivebayesnlp2}
Andrew McCallum and Kamal Nigam. 1998.
\newblock A comparison of event models for naive bayes text classification.
\newblock In \emph{AAAI 1998}.

\bibitem[{McCann et~al.(2017)McCann, Bradbury, Xiong, and Socher}]{cove}
Bryan McCann, James Bradbury, Caiming Xiong, and Richard Socher. 2017.
\newblock Learned in translation: Contextualized word vectors.
\newblock In \emph{Advances in Neural Information Processing Systems}, pages
  6294--6305.

\bibitem[{McClelland et~al.(2019)McClelland, Hill, Rudolph, Baldridge, and
  Sch\"utze}]{mcclelland2019}
James~L. McClelland, Felix Hill, Maja Rudolph, Jason Baldridge, and Hinrich
  Sch\"utze. 2019.
\newblock \href {http://arxiv.org/abs/1912.05877} {Extending machine language
  models toward human-level language understanding}.

\bibitem[{McCoy et~al.(2019)McCoy, Pavlick, and Linzen}]{mccoy-etal-2019-right}
Tom McCoy, Ellie Pavlick, and Tal Linzen. 2019.
\newblock \href {https://doi.org/10.18653/v1/P19-1334} {Right for the wrong
  reasons: Diagnosing syntactic heuristics in natural language inference}.
\newblock In \emph{Proceedings of the 57th Annual Meeting of the Association
  for Computational Linguistics}, pages 3428--3448, Florence, Italy.
  Association for Computational Linguistics.

\bibitem[{McDonald and Pereira(2006)}]{mcdonald-pereira-2006-online}
Ryan McDonald and Fernando Pereira. 2006.
\newblock \href {https://www.aclweb.org/anthology/E06-1011} {Online learning of
  approximate dependency parsing algorithms}.
\newblock In \emph{11th Conference of the {E}uropean Chapter of the Association
  for Computational Linguistics}, Trento, Italy. Association for Computational
  Linguistics.

\bibitem[{McDonald et~al.(2005{\natexlab{a}})McDonald, Pereira, Ribarov, and
  Haji{\v{c}}}]{projectivityczech}
Ryan McDonald, Fernando Pereira, Kiril Ribarov, and Jan Haji{\v{c}}.
  2005{\natexlab{a}}.
\newblock \href {https://www.aclweb.org/anthology/H05-1066} {Non-projective
  dependency parsing using spanning tree algorithms}.
\newblock In \emph{Proceedings of Human Language Technology Conference and
  Conference on Empirical Methods in Natural Language Processing}, pages
  523--530, Vancouver, British Columbia, Canada. Association for Computational
  Linguistics.

\bibitem[{McDonald et~al.(2005{\natexlab{b}})McDonald, Pereira, Ribarov, and
  Haji{\v{c}}}]{mcdonald-etal-2005-non}
Ryan McDonald, Fernando Pereira, Kiril Ribarov, and Jan Haji{\v{c}}.
  2005{\natexlab{b}}.
\newblock \href {https://www.aclweb.org/anthology/H05-1066} {Non-projective
  dependency parsing using spanning tree algorithms}.
\newblock In \emph{Proceedings of Human Language Technology Conference and
  Conference on Empirical Methods in Natural Language Processing}, pages
  523--530, Vancouver, British Columbia, Canada. Association for Computational
  Linguistics.

\bibitem[{Mehta et~al.(2015)Mehta, Kunchukuttan, and
  Bhattacharyya}]{mehta2015postorder}
Pratik Mehta, Anoop Kunchukuttan, and Pushpak Bhattacharyya. 2015.
\newblock \href {https://www.aclweb.org/anthology/W15-5950} {Investigating the
  potential of post-ordering {SMT} output to improve translation quality}.
\newblock In \emph{Proceedings of the 12th International Conference on Natural
  Language Processing}, pages 351--356, Trivandrum, India. NLP Association of
  India.

\bibitem[{Melis et~al.(2020)Melis, Ko\v{c}isk\'y, and Blunsom}]{mogrifier}
G\'{a}bor Melis, Tom\'a\v{s} Ko\v{c}isk\'y, and Phil Blunsom. 2020.
\newblock \href {https://openreview.net/forum?id=SJe5P6EYvS} {Mogrifier lstm}.
\newblock In \emph{International Conference on Learning Representations}.

\bibitem[{Mena et~al.(2018)Mena, Belanger, Linderman, and
  Snoek}]{latent-permutations}
Gonzalo Mena, David Belanger, Scott Linderman, and Jasper Snoek. 2018.
\newblock \href {https://openreview.net/forum?id=Byt3oJ-0W} {Learning latent
  permutations with gumbel-sinkhorn networks}.
\newblock In \emph{International Conference on Learning Representations}.

\bibitem[{Mensch and Blondel(2018)}]{mensch2018}
Arthur Mensch and Mathieu Blondel. 2018.
\newblock \href {http://proceedings.mlr.press/v80/mensch18a.html}
  {Differentiable dynamic programming for structured prediction and attention}.
\newblock In \emph{Proceedings of the 35th International Conference on Machine
  Learning}, volume~80 of \emph{Proceedings of Machine Learning Research},
  pages 3462--3471, Stockholmsm\"assan, Stockholm Sweden. PMLR.

\bibitem[{Merrill(2019)}]{merrill2019}
William Merrill. 2019.
\newblock \href {https://doi.org/10.18653/v1/W19-3901} {Sequential neural
  networks as automata}.
\newblock In \emph{Proceedings of the Workshop on Deep Learning and Formal
  Languages: Building Bridges}, pages 1--13, Florence. Association for
  Computational Linguistics.

\bibitem[{Merrill et~al.(2020)Merrill, Weiss, Goldberg, Schwartz, Smith, and
  Yahav}]{formalrnnhierarchy}
William Merrill, Gail Weiss, Yoav Goldberg, Roy Schwartz, Noah~A. Smith, and
  Eran Yahav. 2020.
\newblock \href {https://arxiv.org/abs/2004.08500?context=cs} {A formal
  hierarchy of rnn architectures}.
\newblock In \emph{Proceedings of the 58th Annual Meeting of the Association
  for Computational Linguistics}, Seattle, Washington. Association for
  Computational Linguistics.

\bibitem[{Mikolov et~al.(2013)Mikolov, Sutskever, Chen, Corrado, and
  Dean}]{word2vec}
Tomas Mikolov, Ilya Sutskever, Kai Chen, Greg~S Corrado, and Jeff Dean. 2013.
\newblock \href
  {https://papers.nips.cc/paper/5021-distributed-representations-of-words-and-phrases-and-their-compositionality.pdf}
  {Distributed representations of words and phrases and their
  compositionality}.
\newblock In \emph{Advances in neural information processing systems}, pages
  3111--3119.

\bibitem[{Miller(1956)}]{7pm2}
George~A Miller. 1956.
\newblock The magical number seven, plus or minus two: Some limits on our
  capacity for processing information.
\newblock \emph{Psychological review}, 63(2):81.

\bibitem[{Miller(1951)}]{miller1951language}
George~Armitage Miller. 1951.
\newblock Language and communication.

\bibitem[{Mintz et~al.(2009)Mintz, Bills, Snow, and Jurafsky}]{distant-rel-ex}
Mike Mintz, Steven Bills, Rion Snow, and Dan Jurafsky. 2009.
\newblock Distant supervision for relation extraction without labeled data.
\newblock In \emph{Proceedings of the Joint Conference of the 47th Annual
  Meeting of the ACL and the 4th International Joint Conference on Natural
  Language Processing of the AFNLP: Volume 2-Volume 2}, pages 1003--1011.
  Association for Computational Linguistics.

\bibitem[{Mitchell(1997)}]{machinelearningtextbook}
Thomas~M. Mitchell. 1997.
\newblock \emph{Machine Learning}, 1 edition.
\newblock McGraw-Hill, Inc., USA.

\bibitem[{Mitchison and Durbin(1986)}]{minlaneural}
Graeme Mitchison and Richard Durbin. 1986.
\newblock Optimal numberings of an n*n array.
\newblock \emph{SIAM Journal on Algebraic Discrete Methods}, 7(4):571--582.

\bibitem[{Mitkov and Joshi(2012)}]{mitkov2012}
Ruslan Mitkov and Aravind~K. Joshi. 2012.
\newblock \href
  {https://www.oxfordhandbooks.com/view/10.1093/oxfordhb/9780199276349.001.0001/oxfordhb-9780199276349-e-26}
  {Tree-adjoining grammars}.

\bibitem[{Monien(1986)}]{bandwidthnphardcaterpillar}
Burkhard Monien. 1986.
\newblock The bandwidth minimization problem for caterpillars with hair length
  3 is np-complete.
\newblock \emph{SIAM Journal on Algebraic Discrete Methods}, 7(4):505--512.

\bibitem[{M{\"u}ller(2002)}]{muller2002free}
Gereon M{\"u}ller. 2002.
\newblock Free word order, morphological case, and sympathy theory.
\newblock \emph{Resolving Conflicts in Grammars: Optimality Theory in Syntax,
  Morphology, and Phonology}.

\bibitem[{Mutzel(1995)}]{cutwidthautomateddrawing}
Petra Mutzel. 1995.
\newblock A polyhedral approach to planar augmentation and related problems.
\newblock In \emph{European Symposium on Algorithms}, pages 494--507. Springer.

\bibitem[{Nallapati et~al.(2016)Nallapati, Zhou, dos Santos,
  G\.{u}l{\c{c}}ehre, and Xiang}]{nallapati2016}
Ramesh Nallapati, Bowen Zhou, Cicero dos Santos, \c{C}a\u{g}lar
  G\.{u}l{\c{c}}ehre, and Bing Xiang. 2016.
\newblock \href {https://doi.org/10.18653/v1/K16-1028} {Abstractive text
  summarization using sequence-to-sequence {RNN}s and beyond}.
\newblock In \emph{Proceedings of The 20th {SIGNLL} Conference on Computational
  Natural Language Learning}, pages 280--290, Berlin, Germany. Association for
  Computational Linguistics.

\bibitem[{Navigli and Ponzetto(2012)}]{babelnet}
Roberto Navigli and Simone~Paolo Ponzetto. 2012.
\newblock Babelnet: The automatic construction, evaluation and application of a
  wide-coverage multilingual semantic network.
\newblock \emph{Artificial Intelligence}, 193:217--250.

\bibitem[{Nayak(2019)}]{googlesearch+bert}
Pandu Nayak. 2019.
\newblock \href
  {https://blog.google/products/search/search-language-understanding-bert/}
  {Understanding searches better than ever before}.
\newblock Technical report, Google.

\bibitem[{Neubig et~al.(2012)Neubig, Watanabe, and Mori}]{neubig2012preorder}
Graham Neubig, Taro Watanabe, and Shinsuke Mori. 2012.
\newblock \href {https://www.aclweb.org/anthology/D12-1077} {Inducing a
  discriminative parser to optimize machine translation reordering}.
\newblock In \emph{Proceedings of the 2012 Joint Conference on Empirical
  Methods in Natural Language Processing and Computational Natural Language
  Learning}, pages 843--853, Jeju Island, Korea. Association for Computational
  Linguistics.

\bibitem[{Newman and Block(2006)}]{ldahistory}
David~J. Newman and Sharon Block. 2006.
\newblock Probabilistic topic decomposition of an eighteenth-century american
  newspaper.
\newblock \emph{Journal of the American Society for Information Science and
  Technology}, 57(6):753--767.

\bibitem[{Niculae(2018)}]{niculae2018learning}
Vlad Niculae. 2018.
\newblock \emph{Learning Deep Models with Linguistically-Inspired Structure}.
\newblock Ph.D. thesis, Cornell University.

\bibitem[{Niculae and Blondel(2017)}]{niculae2017}
Vlad Niculae and Mathieu Blondel. 2017.
\newblock \href
  {http://papers.nips.cc/paper/6926-a-regularized-framework-for-sparse-and-structured-neural-attention.pdf}
  {A regularized framework for sparse and structured neural attention}.
\newblock In I.~Guyon, U.~V. Luxburg, S.~Bengio, H.~Wallach, R.~Fergus,
  S.~Vishwanathan, and R.~Garnett, editors, \emph{Advances in Neural
  Information Processing Systems 30}, pages 3338--3348. Curran Associates, Inc.

\bibitem[{Nivre(2005)}]{dependencysurvey}
Joakim Nivre. 2005.
\newblock Dependency grammar and dependency parsing.

\bibitem[{Nivre et~al.(2016)Nivre, de~Marneffe, Ginter, Goldberg, Haji{\v{c}},
  Manning, McDonald, Petrov, Pyysalo, Silveira, Tsarfaty, and Zeman}]{UD}
Joakim Nivre, Marie-Catherine de~Marneffe, Filip Ginter, Yoav Goldberg, Jan
  Haji{\v{c}}, Christopher~D. Manning, Ryan McDonald, Slav Petrov, Sampo
  Pyysalo, Natalia Silveira, Reut Tsarfaty, and Daniel Zeman. 2016.
\newblock \href {https://www.aclweb.org/anthology/L16-1262} {Universal
  dependencies v1: A multilingual treebank collection}.
\newblock In \emph{Proceedings of the Tenth International Conference on
  Language Resources and Evaluation ({LREC}'16)}, pages 1659--1666,
  Portoro{\v{z}}, Slovenia. European Language Resources Association (ELRA).

\bibitem[{Pach et~al.(1996)Pach, Shahrokhi, and Szegedy}]{minlagraphdrawing}
J{\'a}nos Pach, Farhad Shahrokhi, and Mario Szegedy. 1996.
\newblock Applications of the crossing number.
\newblock \emph{Algorithmica}, 16(1):111--117.

\bibitem[{Page et~al.(1999)Page, Brin, Motwani, and Winograd}]{pagerank}
Lawrence Page, Sergey Brin, Rajeev Motwani, and Terry Winograd. 1999.
\newblock The pagerank citation ranking: Bringing order to the web.
\newblock Technical report, Stanford InfoLab.

\bibitem[{Pang and Lee(2004)}]{subj}
Bo~Pang and Lillian Lee. 2004.
\newblock \href {https://doi.org/10.3115/1218955.1218990} {A sentimental
  education: Sentiment analysis using subjectivity summarization based on
  minimum cuts}.
\newblock In \emph{Proceedings of the 42nd Annual Meeting of the Association
  for Computational Linguistics ({ACL}-04)}, pages 271--278, Barcelona, Spain.

\bibitem[{Pang and Lee(2005)}]{mr}
Bo~Pang and Lillian Lee. 2005.
\newblock \href {https://doi.org/10.3115/1219840.1219855} {Seeing stars:
  Exploiting class relationships for sentiment categorization with respect to
  rating scales}.
\newblock In \emph{Proceedings of the 43rd Annual Meeting of the Association
  for Computational Linguistics ({ACL}{'}05)}, pages 115--124, Ann Arbor,
  Michigan. Association for Computational Linguistics.

\bibitem[{Pang and Lee(2008)}]{pang2008}
Bo~Pang and Lillian Lee. 2008.
\newblock \href {https://doi.org/10.1561/1500000011} {Opinion mining and
  sentiment analysis}.
\newblock \emph{Found. Trends Inf. Retr.}, 2(1--2):1--135.

\bibitem[{Pang et~al.(2002)Pang, Lee, and
  Vaithyanathan}]{textclassificationsentiment}
Bo~Pang, Lillian Lee, and Shivakumar Vaithyanathan. 2002.
\newblock \href {https://doi.org/10.3115/1118693.1118704} {Thumbs up? sentiment
  classification using machine learning techniques}.
\newblock In \emph{Proceedings of the 2002 Conference on Empirical Methods in
  Natural Language Processing ({EMNLP} 2002)}, pages 79--86. Association for
  Computational Linguistics.

\bibitem[{Papadimitriou(1976)}]{bandwidthpapadimitriou}
Christos~H Papadimitriou. 1976.
\newblock The np-completeness of the bandwidth minimization problem.
\newblock \emph{Computing}, 16(3):263--270.

\bibitem[{Peng et~al.(2004)Peng, Feng, and McCallum}]{peng2004crf}
Fuchun Peng, Fangfang Feng, and Andrew McCallum. 2004.
\newblock \href {https://www.aclweb.org/anthology/C04-1081} {{C}hinese
  segmentation and new word detection using conditional random fields}.
\newblock In \emph{{COLING} 2004: Proceedings of the 20th International
  Conference on Computational Linguistics}, pages 562--568, Geneva,
  Switzerland. COLING.

\bibitem[{Peng et~al.(2018)Peng, Schwartz, Thomson, and
  Smith}]{rational-recurrence}
Hao Peng, Roy Schwartz, Sam Thomson, and Noah~A. Smith. 2018.
\newblock \href {https://doi.org/10.18653/v1/D18-1152} {Rational recurrences}.
\newblock In \emph{Proceedings of the 2018 Conference on Empirical Methods in
  Natural Language Processing}, pages 1203--1214, Brussels, Belgium.
  Association for Computational Linguistics.

\bibitem[{Pennington et~al.(2014)Pennington, Socher, and Manning}]{glove}
Jeffrey Pennington, Richard Socher, and Christopher Manning. 2014.
\newblock \href {https://doi.org/10.3115/v1/D14-1162} {{G}love: Global vectors
  for word representation}.
\newblock In \emph{Proceedings of the 2014 Conference on Empirical Methods in
  Natural Language Processing ({EMNLP})}, pages 1532--1543, Doha, Qatar.
  Association for Computational Linguistics.

\bibitem[{Pereira and Warren(1983)}]{pereira-warren-1983-parsing}
Fernando C.~N. Pereira and David H.~D. Warren. 1983.
\newblock \href {https://doi.org/10.3115/981311.981338} {Parsing as deduction}.
\newblock In \emph{21st Annual Meeting of the Association for Computational
  Linguistics}, pages 137--144, Cambridge, Massachusetts, USA. Association for
  Computational Linguistics.

\bibitem[{Peters et~al.(2018{\natexlab{a}})Peters, Niculae, and
  Martins}]{peters2018}
Ben Peters, Vlad Niculae, and Andr{\'e} F.~T. Martins. 2018{\natexlab{a}}.
\newblock \href {https://doi.org/10.18653/v1/W18-5450} {Interpretable structure
  induction via sparse attention}.
\newblock In \emph{Proceedings of the 2018 {EMNLP} Workshop {B}lackbox{NLP}:
  Analyzing and Interpreting Neural Networks for {NLP}}, pages 365--367,
  Brussels, Belgium. Association for Computational Linguistics.

\bibitem[{Peters et~al.(2019{\natexlab{a}})Peters, Niculae, and
  Martins}]{peters2019}
Ben Peters, Vlad Niculae, and Andr{\'e} F.~T. Martins. 2019{\natexlab{a}}.
\newblock \href {https://doi.org/10.18653/v1/P19-1146} {Sparse
  sequence-to-sequence models}.
\newblock In \emph{Proceedings of the 57th Annual Meeting of the Association
  for Computational Linguistics}, pages 1504--1519, Florence, Italy.
  Association for Computational Linguistics.

\bibitem[{Peters et~al.(2018{\natexlab{b}})Peters, Neumann, Iyyer, Gardner,
  Clark, Lee, and Zettlemoyer}]{elmo}
Matthew Peters, Mark Neumann, Mohit Iyyer, Matt Gardner, Christopher Clark,
  Kenton Lee, and Luke Zettlemoyer. 2018{\natexlab{b}}.
\newblock \href {https://doi.org/10.18653/v1/N18-1202} {Deep contextualized
  word representations}.
\newblock In \emph{Proceedings of the 2018 Conference of the North {A}merican
  Chapter of the Association for Computational Linguistics: Human Language
  Technologies, Volume 1 (Long Papers)}, pages 2227--2237, New Orleans,
  Louisiana. Association for Computational Linguistics.

\bibitem[{Peters et~al.(2019{\natexlab{b}})Peters, Ruder, and
  Smith}]{to-tune-or-not}
Matthew~E. Peters, Sebastian Ruder, and Noah~A. Smith. 2019{\natexlab{b}}.
\newblock \href {https://doi.org/10.18653/v1/W19-4302} {To tune or not to tune?
  adapting pretrained representations to diverse tasks}.
\newblock In \emph{Proceedings of the 4th Workshop on Representation Learning
  for NLP (RepL4NLP-2019)}, pages 7--14, Florence, Italy. Association for
  Computational Linguistics.

\bibitem[{Petit and Salgado(1998)}]{minlabenchmarks}
Jordi Petit and Jordi~Girona Salgado. 1998.
\newblock \emph{Approximation heuristics and benchmarkings for the MinLA
  problem}.

\bibitem[{Pinker(2003)}]{pinker2003}
Steven Pinker. 2003.
\newblock \emph{The language instinct: How the mind creates language}.
\newblock Penguin UK.

\bibitem[{Pinker(2007)}]{pinker2007}
Steven Pinker. 2007.
\newblock \emph{The stuff of thought: Language as a window into human nature}.
\newblock Penguin.

\bibitem[{Pinker and Bloom(1990)}]{pinker_bloom_1990}
Steven Pinker and Paul Bloom. 1990.
\newblock \href {https://doi.org/10.1017/S0140525X00081061} {Natural language
  and natural selection}.
\newblock \emph{Behavioral and Brain Sciences}, 13(4):707--727.

\bibitem[{Pinker and Jackendoff(2005)}]{pinker2005}
Steven Pinker and Ray Jackendoff. 2005.
\newblock \href
  {https://doi.org/https://doi.org/10.1016/j.cognition.2004.08.004} {The
  faculty of language: what's special about it?}
\newblock \emph{Cognition}, 95(2):201 -- 236.

\bibitem[{Pinto et~al.(2003)Pinto, McCallum, Wei, and Croft}]{pinto2003crf}
David Pinto, Andrew McCallum, Xing Wei, and W.~Bruce Croft. 2003.
\newblock \href {https://doi.org/10.1145/860435.860479} {Table extraction using
  conditional random fields}.
\newblock In \emph{Proceedings of the 26th Annual International ACM SIGIR
  Conference on Research and Development in Informaion Retrieval}, SIGIR 2003,
  pages 235--242, New York, NY, USA. Association for Computing Machinery.

\bibitem[{Piskorski and Yangarber(2013)}]{piskorski2013}
Jakub Piskorski and Roman Yangarber. 2013.
\newblock \href {https://doi.org/10.1007/978-3-642-28569-1_2}
  {\emph{Information Extraction: Past, Present and Future}}, pages 23--49.
  Springer Berlin Heidelberg, Berlin, Heidelberg.

\bibitem[{Prasad et~al.(2019)Prasad, van Schijndel, and
  Linzen}]{prasad-etal-2019-using}
Grusha Prasad, Marten van Schijndel, and Tal Linzen. 2019.
\newblock \href {https://doi.org/10.18653/v1/K19-1007} {Using priming to
  uncover the organization of syntactic representations in neural language
  models}.
\newblock In \emph{Proceedings of the 23rd Conference on Computational Natural
  Language Learning (CoNLL)}, pages 66--76, Hong Kong, China. Association for
  Computational Linguistics.

\bibitem[{Pruthi et~al.(2020)Pruthi, Gupta, Dhingra, Neubig, and
  Lipton}]{deceiving-attention}
Danish Pruthi, Mansi Gupta, Bhuwan Dhingra, Graham Neubig, and Zachary~C.
  Lipton. 2020.
\newblock \href {https://arxiv.org/abs/1909.07913} {Learning to deceive with
  attention-based explanations}.
\newblock In \emph{Proceedings of the 58th Annual Meeting of the Association
  for Computational Linguistics}, Seattle, Washington. Association for
  Computational Linguistics.

\bibitem[{Qi et~al.(2020)Qi, Zhang, Zhang, Bolton, and Manning}]{stanza}
Peng Qi, Yuhao Zhang, Yuhui Zhang, Jason Bolton, and Christopher~D. Manning.
  2020.
\newblock Stanza: A python natural language processing toolkit for many human
  languages.
\newblock \emph{ArXiv}, abs/2003.07082.

\bibitem[{Radford et~al.(2018)Radford, Narasimhan, Salimans, and
  Sutskever}]{gpt}
Alec Radford, Karthik Narasimhan, Tim Salimans, and Ilya Sutskever. 2018.
\newblock Improving language understanding by generative pre-training.

\bibitem[{Radford et~al.(2019)Radford, Wu, Child, Luan, Amodei, and
  Sutskever}]{gpt2}
Alec Radford, Jeffrey Wu, Rewon Child, David Luan, Dario Amodei, and Ilya
  Sutskever. 2019.
\newblock \href
  {https://d4mucfpksywv.cloudfront.net/better-language-models/language_models_are_unsupervised_multitask_learners.pdf}
  {Language models are unsupervised multitask learners}.

\bibitem[{Raffel et~al.(2019)Raffel, Shazeer, Roberts, Lee, Narang, Matena,
  Zhou, Li, and Liu}]{t5}
Colin Raffel, Noam Shazeer, Adam Roberts, Katherine Lee, Sharan Narang, Michael
  Matena, Yanqi Zhou, Wei Li, and Peter~J. Liu. 2019.
\newblock \href {http://arxiv.org/abs/1910.10683} {Exploring the limits of
  transfer learning with a unified text-to-text transformer}.

\bibitem[{Rasooli and Collins(2019)}]{rasooli-2019}
Mohammad~Sadegh Rasooli and Michael Collins. 2019.
\newblock \href {https://doi.org/10.18653/v1/N19-1385} {Low-resource syntactic
  transfer with unsupervised source reordering}.
\newblock In \emph{Proceedings of the 2019 Conference of the North {A}merican
  Chapter of the Association for Computational Linguistics: Human Language
  Technologies, Volume 1 (Long and Short Papers)}, pages 3845--3856,
  Minneapolis, Minnesota. Association for Computational Linguistics.

\bibitem[{Ratnaparkhi(1996)}]{ratnaparkhi1996memm}
Adwait Ratnaparkhi. 1996.
\newblock \href {https://www.aclweb.org/anthology/W96-0213} {A maximum entropy
  model for part-of-speech tagging}.
\newblock In \emph{Conference on Empirical Methods in Natural Language
  Processing}.

\bibitem[{Ratnaparkhi et~al.(1994)Ratnaparkhi, Reynar, and
  Roukos}]{ratnaparkhi1994memm}
Adwait Ratnaparkhi, Jeff Reynar, and Salim Roukos. 1994.
\newblock \href {https://www.aclweb.org/anthology/H94-1048} {A maximum entropy
  model for prepositional phrase attachment}.
\newblock In \emph{{H}uman {L}anguage {T}echnology: Proceedings of a Workshop
  held at {P}lainsboro, {N}ew {J}ersey, {M}arch 8-11, 1994}.

\bibitem[{Ravfogel et~al.(2019)Ravfogel, Goldberg, and
  Linzen}]{ravfogel-etal-2019-studying}
Shauli Ravfogel, Yoav Goldberg, and Tal Linzen. 2019.
\newblock \href {https://doi.org/10.18653/v1/N19-1356} {Studying the inductive
  biases of {RNN}s with synthetic variations of natural languages}.
\newblock In \emph{Proceedings of the 2019 Conference of the North {A}merican
  Chapter of the Association for Computational Linguistics: Human Language
  Technologies, Volume 1 (Long and Short Papers)}, pages 3532--3542,
  Minneapolis, Minnesota. Association for Computational Linguistics.

\bibitem[{Ravi and Ravi(2015)}]{ravi2015}
Kumar Ravi and Vadlamani Ravi. 2015.
\newblock \href {https://doi.org/https://doi.org/10.1016/j.knosys.2015.06.015}
  {A survey on opinion mining and sentiment analysis: Tasks, approaches and
  applications}.
\newblock \emph{Knowledge-Based Systems}, 89:14 -- 46.

\bibitem[{Ravi et~al.(1991)Ravi, Agrawal, and Klein}]{minlascheduling2}
Ramamurthy Ravi, Ajit Agrawal, and Philip Klein. 1991.
\newblock Ordering problems approximated: single-processor scheduling and
  interval graph completion.
\newblock In \emph{International Colloquium on Automata, Languages, and
  Programming}, pages 751--762. Springer.

\bibitem[{Reynar and Ratnaparkhi(1997)}]{reynar1997memm}
Jeffrey~C. Reynar and Adwait Ratnaparkhi. 1997.
\newblock \href {https://doi.org/10.3115/974557.974561} {A maximum entropy
  approach to identifying sentence boundaries}.
\newblock In \emph{Fifth Conference on Applied Natural Language Processing},
  pages 16--19, Washington, DC, USA. Association for Computational Linguistics.

\bibitem[{Rijkhoff(1990)}]{rijkhoff1990}
Jan Rijkhoff. 1990.
\newblock \href
  {https://www.degruyter.com/view/journals/ling/28/1/article-p5.xml}
  {Explaining word order in the noun phrase}.
\newblock \emph{Linguistics}, 28(1):5 -- 42.

\bibitem[{Roark et~al.(2004)Roark, Saraclar, Collins, and
  Johnson}]{roark2004crf}
Brian Roark, Murat Saraclar, Michael Collins, and Mark Johnson. 2004.
\newblock \href {https://doi.org/10.3115/1218955.1218962} {Discriminative
  language modeling with conditional random fields and the perceptron
  algorithm}.
\newblock In \emph{Proceedings of the 42nd Annual Meeting of the Association
  for Computational Linguistics ({ACL}-04)}, pages 47--54, Barcelona, Spain.

\bibitem[{Rocchio(1971)}]{rocchio}
Joseph Rocchio. 1971.
\newblock Relevance feedback in information retrieval.
\newblock \emph{The Smart retrieval system-experiments in automatic document
  processing}, pages 313--323.

\bibitem[{Ruder(2019)}]{ruderthesis}
Sebastian Ruder. 2019.
\newblock \emph{Neural Transfer Learning for Natural Language Processing}.
\newblock Ph.D. thesis, National University of Ireland, Galway.

\bibitem[{Rumelhart et~al.(1986)Rumelhart, Hinton, and
  Williams}]{rumelhart1986}
David~E. Rumelhart, Geoffrey~E. Hinton, and Ronald~J. Williams. 1986.
\newblock Learning representations by back-propagating errors.
\newblock \emph{nature}, 323(6088):533--536.

\bibitem[{Rush et~al.(2015)Rush, Chopra, and Weston}]{rush2015}
Alexander~M. Rush, Sumit Chopra, and Jason Weston. 2015.
\newblock \href {https://doi.org/10.18653/v1/D15-1044} {A neural attention
  model for abstractive sentence summarization}.
\newblock In \emph{Proceedings of the 2015 Conference on Empirical Methods in
  Natural Language Processing}, pages 379--389, Lisbon, Portugal. Association
  for Computational Linguistics.

\bibitem[{Saad(2003)}]{bandwidthfast}
Yousef Saad. 2003.
\newblock \href {https://doi.org/10.1137/1.9780898718003} {\emph{Iterative
  Methods for Sparse Linear Systems}}, second edition.
\newblock Society for Industrial and Applied Mathematics.

\bibitem[{Sag et~al.(2003)Sag, Wasow, and Bender}]{sag2003}
I.A. Sag, T.~Wasow, and E.M. Bender. 2003.
\newblock \href {https://books.google.com/books?id=5PBsAAAAIAAJ}
  {\emph{Syntactic Theory: A Formal Introduction}}.
\newblock CSLI lecture notes. Center for the Study of Language and Information.

\bibitem[{Salton(1971)}]{salton71}
Gerald Salton. 1971.
\newblock \emph{The SMART Retrieval System --- Experiments in Automatic
  Document Processing}.
\newblock Prentice-Hall, Inc., USA.

\bibitem[{Salton et~al.(1975)Salton, Wong, and Yang}]{saltonvectorspace}
Gerald Salton, A.~Wong, and C.~S. Yang. 1975.
\newblock \href {https://doi.org/10.1145/361219.361220} {A vector space model
  for automatic indexing}.
\newblock \emph{Commun. ACM}, 18(11):613--620.

\bibitem[{Salton(1968)}]{salton67}
Gerard Salton. 1968.
\newblock \emph{Automatic Information Organization and Retrieval.}
\newblock McGraw Hill Text.

\bibitem[{Salton(1975)}]{salton75}
Gerard Salton. 1975.
\newblock \emph{Theory of Indexing}.
\newblock Society for Industrial and Applied Mathematics, USA.

\bibitem[{Salton(1991)}]{tf-idf}
Gerard Salton. 1991.
\newblock Developments in automatic text retrieval.
\newblock \emph{science}, 253(5023):974--980.

\bibitem[{Salton and McGill(1986)}]{salton-ir}
Gerard Salton and Michael~J McGill. 1986.
\newblock Introduction to modern information retrieval.

\bibitem[{Santa~Cruz et~al.(2019)Santa~Cruz, Fernando, Cherian, and
  Gould}]{visual-permutation}
Rodrigo Santa~Cruz, Basura Fernando, Anoop Cherian, and Stephen Gould. 2019.
\newblock Visual permutation learning.
\newblock \emph{IEEE Transactions on Pattern Analysis and Machine
  Intelligence}, 41(12):3100--3114.

\bibitem[{Schein et~al.(2015)Schein, Paisley, Blei, and
  Wallach}]{ldabayesianpoisson1}
Aaron Schein, John Paisley, David~M. Blei, and Hanna Wallach. 2015.
\newblock Bayesian poisson tensor factorization for inferring multilateral
  relations from sparse dyadic event counts.
\newblock In \emph{Proceedings of the 21th ACM SIGKDD International Conference
  on Knowledge Discovery and Data Mining}, pages 1045--1054.

\bibitem[{Schein et~al.(2019)Schein, Wu, Schofield, Zhou, and
  Wallach}]{ldabayesianpoisson2}
Aaron Schein, Zhiwei~Steven Wu, Alexandra Schofield, Mingyuan Zhou, and Hanna
  Wallach. 2019.
\newblock \href {http://proceedings.mlr.press/v97/schein19a.html} {Locally
  private {B}ayesian inference for count models}.
\newblock In \emph{Proceedings of the 36th International Conference on Machine
  Learning}, volume~97 of \emph{Proceedings of Machine Learning Research},
  pages 5638--5648, Long Beach, California, USA. PMLR.

\bibitem[{van Schijndel and Linzen(2018)}]{van-schijndel-linzen-2018-neural}
Marten van Schijndel and Tal Linzen. 2018.
\newblock \href {https://doi.org/10.18653/v1/D18-1499} {A neural model of
  adaptation in reading}.
\newblock In \emph{Proceedings of the 2018 Conference on Empirical Methods in
  Natural Language Processing}, pages 4704--4710, Brussels, Belgium.
  Association for Computational Linguistics.

\bibitem[{van Schijndel and Linzen(2019)}]{van-schijndel-linzen-2019-entropy}
Marten van Schijndel and Tal Linzen. 2019.
\newblock \href {https://doi.org/10.7275/qtbb-9d05} {Can entropy explain
  successor surprisal effects in reading?}
\newblock In \emph{Proceedings of the Society for Computation in Linguistics
  ({SC}i{L}) 2019}, pages 1--7.

\bibitem[{van Schijndel et~al.(2019)van Schijndel, Mueller, and
  Linzen}]{van-schijndel-etal-2019-quantity}
Marten van Schijndel, Aaron Mueller, and Tal Linzen. 2019.
\newblock \href {https://doi.org/10.18653/v1/D19-1592} {Quantity doesn{'}t buy
  quality syntax with neural language models}.
\newblock In \emph{Proceedings of the 2019 Conference on Empirical Methods in
  Natural Language Processing and the 9th International Joint Conference on
  Natural Language Processing (EMNLP-IJCNLP)}, pages 5831--5837, Hong Kong,
  China. Association for Computational Linguistics.

\bibitem[{Schofield(2019)}]{ldathesis}
Alexandra Schofield. 2019.
\newblock \emph{Text Processing for the Effective Application of Latent
  Dirichlet Allocation}.
\newblock Ph.D. thesis, Cornell University.

\bibitem[{Schuster and Paliwal(1997)}]{bidirnn}
Mike Schuster and Kuldip~K. Paliwal. 1997.
\newblock \href {https://doi.org/10.1109/78.650093} {Bidirectional recurrent
  neural networks}.
\newblock \emph{Trans. Sig. Proc.}, 45(11):2673--2681.

\bibitem[{Schwartz et~al.(2018)Schwartz, Thomson, and Smith}]{cnn-rnn-wfsm}
Roy Schwartz, Sam Thomson, and Noah~A. Smith. 2018.
\newblock \href {https://doi.org/10.18653/v1/P18-1028} {Bridging {CNN}s,
  {RNN}s, and weighted finite-state machines}.
\newblock In \emph{Proceedings of the 56th Annual Meeting of the Association
  for Computational Linguistics (Volume 1: Long Papers)}, pages 295--305,
  Melbourne, Australia. Association for Computational Linguistics.

\bibitem[{Serrano and Smith(2019)}]{attention-interpretation}
Sofia Serrano and Noah~A. Smith. 2019.
\newblock \href {https://doi.org/10.18653/v1/P19-1282} {Is attention
  interpretable?}
\newblock In \emph{Proceedings of the 57th Annual Meeting of the Association
  for Computational Linguistics}, pages 2931--2951, Florence, Italy.
  Association for Computational Linguistics.

\bibitem[{Sha and Pereira(2003)}]{sha2003crf}
Fei Sha and Fernando Pereira. 2003.
\newblock \href {https://www.aclweb.org/anthology/N03-1028} {Shallow parsing
  with conditional random fields}.
\newblock In \emph{Proceedings of the 2003 Human Language Technology Conference
  of the North {A}merican Chapter of the Association for Computational
  Linguistics}, pages 213--220.

\bibitem[{Shannon(1948)}]{informationtheory}
Claude~E. Shannon. 1948.
\newblock A mathematical theory of communication.
\newblock \emph{Bell system technical journal}, 27(3):379--423.

\bibitem[{Shannon and Weaver(1963)}]{shannonweaver63}
Claude~E. Shannon and Warren Weaver. 1963.
\newblock \href {https://books.google.com/books?id=dk0n\_eGcqsUC} {\emph{The
  Mathematical Theory of Communication}}.
\newblock pt. 11. University of Illinois Press.

\bibitem[{Sharan et~al.(2017)Sharan, Kakade, Liang, and Valiant}]{sharan2017}
Vatsal Sharan, Sham~M. Kakade, Percy~S. Liang, and Gregory Valiant. 2017.
\newblock \href
  {http://papers.nips.cc/paper/6695-learning-overcomplete-hmms.pdf} {Learning
  overcomplete hmms}.
\newblock In I.~Guyon, U.~V. Luxburg, S.~Bengio, H.~Wallach, R.~Fergus,
  S.~Vishwanathan, and R.~Garnett, editors, \emph{Advances in Neural
  Information Processing Systems 30}, pages 940--949. Curran Associates, Inc.

\bibitem[{Sharan et~al.(2018)Sharan, Kakade, Liang, and Valiant}]{sharan2018}
Vatsal Sharan, Sham~M. Kakade, Percy~S. Liang, and Gregory Valiant. 2018.
\newblock \href {https://doi.org/10.1145/3188745.3188954} {Prediction with a
  short memory}.
\newblock In \emph{Proceedings of the 50th Annual ACM SIGACT Symposium on
  Theory of Computing}, STOC 2018, pages 1074--1087, New York, NY, USA.
  Association for Computing Machinery.

\bibitem[{Shaw et~al.(2018)Shaw, Uszkoreit, and Vaswani}]{relative-position}
Peter Shaw, Jakob Uszkoreit, and Ashish Vaswani. 2018.
\newblock \href {https://doi.org/10.18653/v1/N18-2074} {Self-attention with
  relative position representations}.
\newblock In \emph{Proceedings of the 2018 Conference of the North {A}merican
  Chapter of the Association for Computational Linguistics: Human Language
  Technologies, Volume 2 (Short Papers)}, pages 464--468, New Orleans,
  Louisiana. Association for Computational Linguistics.

\bibitem[{Shi et~al.(2017{\natexlab{a}})Shi, Huang, and Lee}]{shi2017-fast}
Tianze Shi, Liang Huang, and Lillian Lee. 2017{\natexlab{a}}.
\newblock \href {https://doi.org/10.18653/v1/D17-1002} {Fast(er) exact decoding
  and global training for transition-based dependency parsing via a minimal
  feature set}.
\newblock In \emph{Proceedings of the 2017 Conference on Empirical Methods in
  Natural Language Processing}, pages 12--23, Copenhagen, Denmark. Association
  for Computational Linguistics.

\bibitem[{Shi and Lee(2018)}]{tianze2018}
Tianze Shi and Lillian Lee. 2018.
\newblock \href {https://doi.org/10.18653/v1/D18-1159} {Valency-augmented
  dependency parsing}.
\newblock In \emph{Proceedings of the 2018 Conference on Empirical Methods in
  Natural Language Processing}, pages 1277--1291, Brussels, Belgium.
  Association for Computational Linguistics.

\bibitem[{Shi and Lee(2020)}]{shi2020}
Tianze Shi and Lillian Lee. 2020.
\newblock Extracting headless mwes from dependency parse trees: Parsing,
  tagging, and joint modeling approaches.
\newblock In \emph{Proceedings of the 58th Annual Meeting of the Association
  for Computational Linguistics}, Seattle, Washington. Association for
  Computational Linguistics.

\bibitem[{Shi et~al.(2017{\natexlab{b}})Shi, Wu, Chen, and
  Cheng}]{shi2017-global}
Tianze Shi, Felix~G. Wu, Xilun Chen, and Yao Cheng. 2017{\natexlab{b}}.
\newblock \href {https://doi.org/10.18653/v1/K17-3003} {Combining global models
  for parsing universal dependencies}.
\newblock In \emph{Proceedings of the {C}o{NLL} 2017 Shared Task: Multilingual
  Parsing from Raw Text to Universal Dependencies}, pages 31--39, Vancouver,
  Canada. Association for Computational Linguistics.

\bibitem[{Shiloach(1979)}]{minlan2.2}
Yossi Shiloach. 1979.
\newblock A minimum linear arrangement algorithm for undirected trees.
\newblock \emph{SIAM Journal on Computing}, 8(1):15--32.

\bibitem[{Singhal(2005)}]{googlesearch2005}
Amit Singhal. 2005.
\newblock \href {http://doi.acm.org/10.1145/1076034.1076037} {Challenges in
  running a commercial search engine}.
\newblock In \emph{SIGIR}, page 432.

\bibitem[{Singhal et~al.(2000)Singhal, Abney, Bacchiani, Collins, Hindle, and
  Pereira}]{semanticquestion}
Amit Singhal, Steve Abney, Michiel Bacchiani, Michael Collins, Donald Hindle,
  and Fernando Pereira. 2000.
\newblock At\&t at trec-8.

\bibitem[{Sinkhorn(1964)}]{sinkhorn1}
Richard Sinkhorn. 1964.
\newblock A relationship between arbitrary positive matrices and doubly
  stochastic matrices.
\newblock \emph{The annals of mathematical statistics}, 35(2):876--879.

\bibitem[{Sinkhorn and Knopp(1967)}]{sinkhorn2}
Richard Sinkhorn and Paul Knopp. 1967.
\newblock Concerning nonnegative matrices and doubly stochastic matrices.
\newblock \emph{Pacific Journal of Mathematics}, 21(2):343--348.

\bibitem[{Smith and Levy(2013)}]{smith2013}
Nathaniel~J. Smith and Roger Levy. 2013.
\newblock \href
  {https://doi.org/https://doi.org/10.1016/j.cognition.2013.02.013} {The effect
  of word predictability on reading time is logarithmic}.
\newblock \emph{Cognition}, 128(3):302 -- 319.

\bibitem[{Smith and Eisner(2006)}]{smith2006}
Noah~A. Smith and Jason Eisner. 2006.
\newblock \href {https://doi.org/10.3115/1220175.1220247} {Annealing structural
  bias in multilingual weighted grammar induction}.
\newblock In \emph{Proceedings of the 21st International Conference on
  Computational Linguistics and 44th Annual Meeting of the Association for
  Computational Linguistics}, pages 569--576, Sydney, Australia. Association
  for Computational Linguistics.

\bibitem[{Socher et~al.(2013{\natexlab{a}})Socher, Perelygin, Wu, Chuang,
  Manning, Ng, and Potts}]{recursivenn}
Richard Socher, Alex Perelygin, Jean Wu, Jason Chuang, Christopher~D. Manning,
  Andrew Ng, and Christopher Potts. 2013{\natexlab{a}}.
\newblock \href {https://www.aclweb.org/anthology/D13-1170} {Recursive deep
  models for semantic compositionality over a sentiment treebank}.
\newblock In \emph{Proceedings of the 2013 Conference on Empirical Methods in
  Natural Language Processing}, pages 1631--1642, Seattle, Washington, USA.
  Association for Computational Linguistics.

\bibitem[{Socher et~al.(2013{\natexlab{b}})Socher, Perelygin, Wu, Chuang,
  Manning, Ng, and Potts}]{sst}
Richard Socher, Alex Perelygin, Jean Wu, Jason Chuang, Christopher~D. Manning,
  Andrew Ng, and Christopher Potts. 2013{\natexlab{b}}.
\newblock \href {https://www.aclweb.org/anthology/D13-1170} {Recursive deep
  models for semantic compositionality over a sentiment treebank}.
\newblock In \emph{Proceedings of the 2013 Conference on Empirical Methods in
  Natural Language Processing}, pages 1631--1642, Seattle, Washington, USA.
  Association for Computational Linguistics.

\bibitem[{Sp{\"a}rck~Jones(1972)}]{inversedocumentfrequency}
Karen Sp{\"a}rck~Jones. 1972.
\newblock A statistical interpretation of term specificity and its application
  in retrieval.
\newblock \emph{Journal of Documentation}, 60(5):493--502.

\bibitem[{Srivastava et~al.(2014)Srivastava, Hinton, Krizhevsky, Sutskever, and
  Salakhutdinov}]{dropout}
Nitish Srivastava, Geoffrey Hinton, Alex Krizhevsky, Ilya Sutskever, and Ruslan
  Salakhutdinov. 2014.
\newblock Dropout: a simple way to prevent neural networks from overfitting.
\newblock \emph{The journal of machine learning research}, 15(1):1929--1958.

\bibitem[{Steedman(1987)}]{steedman87}
Mark Steedman. 1987.
\newblock Combinatory grammars and parasitic gaps.
\newblock \emph{Natural language and linguistic theory}.

\bibitem[{Steedman(2004)}]{steedman2004}
Mark Steedman. 2004.
\newblock The syntactic process.
\newblock In \emph{Language, speech, and communication}.

\bibitem[{Steedman(2008)}]{steedman2008}
Mark Steedman. 2008.
\newblock \href {https://doi.org/10.1162/coli.2008.34.1.137} {On becoming a
  discipline}.
\newblock \emph{Computational Linguistics}, 34(1):137--144.

\bibitem[{Strout et~al.(2019)Strout, Zhang, and
  Mooney}]{supervised-attention-explanation}
Julia Strout, Ye~Zhang, and Raymond Mooney. 2019.
\newblock \href {https://doi.org/10.18653/v1/W19-4807} {Do human rationales
  improve machine explanations?}
\newblock In \emph{Proceedings of the 2019 ACL Workshop BlackboxNLP: Analyzing
  and Interpreting Neural Networks for NLP}, pages 56--62, Florence, Italy.
  Association for Computational Linguistics.

\bibitem[{Strubell et~al.(2019)Strubell, Ganesh, and McCallum}]{environment}
Emma Strubell, Ananya Ganesh, and Andrew McCallum. 2019.
\newblock \href {https://doi.org/10.18653/v1/P19-1355} {Energy and policy
  considerations for deep learning in {NLP}}.
\newblock In \emph{Proceedings of the 57th Annual Meeting of the Association
  for Computational Linguistics}, pages 3645--3650, Florence, Italy.
  Association for Computational Linguistics.

\bibitem[{Sudoh et~al.(2011)Sudoh, Wu, Duh, Tsukada, and
  Nagata}]{sudoh2011postorder}
Katsuhito Sudoh, Xianchao Wu, Kevin Duh, Hajime Tsukada, and Masaaki Nagata.
  2011.
\newblock Post-ordering in statistical machine translation.
\newblock In \emph{Proceedings of the Machine Translation Summit}.

\bibitem[{Sutskever et~al.(2014)Sutskever, Vinyals, and Le}]{seq2seq}
Ilya Sutskever, Oriol Vinyals, and Quoc~V Le. 2014.
\newblock \href
  {http://papers.nips.cc/paper/5346-sequence-to-sequence-learning-with-neural-networks.pdf}
  {Sequence to sequence learning with neural networks}.
\newblock In Z.~Ghahramani, M.~Welling, C.~Cortes, N.~D. Lawrence, and K.~Q.
  Weinberger, editors, \emph{Advances in Neural Information Processing Systems
  27}, pages 3104--3112. Curran Associates, Inc.

\bibitem[{Sutton and McCallum(2012)}]{sutton2012crf}
Charles Sutton and Andrew McCallum. 2012.
\newblock \href {https://doi.org/10.1561/2200000013} {An introduction to
  conditional random fields}.
\newblock \emph{Foundations and Trends in Machine Learning}, 4(4):267--373.

\bibitem[{Sutton et~al.(2007)Sutton, McCallum, and
  Rohanimanesh}]{sutton2007crf}
Charles Sutton, Andrew McCallum, and Khashayar Rohanimanesh. 2007.
\newblock Dynamic conditional random fields: Factorized probabilistic models
  for labeling and segmenting sequence data.
\newblock \emph{The journal of machine learning research}, 8:693--723.

\bibitem[{Suzgun et~al.(2019{\natexlab{a}})Suzgun, Belinkov, Shieber, and
  Gehrmann}]{suzgun2019-dynamic}
Mirac Suzgun, Yonatan Belinkov, Stuart Shieber, and Sebastian Gehrmann.
  2019{\natexlab{a}}.
\newblock \href {https://doi.org/10.18653/v1/W19-3905} {{LSTM} networks can
  perform dynamic counting}.
\newblock In \emph{Proceedings of the Workshop on Deep Learning and Formal
  Languages: Building Bridges}, pages 44--54, Florence. Association for
  Computational Linguistics.

\bibitem[{Suzgun et~al.(2019{\natexlab{b}})Suzgun, Belinkov, and
  Shieber}]{suzgun2019-generalization}
Mirac Suzgun, Yonatan Belinkov, and Stuart~M. Shieber. 2019{\natexlab{b}}.
\newblock \href {https://doi.org/10.7275/s02b-4d91} {On evaluating the
  generalization of {LSTM} models in formal languages}.
\newblock In \emph{Proceedings of the Society for Computation in Linguistics
  ({SC}i{L}) 2019}, pages 277--286.

\bibitem[{Tanenhaus and Trueswell(1995)}]{tanenhaus1995}
Michael~K Tanenhaus and John~C Trueswell. 1995.
\newblock Sentence comprehension.

\bibitem[{Taskar et~al.(2004)Taskar, Klein, Collins, Koller, and
  Manning}]{taskar2004}
Ben Taskar, Dan Klein, Michael Collins, Daphne Koller, and Christopher~D.
  Manning. 2004.
\newblock \href {https://www.aclweb.org/anthology/W04-3201} {Max-margin
  parsing}.
\newblock In \emph{Proceedings of the 2004 Conference on Empirical Methods in
  Natural Language Processing}, pages 1--8, Barcelona, Spain. Association for
  Computational Linguistics.

\bibitem[{Temperley and Gildea(2018)}]{temperley2018}
David Temperley and Daniel Gildea. 2018.
\newblock \href {https://doi.org/10.1146/annurev-linguistics-011817-045617}
  {Minimizing syntactic dependency lengths: Typological/cognitive universal?}
\newblock \emph{Annual Review of Linguistics}, 4(1):67--80.

\bibitem[{Tesni{\`e}re(1959)}]{dependencygrammars}
Lucien Tesni{\`e}re. 1959.
\newblock Les {\'e}l{\'e}ments de syntaxe structurale.

\bibitem[{Toutanova et~al.(2002)Toutanova, Ilhan, and
  Manning}]{toutanova2002hmm}
Kristina Toutanova, H.~Tolga Ilhan, and Christopher Manning. 2002.
\newblock \href {https://doi.org/10.3115/1118693.1118705} {Extentions to
  {HMM}-based statistical word alignment models}.
\newblock In \emph{Proceedings of the 2002 Conference on Empirical Methods in
  Natural Language Processing ({EMNLP} 2002)}, pages 87--94. Association for
  Computational Linguistics.

\bibitem[{Toutanova and Manning(2000)}]{toutanova2000memm}
Kristina Toutanova and Christopher~D. Manning. 2000.
\newblock \href {https://doi.org/10.3115/1117794.1117802} {Enriching the
  knowledge sources used in a maximum entropy part-of-speech tagger}.
\newblock In \emph{2000 Joint {SIGDAT} Conference on Empirical Methods in
  Natural Language Processing and Very Large Corpora}, pages 63--70, Hong Kong,
  China. Association for Computational Linguistics.

\bibitem[{Tromble and Eisner(2009)}]{tromble2009preorder}
Roy Tromble and Jason Eisner. 2009.
\newblock \href {https://www.aclweb.org/anthology/D09-1105} {Learning linear
  ordering problems for better translation}.
\newblock In \emph{Proceedings of the 2009 Conference on Empirical Methods in
  Natural Language Processing}, pages 1007--1016, Singapore. Association for
  Computational Linguistics.

\bibitem[{Turney(2002)}]{classificationmutualinformation}
Peter Turney. 2002.
\newblock \href {https://doi.org/10.3115/1073083.1073153} {Thumbs up or thumbs
  down? semantic orientation applied to unsupervised classification of
  reviews}.
\newblock In \emph{Proceedings of the 40th Annual Meeting of the Association
  for Computational Linguistics}, pages 417--424, Philadelphia, Pennsylvania,
  USA. Association for Computational Linguistics.

\bibitem[{Underwood(2012)}]{ldaliterature}
William~E. Underwood. 2012.
\newblock What can topic models of pmla teach us about the history of literary
  scholarship?
\newblock \emph{Journal of Digital Humanities}, 2(1).

\bibitem[{Vania et~al.(2019)Vania, Kementchedjhieva, S{\o}gaard, and
  Lopez}]{lowresourcedependency}
Clara Vania, Yova Kementchedjhieva, Anders S{\o}gaard, and Adam Lopez. 2019.
\newblock \href {https://doi.org/10.18653/v1/D19-1102} {A systematic comparison
  of methods for low-resource dependency parsing on genuinely low-resource
  languages}.
\newblock In \emph{Proceedings of the 2019 Conference on Empirical Methods in
  Natural Language Processing and the 9th International Joint Conference on
  Natural Language Processing (EMNLP-IJCNLP)}, pages 1105--1116, Hong Kong,
  China. Association for Computational Linguistics.

\bibitem[{Vaswani et~al.(2017)Vaswani, Shazeer, Parmar, Uszkoreit, Jones,
  Gomez, Kaiser, and Polosukhin}]{transformers}
Ashish Vaswani, Noam Shazeer, Niki Parmar, Jakob Uszkoreit, Llion Jones,
  Aidan~N Gomez, {\L}ukasz Kaiser, and Illia Polosukhin. 2017.
\newblock Attention is all you need.
\newblock In \emph{Advances in neural information processing systems}, pages
  5998--6008.

\bibitem[{Venhuizen et~al.(2019)Venhuizen, Crocker, and
  Brouwer}]{venhuizen2019}
Noortje~J. Venhuizen, Matthew~W. Crocker, and Harm Brouwer. 2019.
\newblock \href {https://doi.org/10.1080/0163853X.2018.1448677}
  {Expectation-based comprehension: Modeling the interaction of world knowledge
  and linguistic experience}.
\newblock \emph{Discourse Processes}, 56(3):229--255.

\bibitem[{Vinyals et~al.(2016)Vinyals, Bengio, and Kudlur}]{sequencessets}
Oriol Vinyals, Samy Bengio, and Manjunath Kudlur. 2016.
\newblock \href {https://arxiv.org/abs/1511.06391} {Order matters: Sequence to
  sequence for sets}.
\newblock In \emph{International Conference on Learning Representations}.

\bibitem[{Visweswariah et~al.(2011)Visweswariah, Rajkumar, Gandhe, Ramanathan,
  and Navratil}]{visweswariah2011preorder}
Karthik Visweswariah, Rajakrishnan Rajkumar, Ankur Gandhe, Ananthakrishnan
  Ramanathan, and Jiri Navratil. 2011.
\newblock \href {https://www.aclweb.org/anthology/D11-1045} {A word reordering
  model for improved machine translation}.
\newblock In \emph{Proceedings of the 2011 Conference on Empirical Methods in
  Natural Language Processing}, pages 486--496, Edinburgh, Scotland, UK.
  Association for Computational Linguistics.

\bibitem[{Voorhees and Harman(2000{\natexlab{a}})}]{trec8}
Ellen~M. Voorhees and Donna~K. Harman. 2000{\natexlab{a}}.
\newblock The eighth text retrieval conference (trec-8).
\newblock Technical report.

\bibitem[{Voorhees and Harman(2000{\natexlab{b}})}]{trec9}
Ellen~M. Voorhees and Donna~K. Harman. 2000{\natexlab{b}}.
\newblock The ninth text retrieval conference (trec-9).
\newblock Technical report.

\bibitem[{Voorhees and Harman(2001)}]{trec10}
Ellen~M. Voorhees and Donna~K. Harman. 2001.
\newblock The tenth text retrieval conference (trec-10).
\newblock Technical report.

\bibitem[{Wang et~al.(2019{\natexlab{a}})Wang, Pruksachatkun, Nangia, Singh,
  Michael, Hill, Levy, and Bowman}]{superglue}
Alex Wang, Yada Pruksachatkun, Nikita Nangia, Amanpreet Singh, Julian Michael,
  Felix Hill, Omer Levy, and Samuel~R. Bowman. 2019{\natexlab{a}}.
\newblock Superglue: A stickier benchmark for general-purpose language
  understanding systems.
\newblock In \emph{Advances in Neural Information Processing Systems}, pages
  3261--3275.

\bibitem[{Wang et~al.(2019{\natexlab{b}})Wang, Singh, Michael, Hill, Levy, and
  Bowman}]{glue}
Alex Wang, Amanpreet Singh, Julian Michael, Felix Hill, Omer Levy, and
  Samuel~R. Bowman. 2019{\natexlab{b}}.
\newblock \href {https://openreview.net/forum?id=rJ4km2R5t7} {{GLUE}: A
  multi-task benchmark and analysis platform for natural language
  understanding}.
\newblock In \emph{International Conference on Learning Representations}.

\bibitem[{Wang et~al.(2020)Wang, Zhao, Lioma, Li, Zhang, and
  Simonsen}]{complexposition}
Benyou Wang, Donghao Zhao, Christina Lioma, Qiuchi Li, Peng Zhang, and
  Jakob~Grue Simonsen. 2020.
\newblock \href {https://openreview.net/forum?id=Hke-WTVtwr} {Encoding word
  order in complex embeddings}.
\newblock In \emph{International Conference on Learning Representations}.

\bibitem[{Wang et~al.(2019{\natexlab{c}})Wang, Wang, Chen, Wang, and
  Kuo}]{wordembeddingssurvey}
Bin Wang, Angela Wang, Fenxiao Chen, Yuncheng Wang, and C.-C.~Jay Kuo.
  2019{\natexlab{c}}.
\newblock \href {https://doi.org/10.1017/ATSIP.2019.12} {Evaluating word
  embedding models: methods and experimental results}.
\newblock \emph{APSIPA Transactions on Signal and Information Processing},
  8:e19.

\bibitem[{Wang et~al.(2007)Wang, Collins, and Koehn}]{wang2007preorder}
Chao Wang, Michael Collins, and Philipp Koehn. 2007.
\newblock \href {https://www.aclweb.org/anthology/D07-1077} {{C}hinese
  syntactic reordering for statistical machine translation}.
\newblock In \emph{Proceedings of the 2007 Joint Conference on Empirical
  Methods in Natural Language Processing and Computational Natural Language
  Learning ({EMNLP}-{C}o{NLL})}, pages 737--745, Prague, Czech Republic.
  Association for Computational Linguistics.

\bibitem[{Wang and Eisner(2016)}]{galactic-treebank}
Dingquan Wang and Jason Eisner. 2016.
\newblock \href {https://doi.org/10.1162/tacl_a_00113} {The galactic
  dependencies treebanks: Getting more data by synthesizing new languages}.
\newblock \emph{Transactions of the Association for Computational Linguistics},
  4:491--505.

\bibitem[{Wang and Eisner(2018)}]{wang-eisner-2018}
Dingquan Wang and Jason Eisner. 2018.
\newblock \href {https://doi.org/10.18653/v1/D18-1163} {Synthetic data made to
  order: The case of parsing}.
\newblock In \emph{Proceedings of the 2018 Conference on Empirical Methods in
  Natural Language Processing}, pages 1325--1337, Brussels, Belgium.
  Association for Computational Linguistics.

\bibitem[{Wang et~al.(2017)Wang, Skerry-Ryan, Stanton, Wu, Weiss, Jaitly, Yang,
  Xiao, Chen, Bengio, Le, Agiomyrgiannakis, Clark, and
  Saurous}]{neuralspeechsynthesis}
Yuxuan Wang, R.~J. Skerry-Ryan, Daisy Stanton, Yonghui Wu, Ron~J. Weiss,
  Navdeep Jaitly, Zongheng Yang, Ying Xiao, Zhifeng Chen, Samy Bengio, Quoc~V.
  Le, Yannis Agiomyrgiannakis, Rob Clark, and Rif~A. Saurous. 2017.
\newblock Tacotron: Towards end-to-end speech synthesis.
\newblock In \emph{INTERSPEECH}.

\bibitem[{Weaver(1949/55)}]{weaver49}
Warren Weaver. 1949/55.
\newblock Translation.
\newblock In William~N. Locke and A.~Donald Boothe, editors, \emph{Machine
  Translation of Languages}, pages 15--23. MIT Press, Cambridge, MA.
\newblock Reprinted from a memorandum written by Weaver in 1949.

\bibitem[{Weir(1994)}]{nadeb}
E.~M.~Helen Weir. 1994.
\newblock Nad\"{e}b.
\newblock In Peter Kahrel and René van~den Berg, editors, \emph{Typological
  Studies in Negation}, pages 291--323. John Benjamins, Amsterdam.

\bibitem[{Weiss et~al.(2018)Weiss, Goldberg, and Yahav}]{weiss2018}
Gail Weiss, Yoav Goldberg, and Eran Yahav. 2018.
\newblock \href {https://doi.org/10.18653/v1/P18-2117} {On the practical
  computational power of finite precision {RNN}s for language recognition}.
\newblock In \emph{Proceedings of the 56th Annual Meeting of the Association
  for Computational Linguistics (Volume 2: Short Papers)}, pages 740--745,
  Melbourne, Australia. Association for Computational Linguistics.

\bibitem[{Wiegreffe and Pinter(2019)}]{attention-not-not-explanation}
Sarah Wiegreffe and Yuval Pinter. 2019.
\newblock \href {https://doi.org/10.18653/v1/D19-1002} {Attention is not not
  explanation}.
\newblock In \emph{Proceedings of the 2019 Conference on Empirical Methods in
  Natural Language Processing and the 9th International Joint Conference on
  Natural Language Processing (EMNLP-IJCNLP)}, pages 11--20, Hong Kong, China.
  Association for Computational Linguistics.

\bibitem[{Wieting et~al.(2016)Wieting, Bansal, Gimpel, and
  Livescu}]{sentenceaverage}
John Wieting, Mohit Bansal, Kevin Gimpel, and Karen Livescu. 2016.
\newblock \href {https://arxiv.org/abs/1511.08198} {Towards universal
  paraphrastic sentence embeddings}.
\newblock In \emph{International Conference on Learning Representations}.

\bibitem[{Wilcox et~al.(2019)Wilcox, Levy, and
  Futrell}]{wilcox-etal-2019-hierarchical}
Ethan Wilcox, Roger Levy, and Richard Futrell. 2019.
\newblock \href {https://doi.org/10.18653/v1/W19-4819} {Hierarchical
  representation in neural language models: Suppression and recovery of
  expectations}.
\newblock In \emph{Proceedings of the 2019 ACL Workshop BlackboxNLP: Analyzing
  and Interpreting Neural Networks for NLP}, pages 181--190, Florence, Italy.
  Association for Computational Linguistics.

\bibitem[{Wilcox et~al.(2018)Wilcox, Levy, Morita, and
  Futrell}]{wilcox-etal-2018-rnn}
Ethan Wilcox, Roger Levy, Takashi Morita, and Richard Futrell. 2018.
\newblock \href {https://doi.org/10.18653/v1/W18-5423} {What do {RNN} language
  models learn about filler{--}gap dependencies?}
\newblock In \emph{Proceedings of the 2018 {EMNLP} Workshop {B}lackbox{NLP}:
  Analyzing and Interpreting Neural Networks for {NLP}}, pages 211--221,
  Brussels, Belgium. Association for Computational Linguistics.

\bibitem[{Wilks(1997)}]{wilks1997}
Yorick Wilks. 1997.
\newblock Information extraction as a core language technology.
\newblock In \emph{Information Extraction A Multidisciplinary Approach to an
  Emerging Information Technology}, pages 1--9, Berlin, Heidelberg. Springer
  Berlin Heidelberg.

\bibitem[{Wu et~al.(2016)Wu, Schuster, Chen, Le, Norouzi, Macherey, Krikun,
  Cao, Gao, Macherey, Klingner, Shah, Johnson, Liu, Kaiser, Gouws, Kato, Kudo,
  Kazawa, Stevens, Kurian, Patil, Wang, Young, Smith, Riesa, Rudnick, Vinyals,
  Corrado, Hughes, and Dean}]{googlenmt}
Yonghui Wu, Mike Schuster, Zhifeng Chen, Quoc~V. Le, Mohammad Norouzi, Wolfgang
  Macherey, Maxim Krikun, Yuan Cao, Qin Gao, Klaus Macherey, Jeff Klingner,
  Apurva Shah, Melvin Johnson, Xiaobing Liu, {\L}ukasz Kaiser, Stephan Gouws,
  Yoshikiyo Kato, Taku Kudo, Hideto Kazawa, Keith Stevens, George Kurian,
  Nishant Patil, Wei Wang, Cliff Young, Jason Smith, Jason Riesa, Alex Rudnick,
  Oriol Vinyals, Gregory~S. Corrado, Macduff Hughes, and Jeffrey Dean. 2016.
\newblock Google's neural machine translation system: Bridging the gap between
  human and machine translation.
\newblock \emph{ArXiv}, abs/1609.08144.

\bibitem[{Xia and McCord(2004)}]{xia2004preorder}
Fei Xia and Michael McCord. 2004.
\newblock \href {https://www.aclweb.org/anthology/C04-1073} {Improving a
  statistical {MT} system with automatically learned rewrite patterns}.
\newblock In \emph{{COLING} 2004: Proceedings of the 20th International
  Conference on Computational Linguistics}, pages 508--514, Geneva,
  Switzerland. COLING.

\bibitem[{Xing et~al.(2017)Xing, Cambria, and Welsch}]{xing2017}
Frank~Z. Xing, Erik Cambria, and Roy~E. Welsch. 2017.
\newblock Natural language based financial forecasting: a survey.
\newblock \emph{Artificial Intelligence Review}, 50:49--73.

\bibitem[{Xu et~al.(2009)Xu, Kang, Ringgaard, and Och}]{xu2009preorder}
Peng Xu, Jaeho Kang, Michael Ringgaard, and Franz Och. 2009.
\newblock \href {https://www.aclweb.org/anthology/N09-1028} {Using a dependency
  parser to improve {SMT} for subject-object-verb languages}.
\newblock In \emph{Proceedings of Human Language Technologies: The 2009 Annual
  Conference of the North {A}merican Chapter of the Association for
  Computational Linguistics}, pages 245--253, Boulder, Colorado. Association
  for Computational Linguistics.

\bibitem[{Xu et~al.(2020)Xu, Zhao, Song, Stewart, and
  Ermon}]{information-theory-computational-constraints}
Yilun Xu, Shengjia Zhao, Jiaming Song, Russell Stewart, and Stefano Ermon.
  2020.
\newblock \href {https://openreview.net/forum?id=r1eBeyHFDH} {A theory of
  usable information under computational constraints}.
\newblock In \emph{International Conference on Learning Representations}.

\bibitem[{Yamashita and Chang(2001)}]{yamashita2001}
Hiroko Yamashita and Franklin Chang. 2001.
\newblock \href {https://doi.org/https://doi.org/10.1016/S0010-0277(01)00121-4}
  {“long before short” preference in the production of a head-final
  language}.
\newblock \emph{Cognition}, 81(2):B45 -- B55.

\bibitem[{Yang(1999)}]{textclassificationsurvey3}
Yiming Yang. 1999.
\newblock An evaluation of statistical approaches to text categorization.
\newblock \emph{Information retrieval}, 1(1-2):69--90.

\bibitem[{Yang and Liu(1999)}]{textclassificationsurvey2}
Yiming Yang and Xin Liu. 1999.
\newblock A re-examination of text categorization methods.
\newblock In \emph{Proceedings of the 22nd annual international ACM SIGIR
  conference on Research and development in information retrieval}, pages
  42--49.

\bibitem[{Yang and Pedersen(1997)}]{textclassificationsurvey1}
Yiming Yang and Jan~O. Pedersen. 1997.
\newblock A comparative study on feature selection in text categorization.
\newblock In \emph{Proceedings of the Fourteenth International Conference on
  Machine Learning}, pages 412--420.

\bibitem[{Yang et~al.(2019)Yang, Dai, Yang, Carbonell, Salakhutdinov, and
  Le}]{xlnet}
Zhilin Yang, Zihang Dai, Yiming Yang, Jaime Carbonell, Ruslan Salakhutdinov,
  and Quoc~V Le. 2019.
\newblock \href
  {https://papers.nips.cc/paper/8812-xlnet-generalized-autoregressive-pretraining-for-language-understanding}
  {Xlnet: Generalized autoregressive pretraining for language understanding}.
\newblock In \emph{Advances in Neural Information Processing Systems 33}.
  Curran Associates, Inc.

\bibitem[{Yannakakis(1985)}]{cutwidth-nlogn/linear}
Mihalis Yannakakis. 1985.
\newblock \href {https://doi.org/10.1145/4221.4228} {A polynomial algorithm for
  the min-cut linear arrangement of trees}.
\newblock \emph{Journal ACM}, 32(4):950--988.

\bibitem[{Ye et~al.(2009)Ye, Zhang, and Law}]{ye2009}
Qiang Ye, Ziqiong Zhang, and Rob Law. 2009.
\newblock \href {https://doi.org/https://doi.org/10.1016/j.eswa.2008.07.035}
  {Sentiment classification of online reviews to travel destinations by
  supervised machine learning approaches}.
\newblock \emph{Expert Systems with Applications}, 36(3, Part 2):6527 -- 6535.

\bibitem[{Yu and Deng(2014)}]{asr2014}
Dong Yu and Li~Deng. 2014.
\newblock \emph{Automatic Speech Recognition: A Deep Learning Approach}.
\newblock Springer Publishing Company, Incorporated.

\bibitem[{Zhang et~al.(2017)Zhang, Zhong, Chen, Angeli, and
  Manning}]{position-aware-relex}
Yuhao Zhang, Victor Zhong, Danqi Chen, Gabor Angeli, and Christopher~D.
  Manning. 2017.
\newblock \href {https://doi.org/10.18653/v1/D17-1004} {Position-aware
  attention and supervised data improve slot filling}.
\newblock In \emph{Proceedings of the 2017 Conference on Empirical Methods in
  Natural Language Processing}, pages 35--45, Copenhagen, Denmark. Association
  for Computational Linguistics.

\bibitem[{Zhelezniak et~al.(2019)Zhelezniak, Savkov, Shen, Moramarco, Flann,
  and Hammerla}]{maxpool}
Vitalii Zhelezniak, Aleksandar Savkov, April Shen, Francesco Moramarco, Jack
  Flann, and Nils~Y. Hammerla. 2019.
\newblock \href {https://openreview.net/forum?id=SkxXg2C5FX} {Don't settle for
  average, go for the max: Fuzzy sets and max-pooled word vectors}.
\newblock In \emph{International Conference on Learning Representations}.

\end{thebibliography}
\appendix
\chapter{Reproducibility}\label{appendix:reproducibility}
In this appendix, we provide further details required to exactly reproduce this work. Particularly significant is that we release the code (\autoref{appendix:reproducibility-code-release}) for running all experiments and generating all tables/figures/visualizations used in this work. We adhere to the guidelines presented in \citet{dodge2019}, which were further extended in the EMNLP 2020 reproducibility guidelines\footnote{\url{https://2020.emnlp.org/call-for-papers}}, to provide strong and rigorous guarantees on the reproducibility of our work.
\section{Additional Experimental Details}\label{appendix:reproducibility-experimental-details}
We use Python 3.6.9 throughout this work along with PyTorch 1.5.0. \\

\noindent \textbf{Tokenization and Dependency Parsing.} Tokenization is done using the English \texttt{en\_core\_web\_lg} model released in \texttt{spaCy} version 2.2.4.  Dependency parsing is done using the same English \texttt{en\_core\_web\_lg} model released in \texttt{spaCy} version 2.2.4. The model is 789MB and is trained on OntoNotes 5 using a multi-task\footnote{The other tasks are part-of-speech tagging and named entity recognition.} convolutional neural network-based model with pretrained word embeddings initialized using GloVe.\footnote{The GloVe embeddings are trained on Common Crawl data.} \\

\noindent \textbf{Data Preprocessing.} Beyond tokenizing the data, we do no further pre-processing except removing ill-formed examples in any datasets (where there is an input and no label or vice versa). We find that there are $4$ such examples in the \texttt{CR} dataset and none in any of the other four datasets. \\

\noindent\textbf{Pretrained Representations.} We use pretrained ELMo representations that are are obtained by using data tokenized using \texttt{spaCy} as described previously. The exact pretrained ELMo encoders are available here\footnote{\url{https://s3-us-west-2.amazonaws.com/allennlp/models/elmo/2x4096_512_2048cnn_2xhighway/elmo_2x4096_512_2048cnn_2xhighway_weights.hdf5}; file name describes model parameters under the AllenNLP naming conventions.} and concatenate the representations from each of the two layers (yielding $2048$-dimensional vectors). \\

\noindent\textbf{Randomness.} We fix the Python and PyTorch random seeds to be random seed $0$. \\ 

\noindent\textbf{Reverse Cuthill-McKee.} We use the implementation of the algorithm provided in \texttt{SciPy} 1.4.1. \\ 
\section{Code Release}\label{appendix:reproducibility-code-release}
All code for this work is made publicly available. The code is hosted at \url{https://github.com/rishibommasani/MastersThesis}. We note that we provide documentation for most core functionality and clarifiications can be provided upon request.

\section{Data Access}\label{appendix:reproducibility-data-access}
All data used in this work is publicly available. The copies of the datasets we use are available at \url{https://github.com/harvardnlp/sent-conv-torch/tree/master/data} via the Harvard NLP group. The data can also be accessed from the corresponding websites for each of the datasets:
\begin{itemize}
    \item \texttt{CR} --- \url{https://www.cs.uic.edu/~liub/FBS/sentiment-analysis.html#datasets} \\
    Hosted by Bing Liu.
    \item \texttt{SUBJ} --- \url{https://www.cs.cornell.edu/people/pabo/movie-review-data/} \\ 
    Hosted by Lillian Lee.
    \item \texttt{SST-2} --- \url{https://nlp.stanford.edu/sentiment/} \\
    Hosted by Stanford NLP group.
    \item \texttt{SST-5} --- \url{https://nlp.stanford.edu/sentiment/} \\
    Hosted by Stanford NLP group.
    \item \texttt{TREC} --- \url{https://cogcomp.seas.upenn.edu/Data/QA/QC/} \\
    Hosted by Dan Roth and UPenn CogComp group.
\end{itemize}

\section{Contact Information}\label{appendix:reproducibility-questions-concerns}
Questions, concerns, and errata should be directed to the thesis author at any of:
\begin{itemize}
    \item \href{mailto:nlprishi@stanford.edu}{nlprishi@stanford.edu}
    \item \href{mailto:rb724@cornell.edu}{rb724@cornell.edu} 
    \item \href{mailto:rishibommasani@gmail.com}{rishibommasani@gmail.com}
\end{itemize}
Any and all remaining errors in this thesis are strictly due to the author. 
\chapter{Additional Results}\label{chapter:appendix-additional-results}
In this appendix, we provide further results that were not included in the main body of the thesis. We first provide the results for all hyperparameters with the stopping condition we used in the main body of thesis: stopping after the fixed threshold of 12 epochs. These results appear in Tables~\ref{tab:pool=twelve-p=0.0-h=32}--\ref{tab:pool=twelve-p=0.2-h=256}. For further completeness, we provide the results for all hyperparameters with the stopping condition after which we never saw any improvements (for all models, datasets, and orders): stopping after the fixed threshold of 15 epochs. These results appear in Tables~\ref{tab:pool=fifteen-p=0.0-h=32}--\ref{tab:pool=fifteen-p=0.2-h=256}. 
\begin{table*} \
\centering
\large{
\begin{tabular}{cccccc}
\toprule
& \texttt{CR} & \texttt{SUBJ} & \texttt{SST-2} & \texttt{SST-5} & \texttt{TREC} \\
\midrule
$r_I$ & 0.833 & 0.95 & 0.87 & 0.464 & 0.95 \\
$r_r$ & 0.823 & 0.947 & 0.87 & 0.442 & 0.944 \\
\midrule
$r_b$ & 0.839 & 0.941 & 0.867 & 0.452 & 0.962 \\
$r_c$ & 0.852 & 0.95 & 0.873 & 0.453 & 0.952 \\
$r_m$ & 0.836 & 0.946 & 0.862 & 0.456 & 0.948 \\
\midrule
$\tilde{r}_b$ & 0.839 & 0.945 & 0.879 & 0.478 & 0.954 \\
$\tilde{r}_c$ & 0.833 & 0.952 & 0.875 & 0.464 & 0.952 \\
$\tilde{r}_m$ & 0.841 & 0.958 & 0.876 & 0.448 & 0.954 \\
\bottomrule
\end{tabular}}
\caption{Full classification results for $h = 32, p = 0.0$. Results use \texttt{pretrain-permute-finetune} framework with the order specified in each row. All other hyperparameters are set as described previously. The top part of the table refers to baselines. The middle part of the table refers to orders derived from pure optimization algorithms. The bottom part of the table refers to orders derived from heuristic algorithms we introduce using \texttt{Transposition Monte Carlo}. }
\label{tab:pool=twelve-p=0.0-h=32}
\end{table*}

\begin{table*} \
\centering
\large{
\begin{tabular}{cccccc}
\toprule
& \texttt{CR} & \texttt{SUBJ} & \texttt{SST-2} & \texttt{SST-5} & \texttt{TREC} \\
\midrule
$r_I$ & 0.833 & 0.955 & 0.882 & 0.481 & 0.956 \\
$r_r$ & 0.840 & 0.945 & 0.864 & 0.458 & 0.954 \\
\midrule
$r_b$ & 0.854 & 0.948 & 0.865 & 0.481 & 0.958 \\
$r_c$ & 0.831 & 0.942 & 0.871 & 0.478 & 0.95 \\
$r_m$ & 0.831 & 0.95 & 0.864 & 0.464 & 0.958 \\
\midrule
$\tilde{r}_b$ & 0.839 & 0.946 & 0.866 & 0.46 & 0.952 \\
$\tilde{r}_c$ & 0.849 & 0.956 & 0.873 & 0.46 & 0.958 \\
$\tilde{r}_m$ & 0.831 & 0.949 & 0.868 & 0.457 & 0.962 \\
\bottomrule
\end{tabular}}
\caption{Full classification results for $h = 64, p = 0.02$. Results use \texttt{pretrain-permute-finetune} framework with the order specified in each row. All other hyperparameters are set as described previously. The top part of the table refers to baselines. The middle part of the table refers to orders derived from pure optimization algorithms. The bottom part of the table refers to orders derived from heuristic algorithms we introduce using \texttt{Transposition Monte Carlo}. }
\label{tab:pool=twelve-p=0.02-h=64}
\end{table*}

\begin{table*} \
\centering
\large{
\begin{tabular}{cccccc}
\toprule
& \texttt{CR} & \texttt{SUBJ} & \texttt{SST-2} & \texttt{SST-5} & \texttt{TREC} \\
\midrule
$r_I$ & 0.783 & 0.95 & 0.868 & 0.477 & 0.956 \\
$r_r$ & 0.823 & 0.94 & 0.876 & 0.446 & 0.952 \\
\midrule
$r_b$ & 0.786 & 0.937 & 0.869 & 0.471 & 0.962 \\
$r_c$ & 0.852 & 0.946 & 0.861 & 0.474 & 0.948 \\
$r_m$ & 0.836 & 0.918 & 0.868 & 0.456 & 0.95 \\
\midrule
$\tilde{r}_b$ & 0.841 & 0.947 & 0.864 & 0.478 & 0.956 \\
$\tilde{r}_c$ & 0.844 & 0.946 & 0.87 & 0.47 & 0.952 \\
$\tilde{r}_m$ & 0.825 & 0.948 & 0.872 & 0.461 & 0.96 \\
\bottomrule
\end{tabular}}
\caption{Full classification results for $h = 64, p = 0.2$. Results use \texttt{pretrain-permute-finetune} framework with the order specified in each row. All other hyperparameters are set as described previously. The top part of the table refers to baselines. The middle part of the table refers to orders derived from pure optimization algorithms. The bottom part of the table refers to orders derived from heuristic algorithms we introduce using \texttt{Transposition Monte Carlo}. }
\label{tab:pool=twelve-p=0.2-h=64}
\end{table*}

\begin{table*} \
\centering
\large{
\begin{tabular}{cccccc}
\toprule
& \texttt{CR} & \texttt{SUBJ} & \texttt{SST-2} & \texttt{SST-5} & \texttt{TREC} \\
\midrule
$r_I$ & 0.852 & 0.951 & 0.884 & 0.485 & 0.946 \\
$r_r$ & 0.840 & 0.95 & 0.877 & 0.476 & 0.952 \\
\midrule
$r_b$ & 0.841 & 0.95 & 0.873 & 0.481 & 0.956 \\
$r_c$ & 0.86 & 0.953 & 0.874 & 0.481 & 0.954 \\
$r_m$ & 0.836 & 0.951 & 0.874 & 0.482 & 0.962 \\
\midrule
$\tilde{r}_b$ & 0.847 & 0.947 & 0.882 & 0.469 & 0.95 \\
$\tilde{r}_c$ & 0.847 & 0.953 & 0.871 & 0.494 & 0.962 \\
$\tilde{r}_m$ & 0.828 & 0.951 & 0.876 & 0.467 & 0.962 \\
\bottomrule
\end{tabular}}
\caption{Full classification results for $h = 128, p = 0.02$. Results use \texttt{pretrain-permute-finetune} framework with the order specified in each row. All other hyperparameters are set as described previously. The top part of the table refers to baselines. The middle part of the table refers to orders derived from pure optimization algorithms. The bottom part of the table refers to orders derived from heuristic algorithms we introduce using \texttt{Transposition Monte Carlo}. }
\label{tab:pool=twelve-p=0.02-h=128}
\end{table*}

\begin{table*} \
\centering
\large{
\begin{tabular}{cccccc}
\toprule
& \texttt{CR} & \texttt{SUBJ} & \texttt{SST-2} & \texttt{SST-5} & \texttt{TREC} \\
\midrule
$r_I$ & 0.847 & 0.945 & 0.874 & 0.455 & 0.952 \\
$r_r$ & 0.839 & 0.942 & 0.851 & 0.419 & 0.952 \\
\midrule
$r_b$ & 0.825 & 0.948 & 0.865 & 0.445 & 0.95 \\
$r_c$ & 0.852 & 0.944 & 0.867 & 0.469 & 0.954 \\
$r_m$ & 0.817 & 0.942 & 0.87 & 0.441 & 0.954 \\
\midrule
$\tilde{r}_b$ & 0.831 & 0.943 & 0.874 & 0.468 & 0.948 \\
$\tilde{r}_c$ & 0.849 & 0.948 & 0.873 & 0.462 & 0.952 \\
$\tilde{r}_m$ & 0.844 & 0.933 & 0.863 & 0.473 & 0.958 \\
\bottomrule
\end{tabular}}
\caption{Full classification results for $h = 128, p = 0.2$. Results use \texttt{pretrain-permute-finetune} framework with the order specified in each row. All other hyperparameters are set as described previously. The top part of the table refers to baselines. The middle part of the table refers to orders derived from pure optimization algorithms. The bottom part of the table refers to orders derived from heuristic algorithms we introduce using \texttt{Transposition Monte Carlo}. }
\label{tab:pool=twelve-p=0.2-h=128}
\end{table*}

\begin{table*} \
\centering
\large{
\begin{tabular}{cccccc}
\toprule
& \texttt{CR} & \texttt{SUBJ} & \texttt{SST-2} & \texttt{SST-5} & \texttt{TREC} \\
\midrule
$r_I$ & 0.836 & 0.951 & 0.896 & 0.476 & 0.962 \\
$r_r$ & 0.842 & 0.934 & 0.864 & 0.451 & 0.952 \\
\midrule
$r_b$ & 0.825 & 0.952 & 0.87 & 0.462 & 0.966 \\
$r_c$ & 0.836 & 0.946 & 0.864 & 0.462 & 0.958 \\
$r_m$ & 0.841 & 0.951 & 0.859 & 0.461 & 0.96 \\
\midrule
$\tilde{r}_b$ & 0.852 & 0.949 & 0.874 & 0.461 & 0.956 \\
$\tilde{r}_c$ & 0.825 & 0.932 & 0.874 & 0.467 & 0.968 \\
$\tilde{r}_m$ & 0.841 & 0.941 & 0.86 & 0.476 & 0.958 \\
\bottomrule
\end{tabular}}
\caption{Full classification results for $h = 256, p = 0.2$. Results use \texttt{pretrain-permute-finetune} framework with the order specified in each row. All other hyperparameters are set as described previously. The top part of the table refers to baselines. The middle part of the table refers to orders derived from pure optimization algorithms. The bottom part of the table refers to orders derived from heuristic algorithms we introduce using \texttt{Transposition Monte Carlo}. }
\label{tab:pool=twelve-p=0.2-h=256}
\end{table*}

\pagebreak 
\begin{table*}[t]
\centering
\large{
\begin{tabular}{cccccc}
\toprule
& \texttt{CR} & \texttt{SUBJ} & \texttt{SST-2} & \texttt{SST-5} & \texttt{TREC} \\
\midrule
$r_I$ & 0.841 & 0.948 & 0.867 & 0.462 & 0.944 \\
$r_r$ & 0.825 & 0.945 & 0.87 & 0.443 & 0.942 \\
\midrule
$r_b$ & 0.841 & 0.94 & 0.864 & 0.45 & 0.932 \\
$r_c$ & 0.844 & 0.948 & 0.87 & 0.462 & 0.948 \\
$r_m$ & 0.836 & 0.946 & 0.858 & 0.457 & 0.954 \\
\midrule
$\tilde{r}_b$ & 0.833 & 0.941 & 0.877 & 0.469 & 0.952 \\
$\tilde{r}_c$ & 0.828 & 0.951 & 0.873 & 0.466 & 0.95 \\
$\tilde{r}_m$ & 0.844 & 0.953 & 0.877 & 0.456 & 0.948 \\
\bottomrule
\end{tabular}}
\caption{Full classification results for $h = 32, p = 0.0$. Results are reported for models after they were trained for 15 epochs. Results use \texttt{pretrain-permute-finetune} framework with the order specified in each row. All other hyperparameters are set as described previously. The top part of the table refers to baselines. The middle part of the table refers to orders derived from pure optimization algorithms. The bottom part of the table refers to orders derived from heuristic algorithms we introduce using \texttt{Transposition Monte Carlo}. }
\label{tab:pool=fifteen-p=0.0-h=32}
\end{table*}

\begin{table*}[t]
\centering
\large{
\begin{tabular}{cccccc}
\toprule
& \texttt{CR} & \texttt{SUBJ} & \texttt{SST-2} & \texttt{SST-5} & \texttt{TREC} \\
\midrule
$r_I$ & 0.833 & 0.954 & 0.882 & 0.43 & 0.956 \\
$r_r$ & 0.841 & 0.945 & 0.863 & 0.459 & 0.95 \\
\midrule
$r_b$ & 0.852 & 0.95 & 0.871 & 0.439 & 0.956 \\
$r_c$ & 0.836 & 0.944 & 0.871 & 0.468 & 0.932 \\
$r_m$ & 0.831 & 0.95 & 0.862 & 0.463 & 0.958 \\
\midrule
$\tilde{r}_b$ & 0.847 & 0.948 & 0.864 & 0.459 & 0.95 \\
$\tilde{r}_c$ & 0.847 & 0.955 & 0.873 & 0.462 & 0.958 \\
$\tilde{r}_m$ & 0.833 & 0.949 & 0.865 & 0.461 & 0.96 \\
\bottomrule
\end{tabular}}
\caption{Full classification results for $h = 64, p = 0.02$. Results are reported for models after they were trained for 15 epochs. Results use \texttt{pretrain-permute-finetune} framework with the order specified in each row. All other hyperparameters are set as described previously. The top part of the table refers to baselines. The middle part of the table refers to orders derived from pure optimization algorithms. The bottom part of the table refers to orders derived from heuristic algorithms we introduce using \texttt{Transposition Monte Carlo}. }
\label{tab:pool=fifteen-p=0.02-h=64}
\end{table*}

\begin{table*}[t]
\centering
\large{
\begin{tabular}{cccccc}
\toprule
& \texttt{CR} & \texttt{SUBJ} & \texttt{SST-2} & \texttt{SST-5} & \texttt{TREC} \\
\midrule
$r_I$ & 0.823 & 0.951 & 0.874 & 0.458 & 0.956 \\
$r_r$ & 0.82 & 0.945 & 0.861 & 0.455 & 0.954 \\
\midrule
$r_b$ & 0.844 & 0.947 & 0.87 & 0.455 & 0.948 \\
$r_c$ & 0.847 & 0.947 & 0.871 & 0.465 & 0.964 \\
$r_m$ & 0.839 & 0.938 & 0.867 & 0.461 & 0.96 \\
\midrule
$\tilde{r}_b$ & 0.849 & 0.946 & 0.855 & 0.47 & 0.958 \\
$\tilde{r}_c$ & 0.836 & 0.957 & 0.855 & 0.47 & 0.948 \\
$\tilde{r}_m$ & 0.825 & 0.952 & 0.874 & 0.465 & 0.958 \\
\bottomrule
\end{tabular}}
\caption{Full classification results for $h = 64, p = 0.2$. Results are reported for models after they were trained for 15 epochs. Results use \texttt{pretrain-permute-finetune} framework with the order specified in each row. All other hyperparameters are set as described previously. The top part of the table refers to baselines. The middle part of the table refers to orders derived from pure optimization algorithms. The bottom part of the table refers to orders derived from heuristic algorithms we introduce using \texttt{Transposition Monte Carlo}. }
\label{tab:pool=fifteen-p=0.2-h=64}
\end{table*}

\begin{table*}[t]
\centering
\large{
\begin{tabular}{cccccc}
\toprule
& \texttt{CR} & \texttt{SUBJ} & \texttt{SST-2} & \texttt{SST-5} & \texttt{TREC} \\
\midrule
$r_I$ & 0.852 & 0.949 & 0.883 & 0.49 & 0.966 \\
$r_r$ & 0.833 & 0.949 & 0.876 & 0.466 & 0.946 \\
\midrule
$r_b$ & 0.844 & 0.948 & 0.873 & 0.477 & 0.944 \\
$r_c$ & 0.854 & 0.953 & 0.874 & 0.474 & 0.958 \\
$r_m$ & 0.839 & 0.95 & 0.87 & 0.452 & 0.916 \\
\midrule
$\tilde{r}_b$ & 0.847 & 0.947 & 0.878 & 0.422 & 0.95 \\
$\tilde{r}_c$ & 0.849 & 0.951 & 0.873 & 0.471 & 0.96 \\
$\tilde{r}_m$ & 0.831 & 0.952 & 0.875 & 0.451 & 0.958 \\
\bottomrule
\end{tabular}}
\caption{Full classification results for $h = 128, p = 0.02$. Results are reported for models after they were trained for 15 epochs. Results use \texttt{pretrain-permute-finetune} framework with the order specified in each row. All other hyperparameters are set as described previously. The top part of the table refers to baselines. The middle part of the table refers to orders derived from pure optimization algorithms. The bottom part of the table refers to orders derived from heuristic algorithms we introduce using \texttt{Transposition Monte Carlo}. }
\label{tab:pool=fifteen-p=0.02-h=128}
\end{table*}

\begin{table*}[t]
\centering
\large{
\begin{tabular}{cccccc}
\toprule
& \texttt{CR} & \texttt{SUBJ} & \texttt{SST-2} & \texttt{SST-5} & \texttt{TREC} \\
\midrule
$r_I$ & 0.844 & 0.944 & 0.868 & 0.467 & 0.96 \\
$r_r$ & 0.852 & 0.941 & 0.869 & 0.435 & 0.948 \\
\midrule
$r_b$ & 0.828 & 0.945 & 0.864 & 0.448 & 0.954 \\
$r_c$ & 0.857 & 0.917 & 0.87 & 0.462 & 0.96 \\
$r_m$ & 0.817 & 0.938 & 0.865 & 0.461 & 0.954 \\
\midrule
$\tilde{r}_b$ & 0.82 & 0.942 & 0.871 & 0.466 & 0.944 \\
$\tilde{r}_c$ & 0.849 & 0.953 & 0.845 & 0.482 & 0.956 \\
$\tilde{r}_m$ & 0.847 & 0.951 & 0.868 & 0.45 & 0.958 \\
\bottomrule
\end{tabular}}
\caption{Full classification results for $h = 128, p = 0.2$. Results are reported for models after they were trained for 15 epochs. Results use \texttt{pretrain-permute-finetune} framework with the order specified in each row. All other hyperparameters are set as described previously. The top part of the table refers to baselines. The middle part of the table refers to orders derived from pure optimization algorithms. The bottom part of the table refers to orders derived from heuristic algorithms we introduce using \texttt{Transposition Monte Carlo}. }
\label{tab:pool=fifteen-p=0.2-h=128}
\end{table*}

\begin{table*}[t]
\centering
\large{
\begin{tabular}{cccccc}
\toprule
& \texttt{CR} & \texttt{SUBJ} & \texttt{SST-2} & \texttt{SST-5} & \texttt{TREC} \\
\midrule
$r_I$ & 0.841 & 0.953 & 0.891 & 0.471 & 0.968 \\
$r_r$ & 0.844 & 0.945 & 0.873 & 0.462 & 0.948 \\
\midrule
$r_b$ & 0.825 & 0.955 & 0.841 & 0.455 & 0.96 \\
$r_c$ & 0.839 & 0.947 & 0.871 & 0.475 & 0.952 \\
$r_m$ & 0.839 & 0.953 & 0.868 & 0.483 & 0.956 \\
\midrule
$\tilde{r}_b$ & 0.849 & 0.95 & 0.861 & 0.466 & 0.956 \\
$\tilde{r}_c$ & 0.823 & 0.947 & 0.87 & 0.474 & 0.968 \\
$\tilde{r}_m$ & 0.839 & 0.947 & 0.87 & 0.471 & 0.956 \\
\bottomrule
\end{tabular}}
\caption{Full classification results for $h = 256, p = 0.2$. Results are reported for models after they were trained for 15 epochs. Results use \texttt{pretrain-permute-finetune} framework with the order specified in each row. All other hyperparameters are set as described previously. The top part of the table refers to baselines. The middle part of the table refers to orders derived from pure optimization algorithms. The bottom part of the table refers to orders derived from heuristic algorithms we introduce using \texttt{Transposition Monte Carlo}. }
\label{tab:pool=fifteen-p=0.2-h=256}
\end{table*}

\end{document}